\documentclass[10pt]{article}
\usepackage[preprint]{tmlr}
\usepackage{amsmath,amsfonts,bm}

\def\eqref#1{equation~\ref{#1}}
\def\1{\bm{1}}

\DeclareMathAlphabet{\mathsfit}{\encodingdefault}{\sfdefault}{m}{sl}
\SetMathAlphabet{\mathsfit}{bold}{\encodingdefault}{\sfdefault}{bx}{n}

\usepackage{hyperref}
\hypersetup{hypertexnames=false}
\usepackage{url}
\usepackage{amsmath}
\usepackage{amssymb}
\usepackage{amsthm}
\usepackage{mathrsfs}
\usepackage{graphicx}
\usepackage{booktabs}
\usepackage{enumitem}
\usepackage[ruled,vlined,linesnumbered]{algorithm2e}
\usepackage[table]{xcolor}
\usepackage{float}
\usepackage{capt-of}
\usepackage{pgfplots}
\pgfplotsset{compat=1.18}
\newtheorem{theorem}{Theorem}
\newtheorem{proposition}{Proposition}
\newtheorem{lemma}{Lemma}
\newtheorem{corollary}{Corollary}
\theoremstyle{definition}

\providecommand{\PP}{\mathbb{P}}
\providecommand{\RR}{\mathbb{R}}
\providecommand{\bc}{\boldsymbol{c}}

\providecommand{\bZ}{\boldsymbol{Z}}
\providecommand{\snorm}[1]{\left\lVert #1 \right\rVert_2}
\newcommand{\WR}{W_{\text{Regret}}}
\renewcommand{\theHfigure}{main.\arabic{figure}}

\newcommand{\realexperimentdatadir}{tikz_data2}
\newcommand{\abruptdatadir}{tikz_data3}
\newcommand{\experimentmainfigdir}{main}
\newcommand{\experimentsuppfigdir}{supplementary}
\newcommand{\plotexperimentpdf}[1]{%
\IfFileExists{#1}{%
    \includegraphics[width=\linewidth]{#1}%
}{%
    \fbox{\parbox{0.9\linewidth}{Missing figure file: \texttt{\detokenize{#1}}}}%
}%
}

\newcommand{\plotabruptcsv}[6]{%
\begin{tikzpicture}
\begin{axis}[
    width=\linewidth,
    height=0.62\linewidth,
    xlabel={#3},
    ylabel={#4},
    title={#5},
    grid=both,
    grid style={gray!20},
    tick label style={font=\tiny},
    label style={font=\tiny},
    title style={font=\tiny},
    legend style={font=\tiny,draw=none,fill=none},
    legend pos=north west,
    #6
]
\addplot[red!75!black,thick,no marks] table[x=#2,y=raw_no_restart,col sep=comma] {\abruptdatadir/#1};
\addlegendentry{Raw}
\addplot[blue!70!black,thick,no marks] table[x=#2,y=raw_restart,col sep=comma] {\abruptdatadir/#1};
\addlegendentry{Raw + Restart}
\end{axis}
\end{tikzpicture}%
}
\newcommand{\abruptshiftlines}{%
    extra x ticks={1000,2000,3000},
    extra x tick labels={,,},
    extra x tick style={
        grid=major,
        major grid style={black!45,densely dotted,thick},
        tick style={draw=none}
    }
}
\newcommand{\plotrealexperimentcsv}[5]{%
\begin{tikzpicture}
\begin{axis}[
    width=\linewidth,
    height=0.58\linewidth,
    xlabel={#3},
    ylabel={#4},
    title={#5},
    grid=both,
    grid style={gray!20},
    tick label style={font=\tiny},
    label style={font=\tiny},
    title style={font=\tiny},
    legend style={font=\tiny,draw=none,fill=none},
    legend pos=north east,
]
\addplot[blue,thick] table[x=#2,y=raw,col sep=comma] {\realexperimentdatadir/#1};
\addlegendentry{Raw}
\addplot[green!60!black,thick] table[x=#2,y=raw_restart,col sep=comma] {\realexperimentdatadir/#1};
\addlegendentry{Raw + Restart}
\end{axis}
\end{tikzpicture}%
}
\newcommand{\appendixequations}{%
  \setcounter{equation}{0}%
  \renewcommand{\theequation}{\thesection.\arabic{equation}}%
  \renewcommand{\theHequation}{\thesection.\arabic{equation}}%
}
\newcommand{\appendixfigures}{%
  \setcounter{figure}{0}%
  \renewcommand{\thefigure}{\thesection.\arabic{figure}}%
  \renewcommand{\theHfigure}{appendix.\thesection.\arabic{figure}}%
}
\newcommand{\appendixtheorems}{%
  \numberwithin{theorem}{section}%
  \numberwithin{proposition}{section}%
  \numberwithin{lemma}{section}%
  \numberwithin{corollary}{section}%
  \numberwithin{definition}{section}%
  \numberwithin{remark}{section}%
  \renewcommand{\theHtheorem}{appendix.\thesection.\arabic{theorem}}%
  \renewcommand{\theHproposition}{appendix.\thesection.\arabic{proposition}}%
  \renewcommand{\theHlemma}{appendix.\thesection.\arabic{lemma}}%
  \renewcommand{\theHcorollary}{appendix.\thesection.\arabic{corollary}}%
  \renewcommand{\theHdefinition}{appendix.\thesection.\arabic{definition}}%
  \renewcommand{\theHremark}{appendix.\thesection.\arabic{remark}}%
}

\title{Online Distributional Prediction via Latent Cluster Geometry Under Drift and Corruption}

\author{
\name{Navyansh Mahla} \email{navyanshmahla17@gmail.com}\\
\addr{Indian Institute of Technology, Bombay}
\AND
\name{Prateek Chanda} \email{prateekch@cse.iitb.ac.in}\\
\addr{Indian Institute of Technology, Bombay}
\AND
\name{Ganesh Ramakrishnan} \email{ganesh@cse.iitb.ac.in}\\
\addr{Indian Institute of Technology, Bombay}
}

\begin{document}

\maketitle

\begin{abstract}
Online learning in non-stationary streams is often formulated as tracking a
point estimate, but many applications require predicting the full
data-generating distribution. We study online distributional prediction under
drift and adversarial corruption. Our approach represents each candidate law
through a latent cluster geometry: a variable-size configuration of centers
that organizes probability mass and induces a predictive distribution. A Gibbs
quasi-posterior over these configurations yields an online predictor by
posterior averaging, and the resulting variable-dimensional posterior can be
sampled with reversible-jump MCMC. The method therefore avoids specifying a
parametric streaming law while retaining a structured latent space for
uncertainty, regularization, and comparison.

We evaluate performance by cumulative Wasserstein-1 regret against the
time-varying true law. The analysis separates two effects: corruption perturbs
the loss-based posterior update, whereas drift makes long-horizon posterior
memory stale. We address the latter with a restarted variant that temporally
localizes the same quasi-Bayesian update. The resulting high-probability bounds
decompose into a PAC-Bayesian complexity term, a corruption-sensitive posterior
perturbation term, and a dynamic optimal-transport term driven by
\(A_T^{\mathrm{OT}}=\sum_{t=2}^T W_2^2(p_{t-1}^*,p_t^*)\). Under bounded
support, stable latent geometry, predictive-map regularity, oracle
realizability, localized restart windows, sublinear transport action, and
sublinear corruption budget, the restarted predictor achieves sublinear
cumulative Wasserstein regret. These guarantees require no parametric model for
the stream, drift mechanism, or corruption process.
\end{abstract}

\section{Introduction}
\label{sec:intro}

The ubiquitous presence of high-dimensional streaming data from sensors,
networks, and online systems presents significant challenges for real-time
statistical estimation. Applications such as network monitoring, anomaly
detection, and online recommendation systems require algorithms that process
data sequentially while adapting to changing environments
\citep{aceto2013cloud, article, 10.5555/3384123, 8486321,
pmlr-v134-vanerven21a}. Real streams also exhibit features that violate many
standard assumptions: heavy tails, evolving distributions, and adversarial
corruptions \citep{6602289, 10.1145/2486001.2486031, 279982, 6235959,
10.1145/3437963.3441823, mirsky2018kitsuneensembleautoencodersonline,
10.1145/3292500.3330680, pmlr-v162-sankararaman22a}.

Most online estimation methods reduce this problem to estimating a point
quantity such as a mean, a parameter vector, or a finite-dimensional statistic.
This reduction is often too narrow. In settings such as risk control, anomaly
detection, and uncertainty quantification, the object of interest is the
predictive \emph{distribution} itself. The question is therefore not only what
the next observation is likely to be, but how probability mass is organized and
how that organization changes over time.

The central idea of this paper is to model this organization through a latent
clustering configuration space. Instead of placing a model directly on an
unrestricted sequence of distributions, we represent each possible explanation
of the current data law by a cluster-center configuration. This configuration
acts as a latent geometry for the stream: it summarizes where probability mass
is concentrated, induces a predictive law in observation space, and provides the
object over which the learner can regularize, compare, and update. The
quasi-Bayesian component is the mechanism used to maintain uncertainty over
this latent clustering space: a Gibbs quasi-posterior assigns weights to
candidate configurations according to their accumulated clustering loss.

This latent-space viewpoint also identifies the main obstruction under
non-stationarity. If the data law drifts, the relevant cluster geometry changes;
a long-horizon posterior over configurations can therefore retain stale
geometric information. Adversarial corruption perturbs the loss-based update,
while persistent drift makes a single global comparator inadequate for a moving
latent configuration. This motivates a restarted version of the same clustering
predictor: the algorithm keeps the same latent configuration space and prior,
but localizes memory in time so that the comparison is made against short-window
cluster geometries. We analyze this temporal localization through dynamic
optimal transport, which measures the cumulative movement of the underlying
data law.

\noindent\textbf{Contributions.}
\begin{itemize}[nosep,leftmargin=*]
    \item We introduce a latent clustering-configuration view of online distributional prediction, where cluster-center configurations serve as structured latent geometries for the evolving data law.
    \item We derive high-probability regret bounds that explicitly expose how distributional prediction is affected by drift, corruption frequency, corruption magnitude, and data dimension through the corresponding transport, perturbation, and PAC-Bayesian complexity terms.
    \item We identify high-probability sublinear cumulative Wasserstein-regret regimes for the restarted predictor, showing that temporal localization of latent clustering memory can overcome the stale-geometry obstruction under controlled drift and corruption.
\end{itemize}
The quasi-Bayesian posterior update, reversible-jump sampling scheme, and
dynamic optimal transport analysis are the technical mechanisms used to realize
and analyze this latent-space formulation. Algorithmically, the predictor does
not require a parametric likelihood for the streaming law, drift process, or
corruption mechanism. The theoretical guarantees are proved under the structural
latent-geometry assumptions stated in Section~\ref{sec:setup}.

\section{Related Work}
\label{sec:related}

\noindent\textbf{Online robust estimation.}
\citet{sankararaman2023online} propose clipped SGD for online mean estimation under heavy tails, corruption, and non-stationarity, achieving $\mathcal{O}(\sqrt{T} + \Lambda_T T^{1/4})$ regret. Their bounds depend on explicit drift $\Phi_T$ and require strong convexity. We study a different distributional-prediction criterion and analyze it under latent-geometry assumptions rather than strong convexity of a mean-estimation objective. Heavy-tailed streaming estimation has been studied by \citet{tsai2022heavy}, \citet{liu2023stochastic}, and \citet{10.5555/3495724.3496985}, all within the parameter-estimation paradigm.

\noindent\textbf{PAC-Bayesian and quasi-Bayesian methods.}
Our algorithmic foundation builds on the quasi-Bayesian online clustering of \citet{Li2016AQP}, which establishes PAC-Bayes bounds for Gibbs posteriors over cluster configurations. We extend their framework to distributional prediction with Wasserstein regret under adversarial corruption and non-stationarity. PAC-Bayesian bounds under hostile data are studied by \citet{pmlr-v134-vanerven21a}.

\noindent\textbf{Online convex optimization and dynamic regret.}
The connection to online mirror descent places our work in the broader OCO framework \citep{hazan2023introductiononlineconvexoptimization, Zinkevich2003OnlineCP}. Dynamic regret in non-stationary settings is studied by \citet{besbes2015non} and \citet{pmlr-v134-baby21a}. Mirror descent with heavy-tailed noise is analyzed by \citet{pmlr-v178-vural22a} and \citet{Nazin2019AlgorithmsOR}. Our contribution is to identify the KL-regularized OMD on the simplex (quasi-Bayesian) as a distributional analog of Euclidean OMD (clipped SGD), with complementary robustness properties.

\noindent\textbf{Optimal transport and distribution drift.}
Optimal transport provides a natural geometry for comparing probability distributions and has become a central tool in modern distributional analysis. In particular, the dynamic formulation of transport due to \citet{benamou2000computational} characterizes Wasserstein transport through an action minimization subject to the continuity equation, while the variational viewpoint of \citet{jordan1998variational} has motivated a broad literature on evolution in Wasserstein space. Our use of dynamic optimal transport is different in emphasis: rather than employing transport as an optimization primitive, we use cumulative transport action as a complexity measure for non-stationary distribution drift in online prediction. This allows us to express the cost of environment evolution directly at the level of probability laws, rather than only through parameter drift or comparator variation.

\section{Problem Setup and Preliminaries}
\label{sec:setup}

We study online prediction of a time-varying distributional stream under
corruption. The main modeling decision is to learn in a structured
\emph{configuration space} while evaluating performance in \emph{data space}.
This section formalizes that setup and records the immediate consequences used
later in the regret analysis.

\subsection{Sequential Distributional Prediction}

At each round $t\in[T]$, an uncorrupted sample
$
x_t \sim \PP_t
$
is drawn from the current data-generating law $\PP_t$ on $\RR^d$. The learner
does not observe $x_t$ directly. Instead it receives a corrupted observation
\[
\tilde x_t = x_t + C_t,
\]
where the corruption process is \emph{causal}:
\[
C_t = \mathcal C_t\big((x_s)_{s=1}^t,(C_s)_{s=1}^{t-1}\big).
\]
Thus the adversary may depend on the entire revealed past and the current clean
point, but not on future observations. We assume bounded corruption amplitude,
\[
\|\tilde x_t-x_t\|_2 \le \delta,
\]
and denote the cumulative corruption budget by
\[
\Lambda_T := \sum_{t=1}^T \mathbf{1}\{C_t\neq 0\}.
\]

Our global structural setting is that all true laws are supported on a common
compact set of diameter $D$ (equivalently, on a ball of radius $R=D/2$). This
bounded-support assumption is the main regularity condition used throughout the
paper; we do not assume a parametric family for $\PP_t$, a prescribed drift
model, or separate moment-growth conditions.

The estimator outputs a predictive law in data space. Given clean history
$x_{1:t-1}$, let $p(x_t\mid x_{1:t-1})$ denote the learner's predictive
distribution, and let $p_t^*:=\PP_t$ denote the true law at time $t$. We
measure performance by cumulative Wasserstein-1 regret:
\begin{equation}
    \WR^{\mathrm{clean}}(T)
    :=
    \sum_{t=1}^T
    W_1\!\left(p(x_t\mid x_{1:t-1}),\,p_t^*\right).
\label{eq:clean-regret}
\end{equation}
Under corrupted history $\tilde x_{1:t-1}$, the learner instead forms
$p(x_t\mid \tilde x_{1:t-1})$, and the total regret is
\begin{equation}
    \WR^{\mathrm{total}}(T)
    :=
    \sum_{t=1}^T
    W_1\!\left(p(x_t\mid \tilde x_{1:t-1}),\,p_t^*\right).
\label{eq:total-regret}
\end{equation}
This criterion is strictly stronger than parameter-level control: for any two
distributions $P,Q$ with finite first moments,
\[
\|\mathbb E_P[X]-\mathbb E_Q[X]\|_2 \le W_1(P,Q).
\]
So controlling distributional error in $W_1$ automatically controls mean error,
whereas the converse need not hold.

\subsection{Latent Clustering Representation}

The central modeling choice is to represent an evolving law through a lower
dimensional latent geometry. A cluster-center configuration is denoted by
$\bc=(c_1,\dots,c_k)$, and the clustering loss is
\begin{equation}
\ell(\bc,x):=\min_j \|c_j-x\|_2^2.
\label{eq:clustering-loss}
\end{equation}
The corresponding population risk under the true law at time $t$ is
\begin{equation}
R_t(\bc):=\mathbb E_{X\sim p_t^*}[\ell(\bc,X)].
\label{eq:population-risk}
\end{equation}
We denote by
\[
\bc_t^* \in \arg\min_{\bc} R_t(\bc)
\]
the time-$t$ oracle configuration. This oracle is an ideal benchmark: it is
defined using the true law and is therefore available to the analysis but not
to the learner.

The point of clustering here is not merely descriptive. Each configuration
serves as a latent explanation of how probability mass is organized, and each
such explanation induces a predictive law
\[
\bc \mapsto p(\cdot\mid \bc),
\]
so the method learns a distribution over latent geometries and pushes that
uncertainty forward into data space.

This viewpoint is easiest to understand by comparison with ordinary Bayesian
inference. In a fully specified Bayesian latent-variable model, one would place
a prior $\pi(d\bc)$ on configurations and update it by multiplying with a
likelihood term. Here we retain the prior over latent configurations, but do
not insist on a fully correct parametric likelihood for the streaming law.
Instead, following the quasi-Bayesian clustering framework of
\citet{Li2016AQP}, we update configurations through a loss-based Gibbs weight:
\begin{equation}
\hat\rho_{t+1}(d\bc)
\propto
\exp\!\big(-\lambda_t S_t(\bc)\big)\,\pi(d\bc),
\label{eq:quasi-posterior}
\end{equation}
where $S_t(\bc)$ is the cumulative clustering score after observing the first
$t$ samples and $\lambda_t$ is the learning-rate parameter. Thus
$\hat\rho_{t+1}$ is used to predict the next observation, not the observation
already included in $S_t$. The resulting object is a \emph{quasi-posterior}: it
retains the Bayesian architecture of prior, posterior, and predictive
distribution, but the data-fit term is generated by the clustering loss rather
than by a fully specified likelihood.

Prediction then takes place by averaging over latent configurations:
\[
\hat p_{t+1}(\cdot)
:=
\int p(\cdot\mid \bc)\,\hat\rho_{t+1}(d\bc).
\]
In particular, the prior, quasi-posterior, and KL regularization all live in
configuration space, while the regret is measured in observation space. This
separation is what makes the framework both statistically interpretable and
mathematically tractable: learning is performed in a structured latent space,
while accuracy is evaluated on the predictive law in data space.

\subsection{Configuration Space and Prior Structure}

The latent space itself is common to all algorithmic variants considered in the
paper, so we record it here once. For each admissible cluster count
$k\in\{1,\dots,p_{\max}\}$, let
\[
\mathcal C_k \subset (\RR^d)^k
\]
denote the set of $k$-center configurations allowed by the bounded-support
model. The full latent configuration space is the union
\[
\mathcal C
:=
\bigcup_{k=1}^{p_{\max}} \big(\{k\}\times \mathcal C_k\big).
\]
Thus a latent state consists of both a model order $k$ and a center geometry
$\bc\in\mathcal C_k$.

This representation is shared by the base and restarted variants. What changes
across those variants is the way the learner handles temporal memory, not the
space itself.

We place a prior $\pi$ on $\mathcal C$ that encodes uncertainty both about the
number of active clusters and about their locations. Following
\citet{Li2016AQP}, we write it as
\begin{equation}
    \pi(\bc)
    =
    \sum_{k=1}^{p_{\max}}
    q(k)\,\mathbf{1}_{\{\bc\in\mathcal C_k\}}\,\pi_k(\bc),
    \label{eq:configuration-prior}
\end{equation}
where $q$ is a prior on the cluster count and $\pi_k$ is a location prior on
$\mathcal C_k$. A standard choice is
\[
q(k)=\frac{\exp(-\eta k)}{\sum_{i=1}^{p_{\max}}\exp(-\eta i)},
\]
with each $\pi_k$ supported on the bounded configuration set.

All later PAC-Bayes comparisons, localized comparator constructions, and
restart arguments are carried out on this common configuration space with
respect to the prior architecture \eqref{eq:configuration-prior}.

\subsection{Drift, Lagged Risk, and Transport Complexity}

The dynamic analysis compares the learner at time $t$ not only to the current
law $p_t^*$, but also to the previous law $p_{t-1}^*$. For this reason we
introduce the lagged risk
\begin{equation}
\bar R_t(\bc):=\mathbb E_{X\sim p_{t-1}^*}[\ell(\bc,X)].
\label{eq:lagged-risk}
\end{equation}
This is simply the previous-step population risk, written inside the time-$t$
argument. Its role is to separate two effects:
\begin{enumerate}[nosep,leftmargin=*]
    \item how well the learner tracks yesterday's law, and
    \item how much the environment itself moves from $t-1$ to $t$.
\end{enumerate}
The second effect is quantified directly at the distribution level through the
cumulative transport action
\begin{equation}
A_T^{\mathrm{OT}}
:=
\sum_{t=2}^T W_2^2(p_{t-1}^*,p_t^*).
\label{eq:ot-action}
\end{equation}
This quantity plays the role of an environment complexity measure: small
$A_T^{\mathrm{OT}}$ corresponds to gradual temporal evolution, whereas large
$A_T^{\mathrm{OT}}$ corresponds to abrupt or persistent drift.
The analogy with the dynamic formulation of optimal transport is useful here:
$W_2^2(p_{t-1}^*,p_t^*)$ can be read as the least transport energy needed to
move probability mass from yesterday's law to today's law, under a
mass-conserving flow. We do not solve this continuous flow problem in the
algorithm. We only use the resulting discrete-time action
$A_T^{\mathrm{OT}}$ as a compact way to measure how much distributional drift
the environment spends over the horizon.

\subsection{Structural Assumptions}

Beyond bounded support, the proof uses three structural assumptions that encode
stability of the latent geometry and regularity of the predictive map. We state
them in the form actually used later in the proofs, but it is helpful to note
the more primitive conditions from which these working inequalities arise.

\paragraph{Stable local cluster geometry.}
Around the lagged oracle, the population risk should be locally well-shaped.
At a more primitive level, one may think of this as a local identifiability
condition: the oracle configuration is an isolated local minimizer of the
lagged risk landscape. A sufficient smooth formulation is
\[
\nabla \bar R_t(\bc_{t-1}^*) = 0,
\qquad
\nabla^2 \bar R_t(\bc_{t-1}^*) \succeq \mu I
\]
for some $\mu>0$ in a neighborhood of $\bc_{t-1}^*$. By a local Taylor
expansion, this implies quadratic growth. In the proofs we work directly with
the resulting inequality
\begin{equation}
\bar R_t(\bc)-\bar R_t(\bc_{t-1}^*)
\ge
\mu \|\bc-\bc_{t-1}^*\|_2^2
\label{eq:quadratic-growth}
\end{equation}
for some $\mu>0$. Semantically, this means that each time slice admits a
locally stable and non-ambiguous cluster geometry: perturbing the centers away
from the oracle must increase risk in every direction. This is the mechanism
that later converts excess risk into geometric error in center space.

\paragraph{Predictive-map regularity.}
The second ingredient is a bridge from latent geometry to observation-space
prediction. We assume the predictive law varies Lipschitzly with the
configuration:
\begin{equation}
W_1\!\left(p(\cdot\mid \bc),\,p(\cdot\mid \bc')\right)
\le
L_K \|\bc-\bc'\|_2.
\label{eq:predictive-regularity}
\end{equation}
This can itself be motivated from the translated-kernel picture below. For
example, if
\[
p(\cdot\mid \bc)=\sum_{j=1}^{p_{\max}} \omega_j K(\cdot,c_j)
\]
with common weights $(\omega_j)_j$ and $K(\cdot,c)=\mathrm{Law}(\bZ+c)$, then
coupling the same latent base variable $\bZ$ across the two configurations
gives
\[
W_1\!\left(p(\cdot\mid \bc),\,p(\cdot\mid \bc')\right)
\le
\sum_{j=1}^{p_{\max}} \omega_j \|c_j-c_j'\|_2
\le
L_K \|\bc-\bc'\|_2
\]
for a suitable norm-dependent constant $L_K$. In the analysis we keep this as
an abstract regularity assumption. Its meaning is that small perturbations of
the latent geometry produce proportionally small perturbations of the predicted
law.

To make this bridge target the true law, rather than only another model-induced
predictive law, we assume that the time-varying data law is realized by the
oracle latent path:
\begin{equation}
    p_t^*
    =
    p(\cdot\mid \bc_t^*),
    \qquad t=1,\dots,T.
    \label{eq:oracle-realizability}
\end{equation}
Here the same oracle configuration \(\bc_t^*\) is used in the clustering-risk
comparison and in the predictive map. This is a well-specified latent-geometry
assumption for the regret analysis; the learner does not observe
\(\bc_t^*\). Under model misspecification, the same argument would acquire an
additional approximation term of the form
\(\sum_t W_1(p(\cdot\mid \bc_t^*),p_t^*)\).

Equations~\eqref{eq:predictive-regularity} and
\eqref{eq:oracle-realizability} provide the bridge from latent-space accuracy
to distributional accuracy in data space: if the learner concentrates near the
oracle configuration, then the induced predictive distribution must also be
close to the true law.

\paragraph{Common translated-kernel representation.}
For the explicit dynamic-OT interpretation, we assume the true law can be
written as a mixture of translated copies of a common kernel:
\begin{equation}
p_t^*
=
\sum_{j=1}^{p_{\max}} w_{t,j} K(\cdot,c_{t,j}^*),
\qquad
K(\cdot,c)=\mathrm{Law}(\bZ+c).
\label{eq:translated-kernel-mixture}
\end{equation}
This assumption is not a parametric drift model. Rather, it gives a geometric
representation in which temporal evolution can be understood through center
motion, weight reallocation, and birth or death of active components.
Its main technical role is that it makes the OT action explicitly comparable to
center motion. Indeed, if $\Pi_t\in\Gamma(w_{t-1},w_t)$ couples the mixture
weights at times $t-1$ and $t$, then transporting the common base variable
$\bZ$ along the coupled components yields
\[
W_2^2(p_{t-1}^*,p_t^*)
\le
\sum_{i,j}\Pi_{t,ij}\|c_{t-1,i}^*-c_{t,j}^*\|_2^2,
\]
and therefore
\begin{equation}
W_2^2(p_{t-1}^*,p_t^*)
\le
\min_{\Pi_t\in\Gamma(w_{t-1},w_t)}
\sum_{i,j}\Pi_{t,ij}\|c_{t-1,i}^*-c_{t,j}^*\|_2^2.
\label{eq:ot-coupling-min}
\end{equation}
So the translated-kernel representation is what lets the dynamic OT quantity
inherit a direct geometric interpretation in terms of evolving cluster
configurations.

\subsection{Immediate Consequences Used Later}

The preceding setup immediately yields the facts repeatedly used in the proofs.

\paragraph{Loss stability in the sample variable.}
Under the bounded-support convention, for each fixed configuration $\bc$ the
map $x\mapsto \ell(\bc,x)$ is $4D$-Lipschitz:
\[
|\ell(\bc,x)-\ell(\bc,x')|
\le
4D\,\|x-x'\|_2.
\]
Consequently, Kantorovich--Rubinstein duality implies
\begin{equation}
\left|
\mathbb E_{p_{t-1}^*}[\ell(\bc,X)]
-
\mathbb E_{p_t^*}[\ell(\bc,X)]
\right|
\le
4D\,W_1(p_{t-1}^*,p_t^*).
\label{eq:risk-shift-bound}
\end{equation}
This is the basic way in which distribution drift enters the risk analysis:
changes in the law induce controlled changes in the risk landscape.

\paragraph{Geometric control from excess risk.}
The quadratic-growth assumption yields
\begin{equation}
\|\bc-\bc_{t-1}^*\|_2
\le
\frac{1}{\sqrt{\mu}}
\sqrt{\bar R_t(\bc)-\bar R_t(\bc_{t-1}^*)}.
\label{eq:excess-risk-geometric-control}
\end{equation}
Thus small lagged excess risk implies small geometric error in configuration
space. Combined with predictive-map regularity and oracle realizability, this
gives a route from population risk control to Wasserstein prediction error.

\paragraph{Pathwise control of drift.}
Since $W_1\le W_2$, the cumulative transport action controls the total
distributional motion:
\[
\sum_{t=2}^T W_1(p_{t-1}^*,p_t^*)
\le
\sum_{t=2}^T W_2(p_{t-1}^*,p_t^*)
=
\sum_{t=2}^T \sqrt{W_2^2(p_{t-1}^*,p_t^*)}.
\]
Applying Cauchy--Schwarz to the sequence
$\big(\sqrt{W_2^2(p_{t-1}^*,p_t^*)}\big)_{t=2}^T$ gives
\begin{equation}
\sum_{t=2}^T \sqrt{W_2^2(p_{t-1}^*,p_t^*)}
\le
\sqrt{(T-1)\sum_{t=2}^T W_2^2(p_{t-1}^*,p_t^*)}
=
\sqrt{(T-1)A_T^{\mathrm{OT}}}.
\label{eq:pathwise-drift-bound}
\end{equation}
This is the key reason dynamic OT enters the analysis: it converts the temporal
evolution of the environment into a quantitative pathwise penalty that can be
combined with the quasi-Bayesian risk bounds.

\paragraph{Why restart will appear later.}
The fixed-comparator quasi-Bayesian theorem is naturally aligned with a static
benchmark, while our target oracle $\bc_t^*$ moves with time. Restart-based
segmentation resolves this mismatch by replacing one whole-horizon comparator
with a sequence of local fixed comparators. The later restart analysis will
therefore use segment-start oracles and localized comparator neighborhoods in
configuration space rather than a single global point-mass benchmark.

\section{Quasi-Bayesian Clustering Algorithms}
\label{sec:algorithms}

\subsection{Common Quasi-Bayesian Framework}

We now turn from the common modeling setup to the learning rules built on top of
it. The base and restarted procedures share the same latent space, prior
architecture, and predictive target. In each case, the learner maintains a
probability distribution on the common configuration space $\mathcal C$ and uses
it to produce a predictive law in data space.

The shared quasi-Bayesian template is the following. Let $S_t(\bc)$ denote a
cumulative score assigned to configuration $\bc$ after observing the stream up
to time $t$. The learner forms a Gibbs-type update for the next prediction,
\begin{equation}
    \hat\rho_{t+1}(d\bc)
    =
    \frac{\exp\!\big(-\lambda_t S_t(\bc)\big)\,\pi(d\bc)}
    {\int_{\mathcal C}\exp\!\big(-\lambda_t S_t(\bc')\big)\,\pi(d\bc')},
    \label{eq:generic-gibbs-update}
\end{equation}
where $\pi$ is the prior from \eqref{eq:configuration-prior} and $\lambda_t>0$
is the learning-rate or inverse-temperature parameter. The associated
predictive distribution is then
\begin{equation}
    \hat p_{t+1}(\cdot)
    =
    \int_{\mathcal C} p(\cdot\mid \bc)\,\hat\rho_{t+1}(d\bc).
    \label{eq:generic-predictive-law}
\end{equation}

What distinguishes the algorithmic variants is not the latent representation
itself, but the way historical memory is handled:
\begin{enumerate}[nosep,leftmargin=*]
    \item the \emph{base} update uses the raw clustering loss accumulated
    over the full history;
    \item the \emph{restart} mechanism keeps the same latent model and prior
    but periodically resets the quasi-posterior so that stale history does not
    dominate under drift.
\end{enumerate}

Thus the restarted variant should be understood as a temporal localization of
the same quasi-Bayesian architecture: the latent configuration space, the prior,
the score, and the predictive map remain fixed, while the memory window changes
to address non-stationarity.

\subsection{Vanilla Update}

The baseline learner instantiates the common framework with a first-order
Gibbs score over cluster configurations. Let $z_t$ denote the observation
available at time $t$. In the algorithmic description below, $z_t$ may be read
as the incoming stream seen by the learner. In the clean analysis it will later
be instantiated as $x_t$, while in the corruption-aware version it will be
instantiated as $\tilde x_t$.

Initialize $S_0(\bc)=0$. For $t\ge 1$, define the cumulative score recursively
by
\begin{equation}
    S_t(\bc)
    :=
    S_{t-1}(\bc)
    +
    \ell(\bc,z_t).
    \label{eq:vanilla-cumulative-score}
\end{equation}
The corresponding Gibbs quasi-posterior at round $t+1$ is
\begin{equation}
    d\hat\rho_{t+1}(\bc)
    =
    \frac{\exp\!\big(-\lambda_t S_t(\bc)\big)}
    {\int_{\mathcal C}\exp\!\big(-\lambda_t S_t(\bc')\big)\,d\pi(\bc')}\,d\pi(\bc),
    \label{eq:vanilla-quasi-posterior}
\end{equation}
or equivalently $d\hat\rho_{t+1}(\bc)\propto \exp(-\lambda_t S_t(\bc))\,d\pi(\bc)$.
The predictive distribution used before observing the next sample is
\begin{equation}
    p(z_{t+1}\mid z_{1:t})
    =
    \int_{\mathcal C} p(z_{t+1}\mid \bc)\,\hat\rho_{t+1}(\bc)\,d\bc.
    \label{eq:vanilla-predictive-law}
\end{equation}

This is a first-order version of the quasi-Bayesian clustering update. The
second-order variance correction appearing in the original online clustering
construction of \citet{Li2016AQP} is not included in the score. In the present
paper, the score is kept aligned with the cumulative data-fit loss, while
temporal instability is handled through restart and the dynamic transport term.

\begin{algorithm}[H]
    \caption{Vanilla Quasi-Bayesian Clustering Update}
    \label{alg:vanilla-qb}
    \KwIn{$p_{\max}$, prior $\pi\in\mathscr P(\mathcal C)$, learning-rate schedule $\{\lambda_t\}_{t=0}^{T-1}$, initial score $S_0=0$}
    \KwOut{Predictive distributions $\{p(z_{t+1}\mid z_{1:t})\}_{t=0}^{T-1}$}
    \For{$t\leftarrow 0$ \KwTo $T-1$}{
        Form the quasi-posterior $d\hat\rho_{t+1}(\bc)\propto \exp(-\lambda_t S_t(\bc))\,d\pi(\bc)$\;
        Draw cluster configurations from $\hat\rho_{t+1}$ via RJMCMC\;
        Output the predictive law $p(z_{t+1}\mid z_{1:t}) = \int p(z_{t+1}\mid \bc)\,\hat\rho_{t+1}(\bc)\,d\bc$\;
        Observe the next sample $z_{t+1}$\;
        Update the cumulative score using \eqref{eq:vanilla-cumulative-score}\;
    }
\end{algorithm}

Because the latent space $\mathcal C$ includes configurations with different
numbers of clusters, the target law in \eqref{eq:vanilla-quasi-posterior} is a
variable-dimension distribution. Following \citet{Li2016AQP}, we approximate
it by reversible-jump MCMC. At each round, the chain targets
$\hat\rho_{t+1}$ and uses three types of moves: birth moves that increase the
cluster count, death moves that decrease it, and within-model updates that
modify the center locations while keeping $k$ fixed. The normalizing constant
in \eqref{eq:vanilla-quasi-posterior} need not be computed explicitly, since it
cancels in the Metropolis--Hastings acceptance ratio.

If $\bc^{(1)},\dots,\bc^{(M)}$ are RJMCMC draws from $\hat\rho_{t+1}$ after
burn-in, then the predictive integral in \eqref{eq:vanilla-predictive-law} is
approximated by the empirical average
\begin{equation}
    p(z_{t+1}\mid z_{1:t})
    \approx
    \frac{1}{M}\sum_{m=1}^M p(z_{t+1}\mid \bc^{(m)}).
    \label{eq:mc-predictive-approx}
\end{equation}
In other words, the algorithm first samples plausible latent cluster
configurations from the quasi-posterior and then averages the corresponding
configuration-wise predictive laws. This is the practical mechanism through
which uncertainty in configuration space is pushed forward into a predictive
distribution in data space.

\subsection{Restarted Update}

The second modification concerns temporal memory rather than the loss itself.
Fix restart times
\[
1=\tau_0<\tau_1<\cdots<\tau_m=T+1
\]
and let
\[
I_r:=\{\tau_{r-1},\dots,\tau_r-1\}
\]
denote segment $r$. On each segment, the quasi-posterior is reinitialized and
updated only with the observations observed since the most recent restart.

For the restarted version, the same causal convention is applied locally within
each segment. For $t\in I_r$, define the within-segment age
\[
    a_r(t):=t-\tau_{r-1},
\]
and define the segmentwise past score, with the empty sum equal to zero, by
\begin{equation}
    S_{t-1}^{r}(\bc)
    :=
    \sum_{s=\tau_{r-1}}^{t-1}
    \ell(\bc,z_s),
    \qquad t\in I_r.
    \label{eq:restart-score}
\end{equation}
Before observing $z_t$, the restarted quasi-posterior on segment $r$ is then
\begin{equation}
    d\hat\rho_t^{r}(\bc)
    =
    \frac{\exp\!\big(-\lambda_{a_r(t)} S_{t-1}^{r}(\bc)\big)}
    {\int_{\mathcal C}\exp\!\big(-\lambda_{a_r(t)} S_{t-1}^{r}(\bc')\big)\,d\pi(\bc')}\,d\pi(\bc),
    \qquad t\in I_r.
    \label{eq:restart-posterior}
\end{equation}
After $z_t$ is observed, the score is updated to
$S_t^r(\bc)=S_{t-1}^r(\bc)+\ell(\bc,z_t)$ and used only for later predictions
in the same segment.

The predictive law is obtained by integrating the same map $p(\cdot\mid \bc)$
against the restarted quasi-posterior on the current segment:
\[
    p(z_t\mid z_{\tau_{r-1}:t-1})
    =
    \int_{\mathcal C}p(z_t\mid \bc)\,\hat\rho_t^r(d\bc),
    \qquad t\in I_r.
\]
The role of restart is therefore purely temporal: it limits how much past data
can influence the current update, without changing the latent representation,
loss, prior, or configuration-wise predictive map.

\section{Theoretical Guarantees}
\label{sec:theory}

\subsection{Total Regret}

We now state the total-regret guarantees for the base and restarted predictors
introduced in Section~\ref{sec:algorithms}. To keep the theorem statements
readable, we first isolate the corruption-side and clean-side terms that recur
in both cases.
The terminology follows the decomposition in Appendix~\ref{app:proofs}. At
time \(t\), the learner using the corrupted stream predicts from
\(p(x_t\mid \tilde x_{1:t-1})\), while the ideal clean-history learner would
predict from \(p(x_t\mid x_{1:t-1})\). Thus the total Wasserstein regret is
controlled by two effects:
\[
    W_1\!\left(p(x_t\mid \tilde x_{1:t-1}),p_t^*\right)
    \le
    W_1\!\left(p(x_t\mid \tilde x_{1:t-1}),p(x_t\mid x_{1:t-1})\right)
    +
    W_1\!\left(p(x_t\mid x_{1:t-1}),p_t^*\right).
\]
The first term is the corruption-side contribution: it measures how much the
posterior predictive law changes because the history was corrupted. The second
term is the clean-side contribution: it measures how well the clean
quasi-Bayesian predictor tracks the moving true law \(p_t^*\) under drift.
The clean dynamic analysis starts from \(t=2\), so we retain the initial
prediction error
\(\Delta_1:=W_1(p(x_1\mid\emptyset),p_1^*)\le D\) explicitly below.

The bounds below are high-probability statements over the clean sample path
and the idealized learner randomness. Corruption is treated as fixed, possibly
adversarial, subject to the stated amplitude and budget constraints. The
confidence level is denoted by \(\delta_0\in(0,1)\).

For the no-restart update, define the global corruption contribution
\begin{equation}
    \mathcal E_T^{\mathrm{nr}}
    :=
    \frac{D}{2}\sum_{t=1}^T
    \left[
    \exp\!\left(
    2\lambda_{t-1}L_\ell\delta\,\Lambda_{t-1}
    \right)-1
    \right],
    \label{eq:no-restart-corruption-term}
\end{equation}
where \(\Lambda_{t-1}\) is the number of corrupted observations in the global
history up to time \(t-1\). For the restarted update, let
\[
    \Lambda_{r,t-1}
    :=
    \sum_{s=\tau_{r-1}}^{t-1}\mathbf 1\{x_s\ne \tilde x_s\},
    \qquad t\in I_r,
\]
be the number of corruptions seen inside the current restart segment before
time \(t\). The segment-local corruption contribution is
\begin{equation}
    \mathcal E_T^{\mathrm{rs}}
    :=
    \frac{D}{2}
    \sum_{r=1}^{m}
    \sum_{t\in I_r}
    \left[
    \exp\!\left(
    2\lambda_{a_r(t)}L_\ell\delta\,\Lambda_{r,t-1}
    \right)-1
    \right].
    \label{eq:restart-corruption-term}
\end{equation}
The no-restart and restarted corruption comparisons that produce
\(\mathcal E_T^{\mathrm{nr}}\) and \(\mathcal E_T^{\mathrm{rs}}\) are proved in
Propositions~\ref{prop:appendix-raw-corruption}
and~\ref{prop:appendix-restart-corruption}, respectively.

For the clean dynamic terms, let
$\{\lambda_t\}_{t=0}^{T-1}$ be a positive non-increasing learning-rate
schedule, and define
\[
    \bar\lambda_T := \sum_{t=0}^{T-1}\lambda_t.
\]
For the restarted scheme with equal segment length $H$, we reuse the same
segment-local schedule $\{\lambda_s\}_{s=0}^{H-1}$ on each segment, and write
\[
    \bar\lambda_H := \sum_{s=0}^{H-1}\lambda_s.
\]
For notational simplicity, the displayed restarted bounds write \(T/H\) for
the number of restart segments; if \(H\) does not divide \(T\), the same proof
uses \(m=\lceil T/H\rceil\) in place of \(T/H\).
The terms proportional to $\bar\lambda_T$ and $\bar\lambda_H$ are proof-side
bounded-loss concentration terms. They are not part of the posterior score
defined in Section~\ref{sec:algorithms}.

For a posterior sequence \(\rho=\{\rho_t\}_{t=2}^T\), define its current-risk
excess by
\[
    E_T^{\mathrm{cur}}(\rho)
    :=
    \sum_{t=2}^T
    \mathbb E_{\bc\sim\rho_t}
    \big[R_t(\bc)-R_t(\bc_t^*)\big].
\]
The following lemma is the latent-risk input used by the total-regret
theorems. Its proof is deferred to Appendices~\ref{app:no-restart-clean-proof}
and~\ref{app:restart-clean-proof}, using the high-probability PAC-Bayes
bound from Appendix~\ref{app:clean-pac-bayes}. More specifically, the
no-restart and restarted current-risk excess terms are controlled in
Lemmas~\ref{lem:appendix-no-restart-current-risk}
and~\ref{lem:appendix-restart-current-risk}.

\begin{lemma}[High-probability latent excess-risk brackets]
    \label{lem:main-latent-excess-brackets}
    With probability at least \(1-\delta_0\), the no-restart posterior sequence
    \(\hat\rho^{\mathrm{nr}}\) satisfies
    \[
        E_T^{\mathrm{cur}}(\hat\rho^{\mathrm{nr}})
        +
        8D\sqrt{(T-1)A_T^{\mathrm{OT}}}
        \le
        \mathcal B_{T,\delta_0}^{\mathrm{nr}}(\{\lambda_t\}),
    \]
    where
    \begin{equation}
        \begin{aligned}
        \mathcal B_{T,\delta_0}^{\mathrm{nr}}(\{\lambda_t\})
        :=
        &L_cT\varepsilon
        +
        \frac{2C+\log(2/\delta_0)}{\lambda_{T-1}}
        +
        \frac{D^4}{2}\bar\lambda_T
        \\
        &\qquad
        +
        \frac{D^4T\lambda_{T-1}}{8}
        +
        D^2\sqrt{\frac{T}{2}\log\frac{2}{\delta_0}}
        +
        8DT\sqrt{(T-1)A_T^{\mathrm{OT}}}
        \\
        &\qquad
        +
        8D\sqrt{(T-1)A_T^{\mathrm{OT}}}.
        \end{aligned}
        \label{eq:main-no-restart-latent-bracket}
    \end{equation}
    For deterministic equal-length restart windows, the restarted posterior
    sequence \(\hat\rho^{\mathrm{rs}}\) satisfies
    \[
        E_T^{\mathrm{cur}}(\hat\rho^{\mathrm{rs}})
        +
        8D\sqrt{TA_T^{\mathrm{OT}}}
        \le
        \mathcal B_{T,\delta_0}^{\mathrm{rs}}(\{\lambda_s\},H),
    \]
    where
    \begin{equation}
        \begin{aligned}
        \mathcal B_{T,\delta_0}^{\mathrm{rs}}(\{\lambda_s\},H)
        :=
        &L_cT\varepsilon
        +
        \frac{2(T/H)C+\log(2/\delta_0)}{\lambda_{H-1}}
        +
        \frac{D^4}{2}\frac{T}{H}\bar\lambda_H
        \\
        &\qquad
        +
        \frac{D^4T\lambda_{H-1}}{8}
        +
        D^2\sqrt{\frac{T}{2}\log\frac{2}{\delta_0}}
        +
        8DH\sqrt{TA_T^{\mathrm{OT}}}
        \\
        &\qquad
        +
        8D\sqrt{TA_T^{\mathrm{OT}}}.
        \end{aligned}
        \label{eq:main-restart-latent-bracket}
    \end{equation}
\end{lemma}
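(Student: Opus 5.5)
The plan is to prove both brackets through one chain: a PAC-Bayes regret bound against a lagged comparator, followed by a transport correction that converts lagged risks into current risks. Throughout, the clean stream \(z_t=x_t\) is used.

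\textbf{Step 1: online PAC-Bayes inequality.}
For the Gibbs sequence \eqref{eq:vanilla-quasi-posterior} with non-increasing \(\{\lambda_t\}\), apply the standard exponential-weights / online PAC-Bayes argument. Here the loss \(\ell\in[0,D^2]\), and the argument combines the Donsker--Varadhan variational formula with Hoeffding's lemma applied to the log-partition increments. For every comparator \(\rho\) this gives
\[
\sum_{t}\mathbb E_{\hat\rho_t}[\ell(\bc,x_t)]\le \sum_t \mathbb E_{\rho}[\ell(\bc,x_t)]+\frac{\mathrm{KL}(\rho\|\pi)}{\lambda_{T-1}}+\frac{D^4}{8}\sum_t\lambda_{t-1}.
\]

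\textbf{Step 2: from empirical to population risk.}
Convert both empirical sums into sums of population risks \(R_t\). The martingale differences \(\mathbb E_{\hat\rho_t}[\ell(\bc,x_t)]-\mathbb E_{\hat\rho_t}[R_t]\) are predictable with range \(D^2\), so Azuma--Hoeffding gives the term \(D^2\sqrt{(T/2)\log(2/\delta_0)}\). The comparator side is handled with a second exponential-moment / PAC-Bayes concentration step at confidence \(\delta_0/2\). That step produces the \(\log(2/\delta_0)/\lambda_{T-1}\) term together with the additional \(\tfrac{D^4}{2}\bar\lambda_T\) and \(\tfrac{D^4T\lambda_{T-1}}{8}\) variance terms, and a union bound ties the two events together.

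\textbf{Step 3: choice of comparator.}
Take \(\rho\) to be the prior restricted to an \(\varepsilon\)-ball around a fixed oracle configuration. For the no-restart case this is \(\bc_1^*\) (or the final oracle); in the restarted case it is the segment-start oracle \(\bc^*_{\tau_{r-1}}\). Two facts are needed:
\begin{itemize}
\item the prior mass condition, which yields \(\mathrm{KL}\le C\) per comparator (the factor 2 comes from the two uses), and
\item a Lipschitz bound on \(R_t\) in \(\bc\) on the bounded configuration set, with constant \(L_c\), which yields the \(L_cT\varepsilon\) term.
\end{itemize}

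\textbf{Step 4: drift correction (main obstacle).}
The comparator is fixed, while \(E_T^{\mathrm{cur}}\) subtracts the moving \(R_t(\bc_t^*)\). I would telescope:
\[
R_t(\bc_{\mathrm{fix}})-R_t(\bc_t^*)\le \sum_{s}\big|R_s(\cdot)-R_{s-1}(\cdot)\big|+\ldots
\]
Using \eqref{eq:risk-shift-bound}, each step costs at most \(4D\,W_1(p_{s-1}^*,p_s^*)\), applied twice (once at the fixed point, once at the minimizers via \(\min\)-stability). This gives at most \(8D\sum_s W_1\), and by \eqref{eq:pathwise-drift-bound} this is at most \(8D\sqrt{(T-1)A_T^{\mathrm{OT}}}\) per time step. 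Summing over \(t\) crudely yields \(8DT\sqrt{(T-1)A^{\mathrm{OT}}_T}\), which is deliberately loose but matches the bracket. The extra \(8D\sqrt{(T-1)A_T^{\mathrm{OT}}}\) on the left-hand side is absorbed simply by adding it to both sides; this explains the last summand of \eqref{eq:main-no-restart-latent-bracket}.

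\textbf{Step 5: restarted scheme.}
Run Steps 1--4 on each segment \(I_r\) with the local schedule \(\{\lambda_s\}_{s=0}^{H-1}\).
\begin{itemize}
\item Summing over the \(T/H\) segments gives the \((T/H)C/\lambda_{H-1}\) and \((T/H)\bar\lambda_H\) terms.
\item A single union bound is used over the global martingale, so the \(\log(2/\delta_0)\) term appears only once.
\item The drift term now telescopes only within a segment, so it costs at most \(8D\) times the within-segment path length for each of the at most \(H\) steps. Summing and applying Cauchy--Schwarz globally gives \(8DH\sqrt{TA_T^{\mathrm{OT}}}\).
\end{itemize}

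The delicate point is making Step 2's comparator-side concentration hold uniformly over all data-dependent restart comparators. I would handle this by noting that the restart windows are deterministic, and that the comparator depends only on the true laws and not on the samples. The comparators are therefore fixed in advance, so a single Hoeffding bound on the fixed sequence suffices.
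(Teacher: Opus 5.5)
Your proposal is correct and follows the paper's proof essentially step for step. The shared steps are the deterministic exponential-weights bound, learner-side Azuma, comparator-side concentration with $\eta=\lambda_{T-1}$ (hence $2C$), a prior-localized comparator around $\bc_1^*$ (resp.\ $\bc_{\tau_{r-1}}^*$ per segment), the $8D$-per-step oracle-mismatch telescoping with Cauchy--Schwarz, and adding $8D\sqrt{(T-1)A_T^{\mathrm{OT}}}$ to both sides.

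Only bookkeeping differs. The $\frac{D^4}{2}\bar\lambda_T$ term is a loosened form of your Step~1 term $\frac{D^4}{8}\sum_t\lambda_{t-1}$; the comparator step contributes only $(C+\log(2/\delta_0))/\lambda_{T-1}+D^4T\lambda_{T-1}/8$. If you use plain Azuma for the deterministic restart comparators instead of the paper's product-prior PAC-Bayes step, you still recover the stated bracket, since $D^2\sqrt{\tfrac{T}{2}\log\tfrac{2}{\delta_0}}\le \log(2/\delta_0)/\lambda_{H-1}+D^4T\lambda_{H-1}/8$ by AM--GM.
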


The complexity \(C\) in Lemma~\ref{lem:main-latent-excess-brackets} is the KL
cost of a localized comparator. If \(\bc^\star\in\mathcal C_k\), define the
localized comparator by truncating the prior \eqref{eq:configuration-prior} to a
radius-\(r\) ball around \(\bc^\star\):
\[
    \nu_{\bc^\star,r}(d\bc)
    =
    \frac{
    \mathbf 1\{\bc\in\mathcal C_k,\ \|\bc-\bc^\star\|_2\le r\}\,\pi(d\bc)
    }{
    \pi\{\bc\in\mathcal C_k:\|\bc-\bc^\star\|_2\le r\}
    }.
\]
Then
\[
    \mathrm{KL}(\nu_{\bc^\star,r}\|\pi)
    =
    \log
    \frac{1}{
    \pi\{\bc\in\mathcal C_k:\|\bc-\bc^\star\|_2\le r\}
    }
    \le C.
\]
The prior-mass calculation in Lemma~\ref{lem:appendix-localized-comparator}
gives, up to constants, where \(R\) is the bounded-support radius introduced in
Section~\ref{sec:setup},
\[
    C=C_{k,d}(r)
    \lesssim
    \log\frac{1}{q(k)}
    +
    kd\log\frac{2R}{r}.
\]
Thus dimension enters the regret bounds through this PAC-Bayesian complexity
factor.

The Wasserstein conversion in
Corollary~\ref{cor:appendix-learning-to-yesterday} turns these latent brackets
into clean dynamic regret terms. Define
\begin{equation}
    \mathcal G_{T,\delta_0}^{\mathrm{nr}}(\{\lambda_t\})
    :=
    \frac{L_K}{\sqrt{\mu}}
    \sqrt{
    (T-1)\mathcal B_{T,\delta_0}^{\mathrm{nr}}(\{\lambda_t\})
    }
    +
    \sqrt{(T-1)A_T^{\mathrm{OT}}},
    \label{eq:no-restart-clean-term}
\end{equation}
and
\begin{equation}
    \mathcal G_{T,\delta_0}^{\mathrm{rs}}(\{\lambda_s\},H)
    :=
    \frac{L_K}{\sqrt{\mu}}
    \sqrt{
    T\mathcal B_{T,\delta_0}^{\mathrm{rs}}(\{\lambda_s\},H)
    }
    +
    \sqrt{TA_T^{\mathrm{OT}}}.
    \label{eq:restart-clean-term}
\end{equation}

\begin{theorem}[Quasi-Bayesian predictor]
    \label{thm:raw-total}
    For the base update without restart, with probability at least
    \(1-\delta_0\),
    \begin{equation}
        \WR^{\mathrm{total}}(T)
        \le
        \Delta_1
        +
        \mathcal E_T^{\mathrm{nr}}
        +
        \mathcal G_{T,\delta_0}^{\mathrm{nr}}(\{\lambda_t\}).
        \label{eq:raw-total-bound}
    \end{equation}
\end{theorem}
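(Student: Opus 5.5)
The plan is to start from the triangle-inequality split displayed just before the definition of \(\mathcal E_T^{\mathrm{nr}}\). For each round \(t\) we have
\[
W_1\!\left(p(x_t\mid \tilde x_{1:t-1}),p_t^*\right)
\le
W_1\!\left(p(x_t\mid \tilde x_{1:t-1}),p(x_t\mid x_{1:t-1})\right)
+
W_1\!\left(p(x_t\mid x_{1:t-1}),p_t^*\right).
\]
At \(t=1\) both histories are empty, so the corruption-side term vanishes and the clean-side term is exactly \(\Delta_1\). Summing over \(t\) then reduces the theorem to two separate bounds. The first is that the summed corruption-side terms are at most \(\mathcal E_T^{\mathrm{nr}}\). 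The second is that the summed clean-side terms for \(t\ge 2\) are at most \(\mathcal G_{T,\delta_0}^{\mathrm{nr}}(\{\lambda_t\})\) on an event of probability at least \(1-\delta_0\).

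\textbf{Corruption side.} I would invoke Proposition~\ref{prop:appendix-raw-corruption}; the mechanism behind it is as follows. The scores \(S_{t-1}\) built from \(\tilde x\) and from \(x\) differ only at corrupted rounds, and at each such round by at most \(L_\ell\delta\), since \(\ell(\bc,\cdot)\) is \(L_\ell\)-Lipschitz on the bounded set. Hence the two Gibbs densities differ by a factor lying in \([e^{-\lambda_{t-1}L_\ell\delta\Lambda_{t-1}},e^{\lambda_{t-1}L_\ell\delta\Lambda_{t-1}}]\), and after normalization the Radon--Nikodym ratio lies in \([e^{-2\lambda_{t-1}L_\ell\delta\Lambda_{t-1}},e^{2\lambda_{t-1}L_\ell\delta\Lambda_{t-1}}]\). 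This gives a total-variation bound between the posteriors of the form \(\tfrac12(e^{2\lambda_{t-1}L_\ell\delta\Lambda_{t-1}}-1)\). The predictive laws are mixtures of \(p(\cdot\mid\bc)\), all supported in a set of diameter \(D\), so \(W_1\le D\cdot\mathrm{TV}\). Summing over \(t\) yields \(\mathcal E_T^{\mathrm{nr}}\). This part holds deterministically, for any fixed causal corruption.

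\textbf{Clean side.} For \(t\ge2\) I would use realizability \eqref{eq:oracle-realizability} and convexity of \(W_1\) under mixtures to get
\[
W_1(\hat p_t,p_t^*)\le \mathbb E_{\bc\sim\hat\rho_t}W_1\big(p(\cdot\mid\bc),p(\cdot\mid\bc_t^*)\big)\le L_K\,\mathbb E_{\hat\rho_t}\|\bc-\bc_t^*\|_2 .
\]
Rather than passing through \(\bc_{t-1}^*\) by hand, I would follow Corollary~\ref{cor:appendix-learning-to-yesterday}. It uses quadratic growth \eqref{eq:excess-risk-geometric-control} together with Jensen to bound the per-round error by \(\tfrac{L_K}{\sqrt\mu}\sqrt{\text{excess}_t}\) plus a drift term \(W_1(p^*_{t-1},p^*_t)\). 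The excess risk is shifted between the lagged and current risk by \eqref{eq:risk-shift-bound}, which costs \(4D\,W_1\) per side. That is why the bracket in Lemma~\ref{lem:main-latent-excess-brackets} contains \(E_T^{\mathrm{cur}}\) together with the additive \(8D\sqrt{(T-1)A_T^{\mathrm{OT}}}\) slack. Summing over \(t=2,\dots,T\) and applying Cauchy--Schwarz gives \(\sum_t\sqrt{e_t}\le\sqrt{(T-1)\sum_t e_t}\). The drift sum is controlled by \eqref{eq:pathwise-drift-bound}, which gives \(\sqrt{(T-1)A_T^{\mathrm{OT}}}\). Lemma~\ref{lem:main-latent-excess-brackets} then bounds \(\sum_t e_t\le\mathcal B^{\mathrm{nr}}_{T,\delta_0}\) with probability \(1-\delta_0\), which produces exactly \(\mathcal G^{\mathrm{nr}}_{T,\delta_0}\).

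\textbf{Main obstacle.} The hardest step is the bookkeeping in the clean conversion. Quadratic growth is centred at \(\bc_{t-1}^*\) for the lagged risk, while the target is \(\bc_t^*\) and \(E_T^{\mathrm{cur}}\) uses current risks. The lagged excess must therefore be bounded by the current excess plus \(8D\,W_1(p^*_{t-1},p^*_t)\) terms, and this shift has to be matched exactly to the form of the lemma's left-hand side. The distance \(\|\bc_{t-1}^*-\bc_t^*\|\) must also be controlled, or bypassed via realizability and the triangle inequality in \(W_1\) through \(p_{t-1}^*\). I would follow the corollary's route: compare against \(p^*_{t-1}=p(\cdot\mid\bc^*_{t-1})\) and pay \(W_1(p^*_{t-1},p^*_t)\) once. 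The concentration is all inside the lemma, and the corruption bound is deterministic, so a single \(1-\delta_0\) event suffices and no union bound is needed.
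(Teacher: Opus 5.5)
Your proposal is correct and follows the paper's proof essentially step for step. You use the same split (\ref{eq:appendix-total-split}) with $\Delta_1$ isolated, the deterministic score-perturbation, Gibbs-TV and diameter argument of Proposition~\ref{prop:appendix-raw-corruption} for $\mathcal E_T^{\mathrm{nr}}$, and the clean bound of Proposition~\ref{prop:appendix-no-restart-clean} via Corollary~\ref{cor:appendix-learning-to-yesterday}, the triangle inequality through $p_{t-1}^*$, and Lemma~\ref{lem:main-latent-excess-brackets}. As you note, only the clean side is probabilistic, so that single $1-\delta_0$ event suffices.
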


\begin{theorem}[Restarted quasi-Bayesian predictor]
    \label{thm:raw-restart-total}
    For the restarted update with deterministic equal-length restart windows,
    with probability at least \(1-\delta_0\),
    \begin{equation}
        \WR^{\mathrm{total}}(T)
        \le
        \Delta_1
        +
        \mathcal E_T^{\mathrm{rs}}
        +
        \mathcal G_{T,\delta_0}^{\mathrm{rs}}(\{\lambda_s\},H).
        \label{eq:raw-restart-total-bound}
    \end{equation}
\end{theorem}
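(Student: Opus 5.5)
The plan is to mirror the no-restart argument (Theorem~\ref{thm:raw-total}) with every ingredient localized to the restart segments. The first step is the triangle inequality stated before the theorems. At each $t\ge 2$,
\[
W_1\!\left(p(x_t\mid \tilde x_{\tau_{r-1}:t-1}),p_t^*\right)\le W_1\!\left(p(x_t\mid \tilde x_{\tau_{r-1}:t-1}),p(x_t\mid x_{\tau_{r-1}:t-1})\right)+W_1\!\left(p(x_t\mid x_{\tau_{r-1}:t-1}),p_t^*\right).
\]
The round $t=1$ is kept separately as $\Delta_1$. Summing over $t\in I_r$ and then over $r$ splits the total regret into a corruption-side sum and a clean-side sum.

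\textbf{Corruption side.} I will invoke Proposition~\ref{prop:appendix-restart-corruption} directly. The mechanism is as follows. Inside segment $r$, the corrupted and clean scores $S^r_{t-1}$ differ only at corrupted rounds, and each such round changes the score by at most $L_\ell\delta$. This gives $|S^{r,\mathrm{corr}}_{t-1}-S^{r,\mathrm{clean}}_{t-1}|\le L_\ell\delta\Lambda_{r,t-1}$ uniformly in $\bc$. The two Gibbs measures then have density ratio bounded by $\exp(2\lambda_{a_r(t)}L_\ell\delta\Lambda_{r,t-1})$, so their total variation is at most $e^{2\lambda L_\ell\delta\Lambda}-1$. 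Since all predictive laws live on a set of diameter $D$, $W_1\le \tfrac D2\,$TV-type bounds apply. Because the posterior is reset at each $\tau_{r-1}$, only in-segment corruptions appear. Summing yields exactly $\mathcal E_T^{\mathrm{rs}}$.

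\textbf{Clean side.} For $t\ge2$, convexity of $W_1$ under mixtures together with \eqref{eq:predictive-regularity} and \eqref{eq:oracle-realizability} gives
\[
W_1\!\left(\hat p_t,p_t^*\right)\le L_K\,\mathbb E_{\hat\rho_t^{r}}\|\bc-\bc_{t-1}^*\|_2+W_1(p_{t-1}^*,p_t^*).
\]
Applying \eqref{eq:excess-risk-geometric-control} and then Jensen's inequality bounds the first term by $\frac{L_K}{\sqrt\mu}\sqrt{\mathbb E_{\hat\rho^r_t}[\bar R_t(\bc)-\bar R_t(\bc^*_{t-1})]}$. This is the content of Corollary~\ref{cor:appendix-learning-to-yesterday}. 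Summing over $t$, applying Cauchy--Schwarz to the square roots gives $\sqrt{T\sum_t(\cdot)}$. The drift terms are bounded by $\sqrt{TA_T^{\mathrm{OT}}}$ via \eqref{eq:pathwise-drift-bound}.

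\textbf{Main obstacle: lagged versus current excess.} The hard part is relating the summed lagged excess to $E_T^{\mathrm{cur}}$. By \eqref{eq:risk-shift-bound}, $|\bar R_t(\bc)-R_t(\bc)|\le 4D\,W_1(p_{t-1}^*,p_t^*)$. Applying this to both $\bc$ and the oracle, and using $R_t(\bc_t^*)\le R_t(\bc^*_{t-1})$ together with $\bar R_t(\bc^*_{t-1})\le\bar R_t(\bc_t^*)$, gives a per-round shift of at most $8D\,W_1(p_{t-1}^*,p_t^*)$. Summing, the lagged excess is at most $E^{\mathrm{cur}}_T(\hat\rho^{\mathrm{rs}})+8D\sqrt{TA_T^{\mathrm{OT}}}$. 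Lemma~\ref{lem:main-latent-excess-brackets} bounds this quantity, with probability at least $1-\delta_0$, by $\mathcal B^{\mathrm{rs}}_{T,\delta_0}$. I would verify carefully that the bracket is stated in exactly this additive form, and that the equal-length segments make the deterministic restart times compatible with the single union bound behind the lemma. Combining with the clean-side bound gives $\mathcal G^{\mathrm{rs}}_{T,\delta_0}$, and adding $\Delta_1$ and $\mathcal E_T^{\mathrm{rs}}$ yields \eqref{eq:raw-restart-total-bound}. Since the corruption bound is deterministic, the only probabilistic event is the one from Lemma~\ref{lem:main-latent-excess-brackets}, so no further union bound is needed.
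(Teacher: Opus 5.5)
Your proposal is correct and follows the paper's proof. You use the split \eqref{eq:appendix-total-split}, Proposition~\ref{prop:appendix-restart-corruption} for $\mathcal E_T^{\mathrm{rs}}$, and the clean bound of Proposition~\ref{prop:appendix-restart-clean}, which you simply unroll into Corollary~\ref{cor:appendix-learning-to-yesterday} plus the bracket of Lemma~\ref{lem:main-latent-excess-brackets} (stated there in exactly the additive form you need), with the clean-side event as the only probabilistic one. One cosmetic point: with $d_{\mathrm{TV}}=\sup_A|\rho(A)-\widetilde\rho(A)|$ the paper pairs $d_{\mathrm{TV}}\le\frac{1}{2}(e^{2\lambda B}-1)$ with $W_1\le D\,d_{\mathrm{TV}}$, so your pairing of $e^{2\lambda B}-1$ with a factor $D/2$ is valid only if ``TV'' means the $L^1$ norm, although the resulting constant $\frac{D}{2}(e^{2\lambda B}-1)$ is the same.
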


The two theorems expose the role of temporal localization. The corruption term
\eqref{eq:no-restart-corruption-term} uses the full corrupted history, whereas
\eqref{eq:restart-corruption-term} uses only corruptions accumulated inside the
current restart segment. Restart therefore localizes both the clean drift
comparison and the corruption perturbation. On the clean side, it replaces the
whole-horizon memory penalty \eqref{eq:no-restart-clean-term} by the localized
dynamic bound \eqref{eq:restart-clean-term}.
All appearances of $A_T^{\mathrm{OT}}$ should be read as drift penalties: they
measure the amount of transport energy required to move the true distribution
path over time. When this action is small, the stream evolves gradually and the
restart term can remain favorable. When it is large, the true law is spending
substantial transport energy, and the regret bound correspondingly reflects a
harder tracking problem.

\subsection{Sublinear Regimes and Configuration Comparison}

To compare the base and restarted configurations at the level of long-horizon growth, we use
the polynomial ansatz
\begin{equation}
    \lambda_t \asymp t^{-\beta},
    \qquad
    \Lambda_T \asymp T^\gamma,
    \qquad
    \Lambda_{r,a}\lesssim a^\gamma,
    \qquad
    A_T^{\mathrm{OT}} \asymp T^a,
    \qquad
    H \asymp T^h.
    \label{eq:polynomial-ansatz}
\end{equation}
Here $\beta$ controls the learning-rate decay, $\gamma$ the growth of the
cumulative corruption budget, $a$ the growth of the cumulative transport
action, and $h$ the growth of the restart window. For the restarted
configuration, the corruption condition is interpreted segment-locally:
\(\Lambda_{r,a}\) counts corruptions inside segment \(r\) up to within-segment
age \(a\). This is the corruption quantity that appears in
\eqref{eq:restart-corruption-term}.

\begin{proposition}[Sublinear regime comparison]
    \label{prop:sublinear-regime-comparison}
    Under the bounds of Theorems~\ref{thm:raw-total} and
    \ref{thm:raw-restart-total}
    and the scaling ansatz \eqref{eq:polynomial-ansatz}, suppose the
    localized-comparator approximation radius is chosen so that
    $\varepsilon=\varepsilon_T=o(1)$ and the confidence level is fixed, or more
    generally satisfies $\log(1/\delta_0)=T^{o(1)}$. Then the following hold.
    \begin{enumerate}
        \item The no-restart theorem, with the whole-horizon comparator used
        here, does not certify a sublinear high-probability total-regret bound under
        the current proof technique.
        \item The restarted configuration admits a sublinear high-probability
        regret
        regime whenever one can choose exponents such that
        \begin{equation}
            a<1,
            \qquad
            \gamma<\beta<h<\frac{1-a}{2}.
            \label{eq:restart-sublinear-regime}
        \end{equation}
    \end{enumerate}
\end{proposition}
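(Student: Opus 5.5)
The plan is to substitute the scaling ansatz \eqref{eq:polynomial-ansatz} into the explicit bounds of Theorems~\ref{thm:raw-total} and~\ref{thm:raw-restart-total}, i.e.\ into $\mathcal E_T$, $\mathcal B_{T,\delta_0}$ and $\mathcal G_{T,\delta_0}$, and track the polynomial exponent of each term. Two bookkeeping conventions will be used throughout. First, the confidence term $\log(2/\delta_0)$ is $T^{o(1)}$ by hypothesis. Second, the complexity $C=C_{k,d}(r)\lesssim \log(1/q(k))+kd\log(2R/r)$ is $T^{o(1)}$ as long as the radius $r$ (hence $\varepsilon_T$) decays at most polynomially; I would fix such a choice explicitly, e.g.\ $r=T^{-\kappa}$ with $\varepsilon_T=o(1)$. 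Hence $T^{o(1)}$ factors never change whether an exponent is below $1$. Since $\Delta_1\le D$, sublinearity of $\WR^{\mathrm{total}}$ reduces to showing $\mathcal E_T=o(T)$ and $\mathcal G_{T,\delta_0}=o(T)$.

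For item~1, the obstruction is the whole-horizon drift term $8DT\sqrt{(T-1)A_T^{\mathrm{OT}}}\asymp T^{(3+a)/2}$ inside $\mathcal B^{\mathrm{nr}}$. It gives
\[
\mathcal G^{\mathrm{nr}}\ \gtrsim\ \frac{L_K}{\sqrt\mu}\sqrt{T\cdot T^{(3+a)/2}}=T^{(5+a)/4}\ \ge\ T^{5/4}
\]
for every $a\ge 0$ with nonvanishing drift, whatever $\beta$ and $\gamma$ are. The certified bound is therefore superlinear, and thus weaker than the trivial bound $\WR^{\mathrm{total}}\le DT$ coming from bounded support. I would add the remark that even with $a=0$ the term $L_cT\varepsilon$ combined with the $\sqrt{T\cdot\,}$ wrapper cannot rescue this, so the claim is really about the proof technique. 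This is the precise sense in which no sublinear guarantee is certified.

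For item~2, I would bound each term of $\mathcal B^{\mathrm{rs}}$ using $\lambda_{H-1}\asymp T^{-h\beta}$, and $\bar\lambda_H\asymp H^{1-\beta}$, which is valid because $\beta<h<1/2<1$:
\[
\frac{(T/H)C}{\lambda_{H-1}}\asymp T^{1-h(1-\beta)+o(1)},\qquad
\frac{T}{H}\bar\lambda_H\asymp T^{1-h\beta},\qquad
T\lambda_{H-1}\asymp T^{1-h\beta},
\]
\[
\sqrt{T\log(2/\delta_0)}=T^{1/2+o(1)},\qquad
H\sqrt{TA_T^{\mathrm{OT}}}\asymp T^{h+(1+a)/2},\qquad
L_cT\varepsilon_T=o(T).
\]
Every exponent is strictly below $1$ exactly when $0<\beta<1$, $h>0$, and $h<(1-a)/2$; the last condition is what forces the drift term to be sublinear. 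Hence $\mathcal B^{\mathrm{rs}}=o(T)$, so $\sqrt{T\mathcal B^{\mathrm{rs}}}=o(T)$. Separately, $\sqrt{TA_T^{\mathrm{OT}}}\asymp T^{(1+a)/2}=o(T)$ by $a<1$. Together these give $\mathcal G^{\mathrm{rs}}=o(T)$.

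The step I expect to be the main obstacle is the corruption term $\mathcal E_T^{\mathrm{rs}}$, because it contains an exponential and only the segment-local budget is controlled. At within-segment age $s$, the exponent is $2\lambda_sL_\ell\delta\Lambda_{r,s}\lesssim s^{\gamma-\beta}$, which is uniformly bounded and tends to $0$ because $\gamma<\beta$. So $e^x-1\le x e^{x_{\max}}\lesssim s^{\gamma-\beta}$, and summing within a segment gives $\lesssim H^{1+\gamma-\beta}$. Summing over the $T/H$ segments yields $\mathcal E_T^{\mathrm{rs}}\lesssim T\,H^{\gamma-\beta}=T^{1-h(\beta-\gamma)}=o(T)$. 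I would also contrast this with the no-restart $\mathcal E_T^{\mathrm{nr}}$, where the global count $\Lambda_{t-1}$ can make the exponent blow up, to explain the role of localization. Finally, I would note that the chain $\gamma<\beta<h<(1-a)/2$ implies all the inequalities used above, since it gives $0<\beta<1/2$. It is therefore a sufficient, possibly non-sharp, condition, which is all the proposition asserts.
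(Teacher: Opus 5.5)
Your proposal is correct and matches the paper's proof essentially step by step. It uses the same no-restart obstruction $\sqrt{T\cdot T\sqrt{TA_T^{\mathrm{OT}}}}\asymp T^{(5+a)/4}$, the same segment-local corruption estimate $\mathcal E_T^{\mathrm{rs}}\lesssim TH^{\gamma-\beta}$, and the same term-by-term exponent tracking of the restarted clean bracket, where the drift term is what forces $h<(1-a)/2$. The differences are minor refinements: you show $\mathcal B^{\mathrm{rs}}=o(T)$ and take the outer square root once rather than per term, and you account explicitly for $C=O(\log T)$ as $r\to0$, which the paper treats as constant. Your list of terms omits the pieces $\log(2/\delta_0)/\lambda_{H-1}\asymp T^{h\beta+o(1)}$ and $8D\sqrt{TA_T^{\mathrm{OT}}}$, but both are trivially $o(T)$ under your conditions.
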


\begin{proof}[Discussion of Proposition~\ref{prop:sublinear-regime-comparison}]
    The proof is given in Appendix~\ref{app:sublinear-proof}. The main point is
    that the corruption exponent scales as
    $\lambda_t\Lambda_t\asymp t^{\gamma-\beta}$, while the restarted drift term
    is sublinear when the restart window grows more slowly than
    $T^{(1-a)/2}$.
\end{proof}

The reason restart appears in the sublinear condition is the clean dynamic
comparison. Without restart, the proof compares the learner to one fixed
configuration over the whole horizon, so the moving oracle \(\bc_t^*\) is
measured against an increasingly stale reference geometry. Restart partitions
the horizon into segments \(I_r=\{\tau_{r-1},\dots,\tau_r-1\}\) and compares
only to a local fixed oracle inside each segment. This is what replaces the
whole-horizon clean term \eqref{eq:no-restart-clean-term} by the localized term
\eqref{eq:restart-clean-term}. The statement that no-restart is not certified
sublinear is therefore a limitation of the present upper bound, not a lower
bound against no-restart methods; the restarted predictor is the configuration
for which the current analysis certifies sublinear regret under mild corruption
and transport action.

\section{Experiments}
\label{sec:experiments}

\subsection{Experimental Goals}

The experiments are designed to test the central methodological claim of the
paper: restart improves tracking when the underlying distribution drifts over
time. We compare the base quasi-Bayesian predictor with its restarted version
under controlled changes in restart interval, corruption magnitude, corruption
frequency, and drift scale.

The main synthetic experiment uses a smooth drifting Gaussian-mixture stream
with persistent bounded corruption. This directly tests the theorem-level
comparison between the raw quasi-Bayesian predictor and the same predictor with
restart. We then include an abrupt-shift diagnostic designed to isolate stale
posterior memory, together with supplementary sensitivity sweeps over restart
interval, corruption magnitude, corruption frequency, and drift scale.

\subsection{Experimental Setup}

We use synthetic streaming data generated from temporally evolving Gaussian
mixtures. Unless stated otherwise, the stream has dimension $d=2$, two equally
weighted mixture components, covariance $0.2I$, initial center separation $5$,
and horizon $T=500$. The centers move at every step along fixed random
directions with step size $0.03$. The learner observes
\[
    z_t=x_t+\xi_t,
    \qquad
    \xi_t\sim \mathrm{Unif}([-\epsilon,\epsilon]^d),
\]
with $\epsilon=1$ in the main comparison. Thus the main synthetic experiment is
a controlled stress test with persistent bounded corruption and smooth drift.
The main figure uses the learning-rate schedule
$\lambda_t=0.1\sqrt{\log(t)/t}$ and restart interval $H=10$. Supplementary
figures repeat the same comparison for $\lambda_t=0.1/t$ and
$\lambda_t=0.1/t^2$.

For posterior inference, we use the same RJMCMC sampler in all synthetic
experiments. The prior over the number of clusters is
$q(k)\propto\exp(-0.1k)$ on $k\in\{1,\ldots,p_{\max}\}$, with $p_{\max}=4$ in
the smooth-drift experiments. Conditional on $k$, centers are uniform on the
Euclidean ball of radius $2R$ used by the sampler, with $R=5$ for the
smooth-drift synthetic experiments. The proposal first chooses
$k'\in\{k-1,k,k+1\}\cap\{1,\ldots,p_{\max}\}$ uniformly and then proposes
centers from a product Student-$t$ distribution with three degrees of freedom,
centered at the $k'$-means centers and scale
$\tau=(p_{\max}t)^{-1/2}$. We use $200$ RJMCMC iterations per time step, with
the first quarter discarded as burn-in, and use $10$ posterior configurations
to form the predictive law. The sample-based sliced-$W_1$ estimates use $500$
predictive samples, $300$ Metropolis--Hastings burn-in steps, and $100$ random
projection directions.

The synthetic true law is known, so the reported regret is computed by
comparing the learner's posterior predictive distribution with samples from the
current true mixture. In dimension one this is the empirical Wasserstein-1
distance after sorting. In higher dimensions we use the standard sliced-$W_1$
Monte Carlo proxy, averaging one-dimensional Wasserstein distances over random
projection directions.

Figure~\ref{fig:main-sqrt-average-log} gives the main synthetic comparison on
the smooth drifting corrupted stream. It compares the base quasi-Bayesian
predictor with the restarted predictor under
\(\lambda_t=0.1\sqrt{\log(t)/t}\). The goal is to isolate the effect of
localizing posterior memory under drift: restart limits how long corrupted and
stale observations can influence the current posterior, which is the empirical
counterpart of the localization mechanism in the theory. We additionally report
online real-data prediction experiments on daily SPY return streams in
Appendix~\ref{app:real-data-experiments}.

\begin{figure}[H]
    \centering
    \begin{minipage}{0.48\linewidth}
        \centering
        \plotexperimentpdf{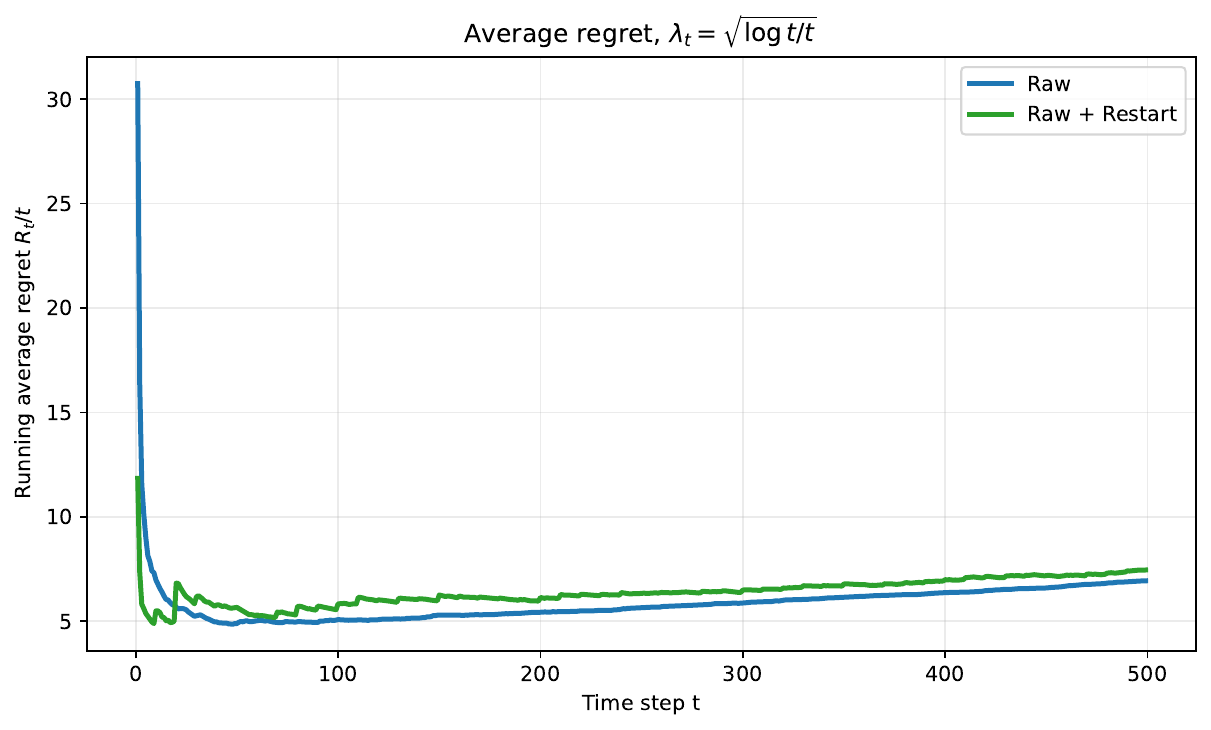}
    \end{minipage}
    \hfill
    \begin{minipage}{0.48\linewidth}
        \centering
        \plotexperimentpdf{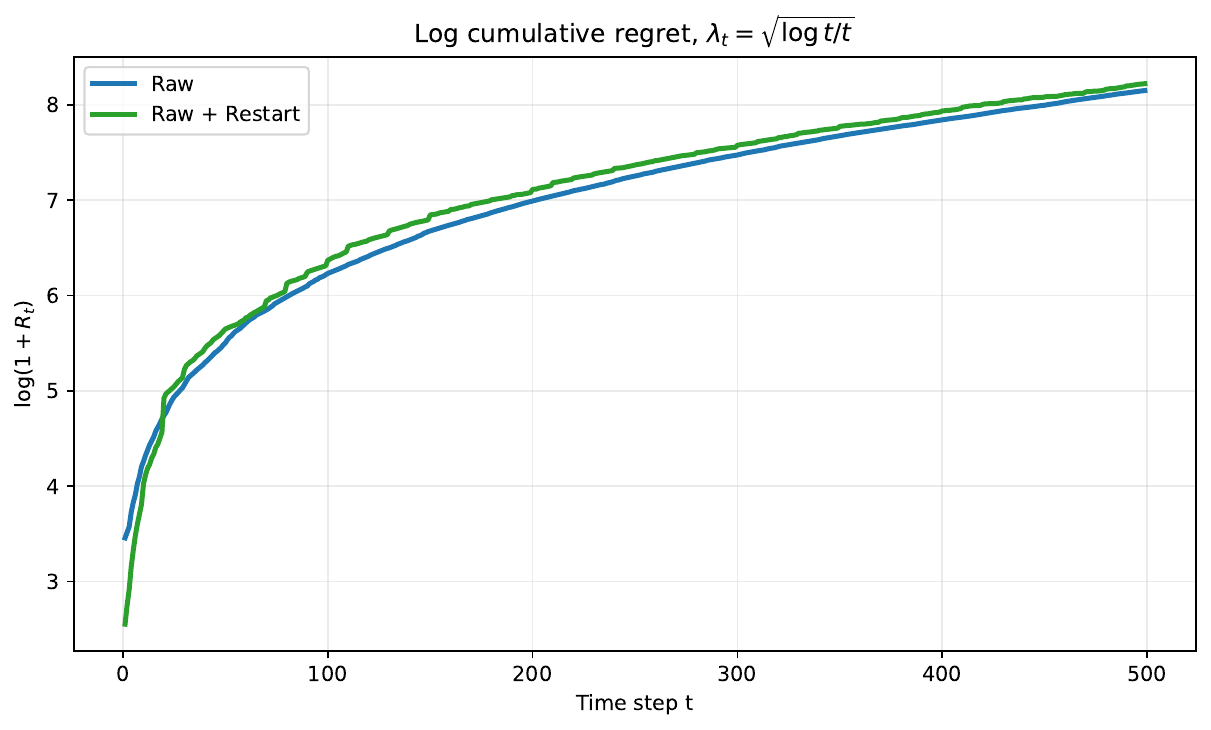}
    \end{minipage}
    \caption{Main comparison between the raw quasi-Bayesian predictor and the
    restarted predictor under the learning-rate schedule
    $\lambda_t=0.1\sqrt{\log(t)/t}$. Left: cumulative average regret $R_t/t$.
    Right: $\log(1+R_t)$.}
    \label{fig:main-sqrt-average-log}
\end{figure}

The smooth drifting streams above are useful for measuring tracking behavior,
but they can be benign for the no-restart predictor because older observations
may remain approximately informative. We therefore also include a stress test
designed to isolate stale posterior memory. The stream is piecewise stationary:
for block $I_b$,
\[
    p_t^*
    =
    \frac{1}{k}\sum_{j=1}^k
    \mathcal N(c_{j,b},\sigma^2 I),
    \qquad t\in I_b.
\]
At a block boundary, all centers undergo an abrupt translation,
\[
    c_{j,b+1}=c_{j,b}+\Delta u_b,
    \qquad \|u_b\|_2=1.
\]
If the number of regime changes satisfies $M_T\asymp T^a$, then this construction
has transport-action scale
\[
    A_T^{\mathrm{OT}}
    \asymp
    \sum_{b=1}^{M_T}\Delta^2
    \asymp T^a.
\]
Thus the experiment keeps the drift budget controlled while making old regimes
actively misleading for a global-memory posterior. In the plotted diagnostic
we use $p=d=2$, $p_{\max}=2$, covariance $0.05I$, initial separation $2.5$,
jump size $\Delta=7$, no corruption, $\lambda_t=2t^{-0.25}$, and restart
interval $H=\lfloor T^{0.40}\rceil$. Figure~\ref{fig:main-abrupt-shift} shows
the $T=4000$ trajectory. The no-restart predictor accumulates persistent error
after regime changes, whereas the restarted predictor remains localized to the
current regime. The corresponding multi-horizon slope diagnostics are shown in
Appendix~\ref{app:abrupt-shift-stale-memory}.

\begin{figure}[H]
    \centering
    \begin{minipage}{0.48\linewidth}
        \centering
        \plotabruptcsv{abrupt_shift_longest_horizon_cumulative.csv}{t}
        {$t$}{$R_t$}{Cumulative regret}{\abruptshiftlines}
    \end{minipage}
    \hfill
    \begin{minipage}{0.48\linewidth}
        \centering
        \plotabruptcsv{abrupt_shift_longest_horizon_average.csv}{t}
        {$t$}{$R_t/t$}{Average regret}{\abruptshiftlines}
    \end{minipage}

    \vspace{0.6em}

    \begin{minipage}{0.48\linewidth}
        \centering
        \plotabruptcsv{abrupt_shift_longest_horizon_per_step.csv}{t}
        {$t$}{Per-step $W_1$}{Per-step error}{\abruptshiftlines}
    \end{minipage}
    \hfill
    \begin{minipage}{0.48\linewidth}
        \centering
        \plotabruptcsv{abrupt_shift_longest_horizon_log1p.csv}{t}
        {$t$}{$\log(1+R_t)$}{Log cumulative regret}{\abruptshiftlines}
    \end{minipage}
    \caption{Abrupt-shift stale-memory experiment at $T=4000$. Top left:
    cumulative Wasserstein regret $R_t$. Top right: average cumulative regret
    $R_t/t$. Bottom left: per-step Wasserstein error. Bottom right:
    $\log(1+R_t)$. Vertical dashed lines indicate regime changes.}
    \label{fig:main-abrupt-shift}
\end{figure}

\section{Discussion and Conclusion}
\label{sec:discussion}

The main conceptual point of the paper is that the online clustering problem is
best understood through two spaces at once. Learning takes place in the latent
configuration space of cluster-center arrangements, where the prior,
quasi-posterior, and PAC-Bayesian comparison are defined. Prediction, however,
is evaluated in data space through the induced law
$p(\cdot\mid\bc)$. This separation is what makes the method both interpretable
and analyzable: the latent space carries the clustering geometry, while the
data space carries the predictive object that ultimately matters.

From this viewpoint, corruption and drift create two different obstacles.
Corruption perturbs the loss-based posterior reweighting, while drift breaks the
validity of a single global comparator over the whole horizon. The central
mechanism developed here addresses the second obstruction: restart changes the
temporal memory structure and localizes the comparison in time, which is what
makes the dynamic optimal transport analysis effective. Under the current
bounds, the no-restart predictor is not certified to recover the favorable
long-horizon regime because the clean drift term remains global. Restart is
what localizes that drift term in the guarantee.

The experiments support this interpretation at a qualitative level. They show
that temporal localization improves tracking when the stream continues to move.
At the same time, the experiments also suggest that the asymptotic regime
distinctions in the theory are subtle in finite samples. This is not a
contradiction, but rather an indication that the current bounds capture the
correct structural effect more clearly than they capture sharp finite-sample
constants.

Several directions remain open. On the theoretical side, the corruption
comparison could be sharpened further, and the restart mechanism could be made
adaptive rather than fixed in advance. On the algorithmic side, the most natural
next step is to replace the expensive online posterior tracking mechanism by an
amortized or state-space-based inference module while preserving the latent
clustering interpretation. More broadly, the framework suggests that
quasi-Bayesian latent-geometry methods can be studied as sequential predictive
models rather than only as static clustering procedures. This is the perspective
from which the present paper should be read, and it is the main direction in
which the methodology can be extended.

\bibliographystyle{tmlr}
\bibliography{tmlr}

\clearpage
\appendix
\appendixequations
\appendixfigures
\appendixtheorems

\begin{center}
    {\Large\bfseries Appendix}
\end{center}
\vspace{0.5em}

\section{Proofs of the Main Results}
\label{app:proofs}

\subsection{Proof Organization and Auxiliary Notation}

This appendix is organized in the same logical order as the main paper. We
first prove the corruption-side comparison lemmas for the base update. We then
use later subsections to derive the lagged-risk and dynamic transport bounds for
the clean term, the no-restart and restarted clean dynamic bounds, and finally
the total-regret theorems by combining the corruption and clean components.

Throughout the appendix, it is convenient to separate the total regret into the
same two pieces used conceptually in Section~\ref{sec:theory}:
\begin{equation}
    \WR^{\mathrm{total}}(T)
    \le
    \Delta_1
    +
    \sum_{t=2}^T
    W_1\!\left(p(x_t\mid \tilde x_{1:t-1}),p(x_t\mid x_{1:t-1})\right)
    +
    \sum_{t=2}^T
    W_1\!\left(p(x_t\mid x_{1:t-1}),p_t^*\right).
    \label{eq:appendix-total-split}
\end{equation}
Here \(\Delta_1:=W_1(p(x_1\mid\emptyset),p_1^*)\le D\). The first sum is the
corruption comparison term, while the second is the clean dynamic term.
Appendix~A.2 develops the first term. Later subsections develop the second term.

For the corruption comparisons, let
\[
    S_{t-1}(c),
    \qquad
    \widetilde S_{t-1}(c)
\]
denote the cumulative scores built from the clean and corrupted histories,
respectively, under the update under consideration. The associated
quasi-posteriors are written as
\begin{equation}
    \rho_t(dc)
    \propto
    \exp\!\big(-\lambda_{t-1}S_{t-1}(c)\big)\,\pi(dc),
    \qquad
    \widetilde\rho_t(dc)
    \propto
    \exp\!\big(-\lambda_{t-1}\widetilde S_{t-1}(c)\big)\,\pi(dc).
    \label{eq:appendix-clean-corrupted-posteriors}
\end{equation}
The proofs below are self-contained within this appendix. In particular,
whenever we refer to an ``earlier'' corruption comparison lemma or proposition,
we mean an earlier statement inside Appendix~A rather than an external result.
For the restarted update, the same display is read segmentwise: if
$t\in I_r$, then $S_{t-1}$ is replaced by the segment-local score
$S_{t-1}^r$ and the global learning rate $\lambda_{t-1}$ is replaced by the
within-segment learning rate $\lambda_{a_r(t)}$.

\subsection{Corruption Comparison Lemmas}

We begin with the three local ingredients needed for the corruption-side
comparative bounds: a cumulative-score perturbation bound, a Gibbs-measure
comparison lemma, and a total-variation-to-predictive-Wasserstein conversion.

\begin{lemma}[Raw loss perturbation under bounded corruption]
    \label{lem:raw-loss-perturbation}
    Let
    \[
        S_{t-1}(c):=\sum_{s=1}^{t-1}\ell(c,x_s),
        \qquad
        \widetilde S_{t-1}(c):=\sum_{s=1}^{t-1}\ell(c,\tilde x_s).
    \]
    If $\ell(c,\cdot)$ is $L_\ell$-Lipschitz and the corruption amplitude is
    bounded by $\delta$, then
    \begin{equation}
        \big|S_{t-1}(c)-\widetilde S_{t-1}(c)\big|
        \le
        L_\ell\delta\,\Lambda_{t-1}
        \label{eq:appendix-raw-score-perturbation}
    \end{equation}
    uniformly over $c$.
\end{lemma}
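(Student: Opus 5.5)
The argument is a direct termwise comparison of the two cumulative scores, followed by the triangle inequality. Fix a configuration \(c\). Subtracting the two definitions gives
\[
    S_{t-1}(c)-\widetilde S_{t-1}(c)
    =
    \sum_{s=1}^{t-1}\big[\ell(c,x_s)-\ell(c,\tilde x_s)\big],
\]
and therefore
\[
    \big|S_{t-1}(c)-\widetilde S_{t-1}(c)\big|
    \le
    \sum_{s=1}^{t-1}\big|\ell(c,x_s)-\ell(c,\tilde x_s)\big|.
\]

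Next I split the sum according to whether round \(s\) was corrupted.

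\begin{itemize}[nosep,leftmargin=*]
    \item If \(C_s=0\), then \(\tilde x_s=x_s\), so the summand vanishes identically.
    \item If \(C_s\neq 0\), the \(L_\ell\)-Lipschitz property of \(\ell(c,\cdot)\) and the amplitude bound \(\|\tilde x_s-x_s\|_2\le\delta\) give \(|\ell(c,x_s)-\ell(c,\tilde x_s)|\le L_\ell\delta\).
\end{itemize}

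Combining the two cases,
\[
    \big|S_{t-1}(c)-\widetilde S_{t-1}(c)\big|
    \le
    L_\ell\delta\sum_{s=1}^{t-1}\mathbf 1\{C_s\neq 0\}
    =
    L_\ell\delta\,\Lambda_{t-1}.
\]
The right-hand side does not depend on \(c\), so the bound holds uniformly over \(c\).

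The only point needing care is the Lipschitz hypothesis when \(\tilde x_s\) lies outside the compact support of diameter \(D\). The constant \(4D\) from the setup was derived for points inside that set. I therefore read \(L_\ell\) as a Lipschitz constant valid on the enlarged set containing both clean and corrupted points. Concretely, one can take \(L_\ell=2(D+\delta)+\dots\), obtained by bounding the gradient of \(\min_j\|c_j-x\|_2^2\) on the \(\delta\)-neighbourhood of the support. Alternatively, one can simply assume the lemma's stated hypothesis holds on that neighbourhood.

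With this reading no further obstacle remains. The argument is purely deterministic, so the causal adversary's dependence on the past plays no role.
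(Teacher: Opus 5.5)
Your proof is correct and follows the paper's argument exactly: the summands vanish on uncorrupted rounds, are bounded by $L_\ell\delta$ on corrupted rounds by the Lipschitz hypothesis and the amplitude bound, and summing over the $\Lambda_{t-1}$ corrupted rounds gives the bound uniformly in $c$. Your remark that $L_\ell$ must be a Lipschitz constant valid on the $\delta$-neighbourhood of the support, not only the $4D$ constant derived for points inside it, is a sensible clarification that the paper leaves implicit by taking Lipschitzness as a hypothesis.
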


\begin{proof}
    On any uncorrupted round, $\tilde x_s=x_s$ and the two losses coincide. On
    a corrupted round,
    \[
        \big|\ell(c,x_s)-\ell(c,\tilde x_s)\big|
        \le
        L_\ell \snorm{x_s-\tilde x_s}
        \le
        L_\ell\delta.
    \]
    Summing over the $\Lambda_{t-1}$ corrupted rounds yields
    \eqref{eq:appendix-raw-score-perturbation}.
\end{proof}

\begin{lemma}[Gibbs comparison from energy perturbation]
    \label{lem:gibbs-energy-perturbation}
    Suppose
    \[
        \sup_c
        \big|S_{t-1}(c)-\widetilde S_{t-1}(c)\big|
        \le
        B_{t-1}.
    \]
    Then the quasi-posteriors in
    \eqref{eq:appendix-clean-corrupted-posteriors} satisfy
    \begin{equation}
        d_{\mathrm{TV}}(\rho_t,\widetilde\rho_t)
        \le
        \frac{1}{2}\left[
        \exp\!\big(2\lambda_{t-1}B_{t-1}\big)-1
        \right].
        \label{eq:appendix-gibbs-tv-bound}
    \end{equation}
\end{lemma}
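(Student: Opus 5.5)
The plan is to compare the two Gibbs measures through their common reference measure $\pi$. Write both densities with respect to $\pi$:
\[
\frac{d\rho_t}{d\pi}(c)=\frac{e^{-\lambda_{t-1}S_{t-1}(c)}}{Z},\qquad
\frac{d\widetilde\rho_t}{d\pi}(c)=\frac{e^{-\lambda_{t-1}\widetilde S_{t-1}(c)}}{\widetilde Z},
\]
with normalizers $Z,\widetilde Z$. Set $\lambda:=\lambda_{t-1}$ and $B:=B_{t-1}$. The first step is a uniform bound on the log-likelihood ratio. Since $|S_{t-1}-\widetilde S_{t-1}|\le B$ pointwise, we have $e^{-\lambda B}\le e^{-\lambda \widetilde S}/e^{-\lambda S}\le e^{\lambda B}$ for every $c$. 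Integrating against $\pi$ gives $e^{-\lambda B}\le \widetilde Z/Z\le e^{\lambda B}$.

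Combining the two bounds shows that the density ratio $f:=d\widetilde\rho_t/d\rho_t$ satisfies $e^{-2\lambda B}\le f(c)\le e^{2\lambda B}$ for every $c$.

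The second step converts this ratio bound into total variation. Write
\[
d_{\mathrm{TV}}(\rho_t,\widetilde\rho_t)=\tfrac12\int|f-1|\,d\rho_t .
\]
Pointwise, $|f-1|\le \max(e^{2\lambda B}-1,\,1-e^{-2\lambda B})$. The first term dominates, since $1-e^{-x}\le e^{x}-1$ for $x\ge0$. This gives $|f-1|\le e^{2\lambda B}-1$. Because $\rho_t$ is a probability measure, the claimed bound $\tfrac12[\exp(2\lambda_{t-1}B_{t-1})-1]$ follows.

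I expect no step to be a genuine obstacle; the argument is elementary. The points needing care are the following.
\begin{itemize}
\item The normalizers must be finite and positive. This holds because $\ell\ge0$ makes the integrands bounded by $1$, and they are strictly positive.
\item Both measures are mutually absolutely continuous with respect to $\pi$, so $f$ is well defined.
\end{itemize}
The factor $2$ in the exponent arises from bounding the pointwise ratio and the normalizer ratio separately. A sharper constant is possible, for example $\tanh(\lambda B)$, but it is not needed for the stated bound. The same argument applies verbatim to the restarted case, with $S_{t-1}^r$ and $\lambda_{a_r(t)}$ in place of $S_{t-1}$ and $\lambda_{t-1}$.
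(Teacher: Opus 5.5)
Your proof is correct and follows the same route as the paper. You bound the pointwise exponent and the normalizer ratio separately, which gives a density ratio in $[e^{-2\lambda_{t-1}B_{t-1}},e^{2\lambda_{t-1}B_{t-1}}]$, and then you make the paper's ``standard conversion'' to total variation explicit via $d_{\mathrm{TV}}(\rho_t,\widetilde\rho_t)=\tfrac12\int|f-1|\,d\rho_t$.
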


\begin{proof}
    The uniform score bound implies
    \[
        e^{-\lambda_{t-1}B_{t-1}}
        \le
        \exp\!\big(-\lambda_{t-1}(S_{t-1}(c)-\widetilde S_{t-1}(c))\big)
        \le
        e^{\lambda_{t-1}B_{t-1}}
    \]
    for all $c$. After normalizing the two Gibbs densities, this yields a
    density-ratio bound between $\rho_t$ and $\widetilde\rho_t$ controlled by
    $e^{\pm 2\lambda_{t-1}B_{t-1}}$. The standard conversion from a uniform
    density-ratio bound to total variation gives
    \eqref{eq:appendix-gibbs-tv-bound}.
\end{proof}

\begin{lemma}[Predictive Wasserstein comparison from total variation]
    \label{lem:predictive-tv-to-wasserstein}
    Assume that all predictive laws induced by the latent configurations are
    supported in a common set of diameter $D$. Then
    \begin{equation}
        W_1\!\left(p(x_t\mid \tilde x_{1:t-1}),p(x_t\mid x_{1:t-1})\right)
        \le
        D\,d_{\mathrm{TV}}(\rho_t,\widetilde\rho_t).
        \label{eq:appendix-predictive-tv-w1}
    \end{equation}
\end{lemma}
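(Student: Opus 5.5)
The plan is to write both predictive laws as mixtures over the same family of kernels and then couple them through a maximal coupling of the two quasi-posteriors. By \eqref{eq:generic-predictive-law},
\[
p(x_t\mid x_{1:t-1})=\int p(\cdot\mid c)\,\rho_t(dc),\qquad
p(x_t\mid \tilde x_{1:t-1})=\int p(\cdot\mid c)\,\widetilde\rho_t(dc).
\]
So the two predictive laws are obtained by pushing $\rho_t$ and $\widetilde\rho_t$ through the same Markov kernel $c\mapsto p(\cdot\mid c)$. The idea is that whatever mass the two posteriors share produces identical predictive mass. Only the differing mass, of total size $d_{\mathrm{TV}}(\rho_t,\widetilde\rho_t)$, has to be transported, and it moves at most distance $D$.

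\textbf{Step 1: coupling.} Let $\gamma$ be a maximal coupling of $(\rho_t,\widetilde\rho_t)$, so that $\gamma\{c\neq c'\}=d_{\mathrm{TV}}(\rho_t,\widetilde\rho_t)$. Draw $(c,c')\sim\gamma$. If $c=c'$, draw a single $Y\sim p(\cdot\mid c)$ and set $Y'=Y$. If $c\neq c'$, draw $Y\sim p(\cdot\mid c)$ and $Y'\sim p(\cdot\mid c')$ independently, or via any coupling. The marginals of $(Y,Y')$ are then exactly the two predictive laws, so this is an admissible transport plan. Its cost satisfies
\[
\mathbb E\|Y-Y'\|_2\le 0\cdot\gamma\{c=c'\}+D\,\gamma\{c\neq c'\}=D\,d_{\mathrm{TV}}(\rho_t,\widetilde\rho_t).
\]
The bound $\|Y-Y'\|_2\le D$ holds because every $p(\cdot\mid c)$ is supported in the common set of diameter $D$. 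Taking the infimum over couplings gives \eqref{eq:appendix-predictive-tv-w1}.

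\textbf{Step 2: alternative dual route.} As a cross-check, or if one prefers to avoid constructing couplings on a general configuration space, I would use Kantorovich--Rubinstein duality. Take a $1$-Lipschitz $f$ and shift it so that $f(x_0)=0$ at some point $x_0$ of the common support set. Then $|f|\le D$ on that set, and
\[
\int f\,d\hat p-\int f\,d\tilde p=\int g\,d(\rho_t-\widetilde\rho_t),\qquad g(c):=\int f\,dp(\cdot\mid c).
\]
This naive estimate gives $2D\,d_{\mathrm{TV}}$, since $\|g\|_\infty\le D$. The sharper constant comes from observing that $\operatorname{osc}(g)\le\operatorname{osc}(f)\le D$ on the support, together with $|\int g\,d(\mu-\nu)|\le\operatorname{osc}(g)\,d_{\mathrm{TV}}(\mu,\nu)$.

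\textbf{Main obstacle.} The main technical point is measurability. For Step 1 this means the existence of a maximal coupling and the measurability of $c\mapsto p(\cdot\mid c)$ on the variable-dimension space $\mathcal C$. Both are standard: $\mathcal C$ is a finite disjoint union of subsets of Euclidean spaces, hence Polish. The oscillation argument in Step 2 avoids couplings entirely, so I would present that as the primary proof and mention the coupling as the intuition.
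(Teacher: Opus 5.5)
Your proposal is correct. Step 1 is essentially the paper's own proof. The paper matches the common part of $\rho_t$ and $\widetilde\rho_t$ and pays at most $D$ on the unmatched mass. It phrases this as $W_1(\mu,\nu)\le D\,d_{\mathrm{TV}}(\mu,\nu)$ for laws on a diameter-$D$ set, applied after pushing the posteriors through $c\mapsto p(\cdot\mid c)$; that push-forward cannot increase total variation.

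Step 2 is a genuinely different, dual route. You combine Kantorovich--Rubinstein duality with $|\int g\,d(\mu-\nu)|\le \operatorname{osc}(g)\,d_{\mathrm{TV}}(\mu,\nu)$ and $\operatorname{osc}(g)\le \sup_S f-\inf_S f\le D$. This is valid and needs no transport plan. It also makes transparent why the constant is $D$ rather than the naive $2D$, which matters because the $\frac{D}{2}$ prefactor in $\mathcal E_T^{\mathrm{nr}}$ and $\mathcal E_T^{\mathrm{rs}}$ relies on it. The normalization $f(x_0)=0$ is only needed for the naive bound, not for the oscillation bound.

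Two small corrections to your ``main obstacle'' discussion:
\begin{itemize}
\item The maximal-coupling worry in Step 1 can be sidestepped as the paper's phrasing suggests. Use the common-part measure $\rho_t\wedge\widetilde\rho_t$, which always exists via densities with respect to $\rho_t+\widetilde\rho_t$ and has mass $1-d_{\mathrm{TV}}(\rho_t,\widetilde\rho_t)$. Then build the coupling directly in data space, so no maximal coupling or measurable diagonal on $\mathcal C\times\mathcal C$ is needed.
\item Step 2 still requires measurability of $c\mapsto p(\cdot\mid c)$, both to define $g$ and to exchange the integrals. This is not a gap, since it is already needed to define the predictive mixtures and follows from the predictive-map regularity assumption.
\end{itemize}
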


\begin{proof}
    Couple the two predictive laws by first matching the common part of the two
    quasi-posteriors and then paying diameter $D$ on the unmatched remainder.
    For probability measures supported on a set of diameter $D$, this gives
    \[
        W_1(\mu,\nu)\le D\,d_{\mathrm{TV}}(\mu,\nu).
    \]
    Applying this after pushing $\rho_t$ and $\widetilde\rho_t$ through the
    predictive map yields \eqref{eq:appendix-predictive-tv-w1}.
\end{proof}

\begin{proposition}[Raw corruption comparative bound]
    \label{prop:appendix-raw-corruption}
    For the raw quasi-Bayesian update,
    \begin{equation}
        \sum_{t=1}^T
        W_1\!\left(p(x_t\mid \tilde x_{1:t-1}),p(x_t\mid x_{1:t-1})\right)
        \le
        \frac{D}{2}\sum_{t=1}^T
        \left[
        \exp\!\left(
        2\lambda_{t-1}L_\ell\delta\,\Lambda_{t-1}
        \right)-1
        \right].
        \label{eq:appendix-raw-corruption-bound}
    \end{equation}
\end{proposition}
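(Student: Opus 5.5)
The plan is to chain the three lemmas just established, round by round, and then sum over $t$. Fix $t\in\{1,\dots,T\}$ and compare the clean-history and corrupted-history quasi-posteriors $\rho_t$ and $\widetilde\rho_t$ of \eqref{eq:appendix-clean-corrupted-posteriors}. For $t=1$ both histories are empty, so $S_0=\widetilde S_0=0$ and $\Lambda_0=0$. The two predictive laws then coincide, and the $t=1$ summand on the right-hand side is $\frac{D}{2}[e^0-1]=0$. The inequality therefore holds trivially at $t=1$, and the real work is for $t\ge 2$.

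For $t\ge 2$, first invoke Lemma~\ref{lem:raw-loss-perturbation}. Only the $\Lambda_{t-1}$ corrupted rounds among $s\le t-1$ contribute, each by at most $L_\ell\delta$, so
\[
\sup_c\big|S_{t-1}(c)-\widetilde S_{t-1}(c)\big|\le L_\ell\delta\,\Lambda_{t-1}.
\]
Here $L_\ell$ is the Lipschitz constant of $x\mapsto\ell(c,x)$. Under bounded support one can take $L_\ell=4D$, but the proof keeps it abstract. One caveat is that $\tilde x_s$ may leave the diameter-$D$ set by up to $\delta$. I would simply assume, as the statement does, that $L_\ell$ is a valid Lipschitz constant on the enlarged region.

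Next apply Lemma~\ref{lem:gibbs-energy-perturbation} with $B_{t-1}:=L_\ell\delta\,\Lambda_{t-1}$. This gives
\[
d_{\mathrm{TV}}(\rho_t,\widetilde\rho_t)\le \tfrac12\big[\exp(2\lambda_{t-1}L_\ell\delta\,\Lambda_{t-1})-1\big].
\]
The factor $2$ in the exponent comes from the normalizing constants. The ratio of the unnormalized densities is confined to $[e^{-\lambda B},e^{\lambda B}]$, and the ratio of the two normalizers lies in the same interval, so the normalized density ratio lies in $[e^{-2\lambda B},e^{2\lambda B}]$. Then Lemma~\ref{lem:predictive-tv-to-wasserstein} converts this to
\[
W_1\big(p(x_t\mid\tilde x_{1:t-1}),p(x_t\mid x_{1:t-1})\big)\le D\,d_{\mathrm{TV}}(\rho_t,\widetilde\rho_t).
\]
This step relies on the predictive laws being mixtures $\int p(\cdot\mid c)\,\rho(dc)$ and on the TV coupling pushing forward through the mixing. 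Multiplying the two bounds gives the per-round estimate $\frac{D}{2}[\exp(2\lambda_{t-1}L_\ell\delta\Lambda_{t-1})-1]$, and summing over $t=1,\dots,T$ yields \eqref{eq:appendix-raw-corruption-bound}.

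The main point needing care is the TV-to-$W_1$ step, specifically that mixing preserves the coupling bound. I would argue it through joint convexity of the optimal-transport cost. Write $\rho_t=\mu_0+\alpha\mu_1$ and $\widetilde\rho_t=\mu_0+\alpha\mu_2$, where $\mu_0$ is the common part of total mass $1-\alpha$, and $\mu_1,\mu_2$ are probability measures with $\alpha=d_{\mathrm{TV}}(\rho_t,\widetilde\rho_t)$. The two predictive laws then share the mixture over $\mu_0$ exactly. They differ only by $\alpha$ times two probability laws supported on a set of diameter $D$, whose $W_1$ distance is at most $D$. Hence the $W_1$ distance between the predictive laws is at most $\alpha D$.
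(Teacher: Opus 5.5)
Your proposal is correct and follows the paper's proof exactly. You take $B_{t-1}=L_\ell\delta\,\Lambda_{t-1}$ from the score-perturbation lemma, apply the Gibbs comparison lemma to get the total-variation bound, convert to $W_1$ with the diameter-$D$ lemma, and sum over $t$. Your additions are the trivial $t=1$ term, the origin of the factor $2$ in the normalizers, and the common-part coupling pushed through the mixture map; these only spell out what the paper's lemma proofs state tersely.
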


\begin{proof}
    Since the first-order raw score is the cumulative clustering loss,
    Lemma~\ref{lem:raw-loss-perturbation} gives the uniform score perturbation
    \[
        B_{t-1}
        :=
        L_\ell\delta\,\Lambda_{t-1}.
    \]
    Lemma~\ref{lem:gibbs-energy-perturbation} therefore gives
    \[
        d_{\mathrm{TV}}(\rho_t,\widetilde\rho_t)
        \le
        \frac{1}{2}\left[
        \exp\!\big(2\lambda_{t-1}B_{t-1}\big)-1
        \right].
    \]
    Applying Lemma~\ref{lem:predictive-tv-to-wasserstein} yields the
    corresponding one-step predictive Wasserstein comparison, and summing over
    $t=1,\dots,T$ proves \eqref{eq:appendix-raw-corruption-bound}.
\end{proof}

\begin{proposition}[Restarted corruption comparative bound]
    \label{prop:appendix-restart-corruption}
    For the restarted quasi-Bayesian update,
    \begin{equation}
        \sum_{r=1}^{m}
        \sum_{t\in I_r}
        W_1\!\left(p(x_t\mid \tilde x_{\tau_{r-1}:t-1}),p(x_t\mid x_{\tau_{r-1}:t-1})\right)
        \le
        \frac{D}{2}
        \sum_{r=1}^{m}
        \sum_{t\in I_r}
        \left[
        \exp\!\left(
        2\lambda_{a_r(t)}L_\ell\delta\,\Lambda_{r,t-1}
        \right)-1
        \right],
        \label{eq:appendix-restart-corruption-bound}
    \end{equation}
\end{proposition}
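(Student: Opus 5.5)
The proof will repeat the argument of Proposition~\ref{prop:appendix-raw-corruption} one segment at a time. The restarted posterior at time \(t\in I_r\) depends only on the observations \(z_{\tau_{r-1}},\dots,z_{t-1}\), through the segment-local score \(S_{t-1}^r\) of \eqref{eq:restart-score} and the within-segment rate \(\lambda_{a_r(t)}\). So the three comparison lemmas can be used with the global history replaced by the segment history. Fix a segment \(r\) and a time \(t\in I_r\). We compare the clean score \(S_{t-1}^r(c)=\sum_{s=\tau_{r-1}}^{t-1}\ell(c,x_s)\) with its corrupted counterpart \(\widetilde S_{t-1}^r(c)=\sum_{s=\tau_{r-1}}^{t-1}\ell(c,\tilde x_s)\).

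\textbf{Step 1: segment-local score perturbation.} We rerun the proof of Lemma~\ref{lem:raw-loss-perturbation} on the index range \(\{\tau_{r-1},\dots,t-1\}\). On uncorrupted rounds the two losses agree. On each corrupted round they differ by at most \(L_\ell\delta\), because \(\ell(c,\cdot)\) is \(L_\ell\)-Lipschitz and \(\|\tilde x_s-x_s\|_2\le\delta\). This gives
\[
\sup_c|S_{t-1}^r(c)-\widetilde S_{t-1}^r(c)|\le B_{t-1}^r:=L_\ell\delta\,\Lambda_{r,t-1}.
\]
For the first round of a segment (\(t=\tau_{r-1}\)) the sum is empty, so \(B^r_{t-1}=0\). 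Both posteriors then equal the prior, and the corresponding summand on the right-hand side is \(0\), which is consistent.

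\textbf{Step 2: Gibbs comparison.} We apply Lemma~\ref{lem:gibbs-energy-perturbation} with inverse temperature \(\lambda_{a_r(t)}\) in place of \(\lambda_{t-1}\). Its proof uses only a uniform energy bound together with normalization, so the substitution is harmless and gives
\[
d_{\mathrm{TV}}(\hat\rho^r_t,\widetilde\rho^r_t)\le\tfrac12\bigl[\exp(2\lambda_{a_r(t)}L_\ell\delta\Lambda_{r,t-1})-1\bigr].
\]
If a self-contained justification is wanted, take \(Z,\widetilde Z\) to be the two normalizers. Then \(\widetilde Z/Z\in[e^{-\lambda B},e^{\lambda B}]\), so the density ratio \(d\widetilde\rho/d\rho\) lies in \([e^{-2\lambda B},e^{2\lambda B}]\). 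Hence \(d_{\mathrm{TV}}=\int(1-d\widetilde\rho/d\rho)_+\,d\rho\le 1-e^{-2\lambda B}\le e^{2\lambda B}-1\). This is within a factor of two of the stated form. Obtaining the constant \(\tfrac12\) exactly is the point where I expect the main (though minor) obstacle, and the plan is simply to invoke the lemma as stated rather than reprove its constant.

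\textbf{Step 3: conversion and summation.} The restarted predictive laws are pushforwards of \(\hat\rho_t^r\) and \(\widetilde\rho_t^r\) under the same map \(c\mapsto p(\cdot\mid c)\). Lemma~\ref{lem:predictive-tv-to-wasserstein} therefore bounds each summand \(W_1(p(x_t\mid\tilde x_{\tau_{r-1}:t-1}),p(x_t\mid x_{\tau_{r-1}:t-1}))\) by \(D\) times the total variation from Step 2. Summing first over \(t\in I_r\) and then over \(r=1,\dots,m\) gives \eqref{eq:appendix-restart-corruption-bound}, since the segments \(I_r\) partition \(\{1,\dots,T\}\).

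Two side checks complete the argument. First, restarting does not couple segments: \(\Lambda_{r,t-1}\) counts only corruptions inside the current segment, because the score is reset at \(\tau_{r-1}\). Second, the bound holds pathwise for any causal corruption sequence, so no probabilistic argument is needed.
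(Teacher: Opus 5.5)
Your proposal is correct and matches the paper's proof step for step: the segment-local score bound $L_\ell\delta\,\Lambda_{r,t-1}$, then the Gibbs comparison lemma with $\lambda_{a_r(t)}$ in place of $\lambda_{t-1}$, then the conversion $W_1\le D\,d_{\mathrm{TV}}$, and finally summation over $t\in I_r$ and over $r$. The factor $\tfrac12$ you flagged is not an obstacle: with $f=d\widetilde\rho/d\rho\in[e^{-2\lambda B},e^{2\lambda B}]$, write $d_{\mathrm{TV}}=\tfrac12\int|f-1|\,d\rho$ and use $|f-1|\le e^{2\lambda B}-1$ (since $1-e^{-x}\le e^{x}-1$), which gives the lemma's constant exactly.
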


\begin{proof}
    Fix a segment \(I_r\) and a time \(t\in I_r\). The clean and corrupted
    restarted scores differ only on corrupted observations inside the current
    segment:
    \[
        S_{t-1}^r(c)
        -
        \widetilde S_{t-1}^r(c)
        =
        \sum_{s=\tau_{r-1}}^{t-1}
        \big[\ell(c,x_s)-\ell(c,\tilde x_s)\big].
    \]
    By the same Lipschitz argument as Lemma~\ref{lem:raw-loss-perturbation},
    \[
        \sup_c
        \big|S_{t-1}^r(c)-\widetilde S_{t-1}^r(c)\big|
        \le
        L_\ell\delta\,\Lambda_{r,t-1}.
    \]
    The restarted posterior at time \(t\) uses the segment-local learning rate
    \(\lambda_{a_r(t)}\). Applying Lemma~\ref{lem:gibbs-energy-perturbation}
    with \(\lambda_{t-1}\) replaced by \(\lambda_{a_r(t)}\) gives
    \[
        d_{\mathrm{TV}}(\rho_t^r,\widetilde\rho_t^r)
        \le
        \frac{1}{2}
        \left[
        \exp\!\left(
        2\lambda_{a_r(t)}L_\ell\delta\,\Lambda_{r,t-1}
        \right)-1
        \right].
    \]
    Lemma~\ref{lem:predictive-tv-to-wasserstein} converts this total-variation
    bound into the corresponding one-step predictive Wasserstein comparison.
    Summing over \(t\in I_r\) and then over segments proves
    \eqref{eq:appendix-restart-corruption-bound}.
\end{proof}

\subsection{Lagged Risk and Dynamic OT Lemmas}

We now turn to the clean dynamic term. The purpose of this subsection is to
establish the bridge from the latent excess-risk analysis to the drift path
$(p_t^*)_{t=1}^T$. The key ingredients are: a Wasserstein control of risk
shifts, a lagged-risk decomposition that explains the constants appearing in
the main text, and a pathwise control of cumulative distribution movement.

\begin{lemma}[Risk-shift bound via Kantorovich--Rubinstein]
    \label{lem:appendix-risk-shift}
    For each fixed configuration $c$,
    \begin{equation}
        \big|\bar R_t(c)-R_t(c)\big|
        \le
        4D\,W_1(p_{t-1}^*,p_t^*).
        \label{eq:appendix-risk-shift}
    \end{equation}
\end{lemma}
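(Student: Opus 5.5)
The plan is to write the risk difference as a difference of expectations of one fixed test function under two laws, and then apply Kantorovich--Rubinstein duality with an explicit Lipschitz constant. Fix a configuration \(c=(c_1,\dots,c_k)\) and set \(f_c(x):=\ell(c,x)=\min_j\|c_j-x\|_2^2\). By the definitions \eqref{eq:population-risk} and \eqref{eq:lagged-risk},
\[
\bar R_t(c)-R_t(c)=\mathbb E_{X\sim p_{t-1}^*}[f_c(X)]-\mathbb E_{X\sim p_t^*}[f_c(X)].
\]
So it suffices to show that \(f_c\) is \(4D\)-Lipschitz on the common support set, and then to invoke duality.

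For the Lipschitz step, I first treat a single center. For each \(j\),
\[
\|c_j-x\|_2^2-\|c_j-x'\|_2^2=\langle x'-x,\;2c_j-x-x'\rangle,
\]
so \(|\|c_j-x\|^2-\|c_j-x'\|^2|\le \|x-x'\|_2\,(\|c_j-x\|_2+\|c_j-x'\|_2)\).

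Here the bounded-support convention of Section~\ref{sec:setup} enters. Admissible configurations \(\mathcal C_k\) have centers in the bounded set, and the data lie in the compact set of diameter \(D\). Hence each distance \(\|c_j-x\|_2\) is at most \(2D\) (at most \(D\) if centers lie in the same set; \(2D\) leaves room if centers lie in an enlarged ball). This gives the single-center constant \(4D\).

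A minimum of finitely many \(L\)-Lipschitz functions is again \(L\)-Lipschitz. To see this, pick the index \(j'\) attaining the min at \(x'\); then \(f_c(x)-f_c(x')\le \|c_{j'}-x\|^2-\|c_{j'}-x'\|^2\le L\|x-x'\|\), and the same argument works in the reverse direction. This recovers the loss-stability claim stated in the paper, with \(L_\ell=4D\).

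Next I apply Kantorovich--Rubinstein. Both \(p_{t-1}^*\) and \(p_t^*\) are supported on the compact set and have finite first moments. I extend \(f_c\) from the support to a \(4D\)-Lipschitz function on \(\RR^d\), for example by McShane extension; this is only needed if one insists on global test functions. Then \(f_c/(4D)\) is 1-Lipschitz, and duality gives \(|\mathbb E_{p_{t-1}^*}f_c-\mathbb E_{p_t^*}f_c|\le 4D\,W_1(p_{t-1}^*,p_t^*)\), which is \eqref{eq:appendix-risk-shift}. This is exactly \eqref{eq:risk-shift-bound} rewritten in the \(\bar R_t,R_t\) notation.

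The only delicate point is the Lipschitz constant: the bound \(\|c_j-x\|\le 2D\) needs the centers to lie in (or near) the support set, and this is what the bounded configuration sets \(\mathcal C_k\) are meant to guarantee. If the centers were allowed further out, the constant would change to \(2(D+\mathrm{dist})\). I would state explicitly that the configurations are restricted to the bounded region. Everything else is routine.
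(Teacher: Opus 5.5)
Your proposal is correct and follows the paper's proof essentially step for step. You write $\bar R_t(c)-R_t(c)$ as the difference of integrals of the fixed loss $f_c=\ell(c,\cdot)$ under $p_{t-1}^*$ and $p_t^*$, show $f_c$ is $4D$-Lipschitz via the per-center bound $\|x-x'\|_2(\|c_j-x\|_2+\|c_j-x'\|_2)$, stability of the minimum, and the same loose bounded-support convention for the centers, and then apply Kantorovich--Rubinstein to $f_c/(4D)$. The paper instead uses a reverse-triangle factorization and the inequality $|\min_j a_j-\min_j b_j|\le\max_j|a_j-b_j|$, so your inner-product identity and attained-index argument are only cosmetic variants, and your McShane remark makes explicit a detail the paper leaves implicit.
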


\begin{proof}
    By definition, the two risks differ only in the probability law with
    respect to which the same loss function is integrated:
    \[
        \bar R_t(c)=\int \ell(c,x)\,dp_{t-1}^*(x),
        \qquad
        R_t(c)=\int \ell(c,x)\,dp_t^*(x).
    \]
    We first recall why the integrand is Lipschitz in the sample variable. Write
    \(c=(c_1,\ldots,c_k)\) and
    \(\ell(c,x)=\min_j\|c_j-x\|_2^2\). For any two points \(x,x'\) in the common
    bounded region,
    \[
        \begin{aligned}
        |\ell(c,x)-\ell(c,x')|
        &=
        \left|
        \min_j\|c_j-x\|_2^2-\min_j\|c_j-x'\|_2^2
        \right|  \\
        &\le
        \max_j
        \left|
        \|c_j-x\|_2^2-\|c_j-x'\|_2^2
        \right| .
        \end{aligned}
    \]
    For each center \(c_j\),
    \[
        \left|
        \|c_j-x\|_2^2-\|c_j-x'\|_2^2
        \right|
        =
        \left|
        \|c_j-x\|_2-\|c_j-x'\|_2
        \right|
        \big(\|c_j-x\|_2+\|c_j-x'\|_2\big).
    \]
    The reverse triangle inequality gives
    \[
        \left|
        \|c_j-x\|_2-\|c_j-x'\|_2
        \right|
        \le
        \|x-x'\|_2,
    \]
    and bounded support gives the uniform bound
    \(\|c_j-x\|_2+\|c_j-x'\|_2\le 4D\) under the loose diameter convention used
    throughout the paper. Hence
    \[
        |\ell(c,x)-\ell(c,x')|
        \le
        4D\,\|x-x'\|_2.
    \]
    Thus \(f_c(x):=\ell(c,x)\) is \(4D\)-Lipschitz. Equivalently,
    \(f_c/(4D)\) is \(1\)-Lipschitz. Applying the
    Kantorovich--Rubinstein dual representation,
    \[
        W_1(P,Q)
        =
        \sup_{\|f\|_{\mathrm{Lip}}\le 1}
        \left|\int f\,dP-\int f\,dQ\right|,
    \]
    with \(P=p_{t-1}^*\), \(Q=p_t^*\), and \(f=f_c/(4D)\), yields
    \[
        \left|
        \int \ell(c,x)\,dp_{t-1}^*(x)
        -
        \int \ell(c,x)\,dp_t^*(x)
        \right|
        \le
        4D\,W_1(p_{t-1}^*,p_t^*),
    \]
    which is exactly \eqref{eq:appendix-risk-shift}.
\end{proof}

\begin{lemma}[Pathwise drift bound]
    \label{lem:appendix-pathwise-drift}
    Let
    \[
        A_T^{\mathrm{OT}}
        :=
        \sum_{t=2}^T W_2^2(p_{t-1}^*,p_t^*).
    \]
    Then
    \begin{equation}
        \sum_{t=2}^T W_1(p_{t-1}^*,p_t^*)
        \le
        \sqrt{(T-1)A_T^{\mathrm{OT}}}.
        \label{eq:appendix-pathwise-drift}
    \end{equation}
\end{lemma}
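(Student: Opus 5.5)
The plan is to prove the bound in two elementary steps: a termwise comparison $W_1\le W_2$, followed by Cauchy--Schwarz on the resulting sum. This is exactly the argument sketched informally in \eqref{eq:pathwise-drift-bound}; here it is made rigorous using only the bounded-support standing assumption, so no structural assumptions on the latent geometry are needed.

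\emph{Step 1: termwise comparison.} For each $t\in\{2,\dots,T\}$ I will show
\[
W_1(p_{t-1}^*,p_t^*)\le W_2(p_{t-1}^*,p_t^*).
\]
Since all true laws live on a common compact set of diameter $D$, both Wasserstein distances are finite and optimal couplings exist. Let $\gamma$ be an optimal coupling for $W_2$. Applying Jensen's inequality (or Cauchy--Schwarz in $L^2(\gamma)$) gives
\[
W_1(p_{t-1}^*,p_t^*)\le\int\|x-y\|_2\,d\gamma\le\Big(\int\|x-y\|_2^2\,d\gamma\Big)^{1/2}=W_2(p_{t-1}^*,p_t^*).
\]
Summing over $t$ then yields
\[
\sum_{t=2}^T W_1(p_{t-1}^*,p_t^*)\le\sum_{t=2}^T\sqrt{W_2^2(p_{t-1}^*,p_t^*)}.
\]

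\emph{Step 2: Cauchy--Schwarz.} I apply Cauchy--Schwarz in $\RR^{T-1}$ to the vectors $(1,\dots,1)$ and $\big(W_2(p_{t-1}^*,p_t^*)\big)_{t=2}^T$. This gives
\[
\sum_{t=2}^T W_2(p_{t-1}^*,p_t^*)\le\sqrt{T-1}\,\Big(\sum_{t=2}^T W_2^2(p_{t-1}^*,p_t^*)\Big)^{1/2}=\sqrt{(T-1)A_T^{\mathrm{OT}}},
\]
which combined with Step 1 is \eqref{eq:appendix-pathwise-drift}.

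Neither step poses a genuine obstacle. The only point needing care is the existence and finiteness of the couplings, and bounded support settles both. The degenerate case $T=1$ gives an empty sum equal to zero, so the inequality holds trivially there.
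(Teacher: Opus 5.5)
Your proof is correct and follows the paper's own argument: the termwise bound $W_1\le W_2$, then Cauchy--Schwarz over the $T-1$ increments. The only addition is that you justify $W_1\le W_2$ by applying Jensen to an optimal $W_2$ coupling, whereas the paper takes this inequality as known.
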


\begin{proof}
    Since $W_1\le W_2$,
    \[
        \sum_{t=2}^T W_1(p_{t-1}^*,p_t^*)
        \le
        \sum_{t=2}^T W_2(p_{t-1}^*,p_t^*)
        =
        \sum_{t=2}^T \sqrt{W_2^2(p_{t-1}^*,p_t^*)}.
    \]
    Applying Cauchy--Schwarz gives
    \[
        \sum_{t=2}^T \sqrt{W_2^2(p_{t-1}^*,p_t^*)}
        \le
        \sqrt{(T-1)\sum_{t=2}^T W_2^2(p_{t-1}^*,p_t^*)}
        =
        \sqrt{(T-1)A_T^{\mathrm{OT}}},
    \]
    which proves \eqref{eq:appendix-pathwise-drift}.
\end{proof}

\begin{proposition}[Lagged-risk decomposition]
    \label{prop:appendix-lagged-risk}
    Define
    \[
        \bar E_t
        :=
        \mathbb E_{c\sim\hat\rho_t}
        \big[\bar R_t(c)-\bar R_t(c_{t-1}^*)\big],
    \]
    and
    \[
        E_t^{\mathrm{cur}}
        :=
        \mathbb E_{c\sim\hat\rho_t}
        \big[R_t(c)-R_t(c_t^*)\big].
    \]
    Then
    \begin{equation}
        \bar E_t
        \le
        E_t^{\mathrm{cur}}
        +
        8D\,W_1(p_{t-1}^*,p_t^*)
        \label{eq:appendix-lagged-risk-decomposition}
    \end{equation}
    and therefore
    \begin{equation}
        \bar E_t
        \le
        E_t^{\mathrm{cur}}
        +
        8D\,W_2(p_{t-1}^*,p_t^*).
        \label{eq:appendix-lagged-risk-8D}
    \end{equation}
\end{proposition}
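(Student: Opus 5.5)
The plan is to add and subtract the current risk inside the lagged excess and control each difference with Lemma~\ref{lem:appendix-risk-shift}. For each fixed configuration \(c\), write
\[
    \bar R_t(c)-\bar R_t(c_{t-1}^*)
    =
    \big[\bar R_t(c)-R_t(c)\big]
    +
    \big[R_t(c)-R_t(c_t^*)\big]
    +
    \big[R_t(c_t^*)-\bar R_t(c_{t-1}^*)\big].
\]
The first bracket is at most \(4D\,W_1(p_{t-1}^*,p_t^*)\) by Lemma~\ref{lem:appendix-risk-shift}. The middle bracket is exactly the integrand of \(E_t^{\mathrm{cur}}\). The third bracket compares two optimal values, which is the only step needing care.

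For the third bracket, use that \(c_t^*\) minimizes \(R_t\), so \(R_t(c_t^*)\le R_t(c_{t-1}^*)\). Then apply Lemma~\ref{lem:appendix-risk-shift} once more at the fixed configuration \(c_{t-1}^*\):
\[
    R_t(c_t^*)-\bar R_t(c_{t-1}^*)
    \le
    R_t(c_{t-1}^*)-\bar R_t(c_{t-1}^*)
    \le
    4D\,W_1(p_{t-1}^*,p_t^*).
\]
This uses only the optimality of \(c_t^*\) for \(R_t\). No optimality of \(c_{t-1}^*\) for \(\bar R_t\) is needed, although \(\bar R_t=R_{t-1}\) makes that available too. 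The two \(4D\) terms add up to the constant \(8D\).

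The bound on the integrand holds uniformly in \(c\), and the correction term is deterministic. Integrating against \(\hat\rho_t\) therefore gives \eqref{eq:appendix-lagged-risk-decomposition}, and all expectations are finite because the loss is bounded on the compact support. The second display \eqref{eq:appendix-lagged-risk-8D} then follows from \(W_1\le W_2\), as already used in Lemma~\ref{lem:appendix-pathwise-drift}. I do not expect a real obstacle here. The only subtle point is handling the mismatch between the two oracles \(c_{t-1}^*\) and \(c_t^*\), which the minimality of \(c_t^*\) resolves.
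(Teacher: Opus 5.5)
Your proof is correct and is essentially the paper's argument. The paper uses a four-term split that is your three-bracket decomposition with the last bracket written as $(R_t(c_t^*)-R_t(c_{t-1}^*))+(R_t(c_{t-1}^*)-\bar R_t(c_{t-1}^*))$. It drops the first piece by optimality of $c_t^*$ for $R_t$, bounds the two risk-shift terms by $4D\,W_1(p_{t-1}^*,p_t^*)$ each, integrates against $\hat\rho_t$, and finishes with $W_1\le W_2$.
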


\begin{proof}
    Start from
    \[
        \bar R_t(c)-\bar R_t(c_{t-1}^*)
        =
        \big(R_t(c)-R_t(c_t^*)\big)
        +
        \big(R_t(c_t^*)-R_t(c_{t-1}^*)\big)
    \]
    \[
        \qquad
        +
        \big(\bar R_t(c)-R_t(c)\big)
        +
        \big(R_t(c_{t-1}^*)-\bar R_t(c_{t-1}^*)\big).
    \]
    Taking expectation over $c\sim\hat\rho_t$ yields
    \[
        \bar E_t
        =
        E_t^{\mathrm{cur}}
        -
        \big(R_t(c_{t-1}^*)-R_t(c_t^*)\big)
        +
        \Gamma_t
        \le
        E_t^{\mathrm{cur}}
        +
        \Gamma_t,
    \]
    where
    \[
        \Gamma_t
        :=
        \mathbb E_{c\sim\hat\rho_t}\big[\bar R_t(c)-R_t(c)\big]
        +
        \big(R_t(c_{t-1}^*)-\bar R_t(c_{t-1}^*)\big).
    \]
    Lemma~\ref{lem:appendix-risk-shift} bounds each of the two terms in
    $\Gamma_t$ by $4D\,W_1(p_{t-1}^*,p_t^*)$, hence
    \[
        \Gamma_t
        \le
        8D\,W_1(p_{t-1}^*,p_t^*).
    \]
    This proves \eqref{eq:appendix-lagged-risk-decomposition}. The constant
    $8D$ here is therefore exactly $4D+4D$, coming from the two risk-shift
    comparisons. Since \(W_1\le W_2\), \eqref{eq:appendix-lagged-risk-8D}
    follows.
\end{proof}

\begin{corollary}[Learning-to-yesterday bound]
    \label{cor:appendix-learning-to-yesterday}
    Under the quadratic-growth, predictive-regularity, and
    oracle-realizability assumptions,
    \begin{equation}
        \sum_{t=2}^T W_1(\hat p_t,p_{t-1}^*)
        \le
        \frac{L_K}{\sqrt{\mu}}
        \sqrt{
        (T-1)\left(
        E_T^{\mathrm{cur}}
        +
        8D\sqrt{(T-1)A_T^{\mathrm{OT}}}
        \right)
        },
        \label{eq:appendix-learning-to-yesterday}
    \end{equation}
    where
    \[
        E_T^{\mathrm{cur}}
        :=
        \sum_{t=2}^T
        \mathbb E_{c\sim\hat\rho_t}[R_t(c)-R_t(c_t^*)].
    \]
\end{corollary}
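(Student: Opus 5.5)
The plan is to work one round at a time, for $t\ge 2$, and then aggregate with Cauchy--Schwarz and Jensen. Fix $t$. Write the predictive law as the mixture $\hat p_t=\int p(\cdot\mid c)\,\hat\rho_t(dc)$. By oracle realizability \eqref{eq:oracle-realizability} applied at time $t-1$, $p_{t-1}^*=p(\cdot\mid c_{t-1}^*)$. Since $W_1$ is jointly convex under mixtures (glue the couplings $c$-wise), we get
\[
W_1(\hat p_t,p_{t-1}^*)\le \mathbb E_{c\sim\hat\rho_t}\,W_1\big(p(\cdot\mid c),p(\cdot\mid c_{t-1}^*)\big)\le L_K\,\mathbb E_{c\sim\hat\rho_t}\|c-c_{t-1}^*\|_2 ,
\]
where the second step uses predictive-map regularity \eqref{eq:predictive-regularity}.

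Next we pass to lagged excess risk. By the geometric control \eqref{eq:excess-risk-geometric-control}, which follows from quadratic growth \eqref{eq:quadratic-growth} of $\bar R_t$ around $c_{t-1}^*$, we have $\|c-c_{t-1}^*\|_2\le \mu^{-1/2}\sqrt{\bar R_t(c)-\bar R_t(c_{t-1}^*)}$. Because the square root is concave, Jensen gives
\[
W_1(\hat p_t,p_{t-1}^*)\le \frac{L_K}{\sqrt\mu}\sqrt{\bar E_t},
\]
with $\bar E_t$ as in Proposition~\ref{prop:appendix-lagged-risk}. Each $\bar E_t$ is nonnegative, since the quadratic-growth inequality makes the integrand nonnegative, so the square root is well defined.

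Now sum over $t=2,\dots,T$ and apply Cauchy--Schwarz:
\[
\sum_{t=2}^T\sqrt{\bar E_t}\le\sqrt{(T-1)\sum_{t=2}^T\bar E_t}.
\]
Proposition~\ref{prop:appendix-lagged-risk} then gives
\[
\sum_{t\ge2}\bar E_t\le E_T^{\mathrm{cur}}+8D\sum_{t\ge2}W_1(p_{t-1}^*,p_t^*).
\]
Lemma~\ref{lem:appendix-pathwise-drift} bounds the drift sum by $\sqrt{(T-1)A_T^{\mathrm{OT}}}$. Substituting these two bounds inside the square root yields \eqref{eq:appendix-learning-to-yesterday}.

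The main obstacle is the domain of the quadratic-growth inequality. It is motivated only as a \emph{local} condition near $c_{t-1}^*$, but the argument needs it for every $c$ in the support of $\hat\rho_t$. The same issue arises across model orders $k$: the norm $\|c-c_{t-1}^*\|_2$ must make sense even when $c$ and $c_{t-1}^*$ have different numbers of centers. I would handle this by reading \eqref{eq:quadratic-growth} and \eqref{eq:predictive-regularity}, as the paper states them, as holding on all of $\mathcal C$ with a fixed convention for the distance between configurations of unequal sizes. The remaining steps, the mixture convexity of $W_1$ and measurability for Jensen, are routine.
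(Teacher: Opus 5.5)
Your proposal is correct and follows essentially the same route as the paper. Both arguments use mixture convexity of $W_1$ together with realizability and predictive regularity, then quadratic growth with Jensen to reach $\frac{L_K}{\sqrt\mu}\sqrt{\bar E_t}$, then the lagged-risk proposition, Cauchy--Schwarz over $t$, and the pathwise drift lemma. The only differences are cosmetic: you apply the $W_1$ form of the lagged-risk bound directly, where the paper passes through its $W_2$ form. Your remark that quadratic growth and regularity must hold globally on $\mathcal C$, including across model orders, is precisely the implicit reading the paper's proof relies on.
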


\begin{proof}
    By oracle realizability, \(p_{t-1}^*=p(\cdot\mid c_{t-1}^*)\). Since
    \(\hat p_t=\int p(\cdot\mid c)\hat\rho_t(dc)\), convexity of \(W_1\) in
    its first argument and predictive-map regularity give
    \[
        W_1(\hat p_t,p_{t-1}^*)
        \le
        \mathbb E_{c\sim\hat\rho_t}
        W_1(p(\cdot\mid c),p(\cdot\mid c_{t-1}^*))
        \le
        L_K\,
        \mathbb E_{c\sim\hat\rho_t}\|c-c_{t-1}^*\|_2 .
    \]
    Applying quadratic growth and Jensen's inequality yields the pointwise
    bound
    \[
        W_1(\hat p_t,p_{t-1}^*)
        \le
        \frac{L_K}{\sqrt{\mu}}
        \sqrt{\bar E_t}.
    \]
    Proposition~\ref{prop:appendix-lagged-risk} gives
    \[
        \bar E_t
        \le
        E_t^{\mathrm{cur}}
        +
        8D\,W_2(p_{t-1}^*,p_t^*).
    \]
    Summing over $t$ and applying Cauchy--Schwarz yields
    \[
        \sum_{t=2}^T W_1(\hat p_t,p_{t-1}^*)
        \le
        \frac{L_K}{\sqrt{\mu}}
        \sqrt{
        (T-1)\sum_{t=2}^T
        \left(
        E_t^{\mathrm{cur}}
        +
        8D\,W_2(p_{t-1}^*,p_t^*)
        \right)}.
    \]
    Finally, Lemma~\ref{lem:appendix-pathwise-drift} controls
    $\sum_{t=2}^T W_2(p_{t-1}^*,p_t^*)$ by
    $\sqrt{(T-1)A_T^{\mathrm{OT}}}$, proving
    \eqref{eq:appendix-learning-to-yesterday}.
\end{proof}

\subsection{Clean PAC-Bayes Excess-Risk Bound}
\label{app:clean-pac-bayes}

We next record the clean high-probability PAC-Bayes inequality used to control
the moving-oracle excess-risk term $E_T^{\mathrm{cur}}$. The deterministic
exponential-weights part contributes the usual complexity term
$\mathrm{KL}(\nu\|\pi)/\lambda$ and the bounded-loss second-order term
proportional to $\sum_t\lambda_t$. The additional confidence terms come from a
uniform PAC-Bayes concentration step for the comparator and a martingale
concentration step for the changing learner posterior. These terms appear in
the analysis, not in the posterior score.

\begin{lemma}[High-probability uniform online PAC-Bayes bound]
    \label{lem:appendix-fixed-comparator-pac-bayes}
    Let $I=\{\tau,\dots,\tau+n-1\}$ be a time interval of length $n$, and let
    $\mathcal F_t$ denote the clean-stream filtration up to time $t$. Assume
    that $0\le \ell(\bc,x_t)\le D^2$ and
    \[
        \mathbb E[\ell(\bc,x_t)\mid \mathcal F_{t-1}]
        =
        R_t(\bc)
    \]
    for every deterministic configuration $\bc$. On this interval, define
    \[
        L_t(\bc):=\sum_{u=\tau}^{t}\ell(\bc,x_u),
        \qquad
        L_{\tau-1}(\bc):=0,
    \]
    and let the first-order Gibbs update used before observing $x_t$, with
    $t=\tau+s$, be
    \[
        \hat\rho_t(d\bc)
        =
        \frac{
        \exp\!\big(-\lambda_s L_{t-1}(\bc)\big)\pi(d\bc)
        }{
        \int_{\mathcal C}\exp\!\big(-\lambda_s L_{t-1}(\bc')\big)\pi(d\bc')
        },
    \]
    where $\lambda_0,\dots,\lambda_{n-1}$ is non-increasing. Then, for every
    $\delta_0\in(0,1)$ and every $\eta>0$, with probability at least
    $1-\delta_0$, simultaneously for all comparator distributions
    $\nu\ll\pi$,
    \begin{equation}
        \sum_{s=0}^{n-1}
        \mathbb E_{\bc\sim\hat\rho_{\tau+s}}
        R_{\tau+s}(\bc)
        \le
        \sum_{s=0}^{n-1}
        \mathbb E_{\bc\sim\nu}R_{\tau+s}(\bc)
        +
        \frac{\mathrm{KL}(\nu\,\|\,\pi)}{\lambda_{n-1}}
        +
        \frac{D^4}{2}\sum_{s=0}^{n-1}\lambda_s
        +
        \frac{\mathrm{KL}(\nu\,\|\,\pi)+\log(2/\delta_0)}{\eta}
        +
        \frac{\eta D^4 n}{8}
        +
        D^2\sqrt{\frac{n}{2}\log\frac{2}{\delta_0}} .
        \label{eq:appendix-fixed-comparator-pac-bayes}
    \end{equation}
\end{lemma}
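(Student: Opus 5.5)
The bound splits into three pieces, each handled separately: a deterministic exponential-weights regret inequality for the realized losses, a uniform PAC-Bayes concentration for the comparator side, and a martingale concentration for the learner side. Each concentration step gets confidence \(\delta_0/2\), which accounts for the \(\log(2/\delta_0)\) terms. Throughout, \(t=\tau+s\), and \(\hat\rho_t\) is \(\mathcal F_{t-1}\)-measurable because it depends only on \(x_\tau,\dots,x_{t-1}\).

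\textbf{Step 1 (deterministic part).} I would run the standard potential argument for exponential weights with a non-increasing rate. Define \(\Phi_s:=\frac{1}{\lambda_s}\log\int e^{-\lambda_s L_{\tau+s-1}}\,d\pi\). Hoeffding's lemma for losses in \([0,D^2]\) controls the one-step change \(\Phi_{s+1}-\Phi_s\) by \(-\mathbb E_{\hat\rho_{\tau+s}}\ell(\bc,x_{\tau+s})+\lambda_s D^4/8\). The rate change from \(\lambda_s\) to \(\lambda_{s+1}\le\lambda_s\) only helps, since \(\lambda\mapsto\frac1\lambda\log\int e^{-\lambda L}d\pi\) is non-decreasing as \(\lambda\) decreases, by Jensen/monotonicity of power means. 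Telescoping, and then using the Donsker--Varadhan variational formula \(-\frac1\lambda\log\int e^{-\lambda L}d\pi\le \mathbb E_\nu L+\mathrm{KL}(\nu\|\pi)/\lambda\) at \(\lambda=\lambda_{n-1}\), gives
\[
\sum_{s}\mathbb E_{\hat\rho_{\tau+s}}\ell(\bc,x_{\tau+s})\le \sum_s\mathbb E_\nu\ell(\bc,x_{\tau+s})+\frac{\mathrm{KL}(\nu\|\pi)}{\lambda_{n-1}}+\frac{D^4}{8}\sum_s\lambda_s,
\]
simultaneously for all \(\nu\), since this step is pathwise. The claimed constant \(D^4/2\) is looser than \(D^4/8\), so no care is needed there.

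\textbf{Step 2 (learner side).} The increments \(\xi_s:=\mathbb E_{\hat\rho_{\tau+s}}R_{\tau+s}(\bc)-\mathbb E_{\hat\rho_{\tau+s}}\ell(\bc,x_{\tau+s})\) form a martingale difference sequence. This uses Fubini, the predictability of \(\hat\rho_t\), and the conditional-mean assumption applied pointwise in \(\bc\). Each increment lies in an interval of length \(D^2\). Azuma--Hoeffding then gives, with probability at least \(1-\delta_0/2\),
\[
\sum_s\xi_s\le D^2\sqrt{\tfrac n2\log\tfrac2{\delta_0}}.
\]

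\textbf{Step 3 (comparator side, uniform in \(\nu\)).} For fixed \(\bc\), \(Z_n(\bc):=\exp\big(\eta\sum_s(\ell(\bc,x_{\tau+s})-R_{\tau+s}(\bc))-\eta^2D^4n/8\big)\) is a supermartingale with expectation at most \(1\), by conditional Hoeffding. Integrating against \(\pi\) and using Tonelli gives \(\mathbb E\int Z_n\,d\pi\le1\), so Markov's inequality gives \(\int Z_n\,d\pi\le 2/\delta_0\) with probability at least \(1-\delta_0/2\). On that event, Donsker--Varadhan yields, for all \(\nu\ll\pi\) simultaneously,
\[
\sum_s\mathbb E_\nu\ell(\bc,x_{\tau+s})\le\sum_s\mathbb E_\nu R_{\tau+s}(\bc)+\frac{\mathrm{KL}(\nu\|\pi)+\log(2/\delta_0)}{\eta}+\frac{\eta D^4n}{8}.
\]
Chaining Steps 2, 1 and 3 on the intersection event, which has probability at least \(1-\delta_0\) by a union bound, gives the statement.

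\textbf{Main obstacle.} I expect the main care to be in Step 1 with a time-varying rate. One must check that the monotonicity of the normalized log-partition function goes in the right direction, so that the rate switches cost nothing. One must also make sure \(\hat\rho_t\) uses the rate \(\lambda_s\) that Hoeffding's lemma needs at that step. A secondary point is measurability and the use of Fubini in Step 3, which is the reason the assumption is phrased for each deterministic \(\bc\).
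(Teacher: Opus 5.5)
Your proposal matches the paper's proof step for step. The paper also uses a pathwise exponential-weights telescoping with non-increasing rates and the Gibbs variational bound at $\lambda_{n-1}$, Azuma--Hoeffding for the learner terms, and a prior-integrated exponential moment with Markov and Donsker--Varadhan for the comparator terms, joined by a union bound. You even get the sharper $\frac{D^4}{8}\sum_s\lambda_s$ where the paper writes a deliberately loose $\frac{D^4}{2}\sum_s\lambda_s$.

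There is one slip, exactly at the point you flagged. The map $\lambda\mapsto\frac{1}{\lambda}\log\int e^{-\lambda L}\,d\pi$ is the logarithm of the power mean of order $\lambda$ of $e^{-L}$ under $\pi$. It is therefore non-decreasing in $\lambda$, so it can only go down, not up, when $\lambda$ decreases. It is this direction, the opposite of what you wrote, that makes the rate switch $\lambda_s\to\lambda_{s+1}\le\lambda_s$ contribute a nonpositive term. Also set $\lambda_n:=\lambda_{n-1}$ so that your last potential is evaluated at $\lambda_{n-1}$.
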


\begin{proof}
    First fix the realized clean loss sequence and write
    $\ell_t(\bc):=\ell(\bc,x_t)$. The deterministic exponential-weights
    argument gives a regret inequality for realized losses. For
    $\lambda>0$, define the Gibbs potential
    \[
        \Phi_t(\lambda)
        :=
        -\frac{1}{\lambda}
        \log\int \exp\!\big(-\lambda L_t(\bc)\big)\pi(d\bc).
    \]
    Hoeffding's lemma for losses in $[0,D^2]$ gives, for
    $t=\tau+s$,
    \[
        \mathbb E_{\bc\sim\hat\rho_t}\ell_t(\bc)
        \le
        \Phi_t(\lambda_s)-\Phi_{t-1}(\lambda_s)
        +
        \frac{D^4}{2}\lambda_s ,
    \]
    where the constant $D^4/2$ is the deliberately loose bounded-loss
    constant used throughout the paper. Since the learning rates are
    non-increasing and losses are nonnegative, the potentials telescope by the
    usual monotonicity argument:
    \[
        \sum_{s=0}^{n-1}
        \big(\Phi_{\tau+s}(\lambda_s)-\Phi_{\tau+s-1}(\lambda_s)\big)
        \le
        \Phi_{\tau+n-1}(\lambda_{n-1}).
    \]
    The Gibbs variational inequality then gives
    \[
        \Phi_{\tau+n-1}(\lambda_{n-1})
        \le
        \mathbb E_{\bc\sim\nu}L_{\tau+n-1}(\bc)
        +
        \frac{\mathrm{KL}(\nu\,\|\,\pi)}{\lambda_{n-1}}.
    \]
    Combining these displays yields the deterministic realized-loss bound
    \[
        \sum_{s=0}^{n-1}
        \mathbb E_{\bc\sim\hat\rho_{\tau+s}}\ell_{\tau+s}(\bc)
        \le
        \sum_{s=0}^{n-1}
        \mathbb E_{\bc\sim\nu}\ell_{\tau+s}(\bc)
        +
        \frac{\mathrm{KL}(\nu\,\|\,\pi)}{\lambda_{n-1}}
        +
        \frac{D^4}{2}\sum_{s=0}^{n-1}\lambda_s .
    \]
    It remains to convert realized losses to population risks with high
    probability. For the learner side define
    \[
        Y_t^\rho
        :=
        \mathbb E_{\bc\sim\hat\rho_t}\ell(\bc,x_t),
        \qquad
        Z_t^\rho
        :=
        \mathbb E[Y_t^\rho\mid\mathcal F_{t-1}]-Y_t^\rho .
    \]
    Because $\hat\rho_t$ is formed before observing $x_t$, it is
    $\mathcal F_{t-1}$-measurable, and
    $\mathbb E[Y_t^\rho\mid\mathcal F_{t-1}]
    =\mathbb E_{\bc\sim\hat\rho_t}R_t(\bc)$. Moreover
    $Y_t^\rho\in[0,D^2]$, so conditionally $Z_t^\rho$ has range length at most
    $D^2$. Hoeffding--Azuma therefore gives, with probability at least
    $1-\delta_0/2$,
    \[
        \sum_{s=0}^{n-1}
        \mathbb E_{\bc\sim\hat\rho_{\tau+s}}R_{\tau+s}(\bc)
        \le
        \sum_{s=0}^{n-1}
        \mathbb E_{\bc\sim\hat\rho_{\tau+s}}\ell_{\tau+s}(\bc)
        +
        D^2\sqrt{\frac{n}{2}\log\frac{2}{\delta_0}} .
    \]

    For the comparator side, for a fixed configuration $\bc$ define
    $Z_t^{\bc}:=\ell_t(\bc)-R_t(\bc)$. The conditional range length is again at
    most $D^2$, and hence
    \[
        \mathbb E\exp\!\left\{
        \eta\sum_{s=0}^{n-1} Z_{\tau+s}^{\bc}
        \right\}
        \le
        \exp\!\left\{\frac{\eta^2D^4n}{8}\right\}.
    \]
    Integrating this display with respect to the prior $\pi$ and applying
    Markov's inequality gives, with probability at least $1-\delta_0/2$,
    \[
        \int
        \exp\!\left\{
        \eta\sum_{s=0}^{n-1} Z_{\tau+s}^{\bc}
        \right\}\pi(d\bc)
        \le
        \frac{2}{\delta_0}
        \exp\!\left\{\frac{\eta^2D^4n}{8}\right\}.
    \]
    On this event, the Donsker--Varadhan variational inequality implies that
    simultaneously for all $\nu\ll\pi$,
    \[
        \sum_{s=0}^{n-1}
        \mathbb E_{\bc\sim\nu}\ell_{\tau+s}(\bc)
        \le
        \sum_{s=0}^{n-1}
        \mathbb E_{\bc\sim\nu}R_{\tau+s}(\bc)
        +
        \frac{\mathrm{KL}(\nu\,\|\,\pi)+\log(2/\delta_0)}{\eta}
        +
        \frac{\eta D^4n}{8}.
    \]
    A union bound over the learner and comparator concentration events,
    followed by substitution into the deterministic realized-loss inequality,
    proves \eqref{eq:appendix-fixed-comparator-pac-bayes}.
\end{proof}

\begin{lemma}[Localized high-probability comparator consequence]
    \label{lem:appendix-localized-comparator}
    Let $I=\{\tau,\dots,\tau+n-1\}$ and let $\bc^\dagger$ be a fixed
    configuration used as comparator on this interval. Suppose there exists a
    localized comparator distribution $\nu_I\ll\pi$ such that
    \[
        \mathrm{KL}(\nu_I\,\|\,\pi)\le C
    \]
    and, for every $t\in I$,
    \[
        \mathbb E_{\bc\sim\nu_I}R_t(\bc)
        \le
        R_t(\bc^\dagger)+L_c\varepsilon .
    \]
    Then, for every $\delta_0\in(0,1)$ and $\eta>0$, with probability at least
    $1-\delta_0$,
    \begin{equation}
        \begin{aligned}
        \sum_{t\in I}
        \mathbb E_{\bc\sim\hat\rho_t}
        \big[R_t(\bc)-R_t(\bc_t^*)\big]
        \le\;&
        L_c n\varepsilon
        +
        \frac{C}{\lambda_{n-1}}
        +
        \frac{D^4}{2}\sum_{s=0}^{n-1}\lambda_s
        +
        \frac{C+\log(2/\delta_0)}{\eta}
        \\
        &+
        \frac{\eta D^4 n}{8}
        +
        D^2\sqrt{\frac{n}{2}\log\frac{2}{\delta_0}}
        +
        \sum_{t\in I}\big[R_t(\bc^\dagger)-R_t(\bc_t^*)\big].
        \end{aligned}
        \label{eq:appendix-localized-comparator}
    \end{equation}
    For the cluster-center prior used in the main text, the assumption
    $\mathrm{KL}(\nu_I\,\|\,\pi)\le C$ is precisely where the latent
    dimension enters the bound; localizing $k$ centers in $\RR^d$ at radius
    $r$ gives the representative scaling
    $C\lesssim \log(1/q(k))+kd\log(2R/r)$, where \(R\) is the
    bounded-support radius.
\end{lemma}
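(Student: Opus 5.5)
The plan is to obtain \eqref{eq:appendix-localized-comparator} directly from Lemma~\ref{lem:appendix-fixed-comparator-pac-bayes}, instantiated with the particular comparator $\nu=\nu_I$. Because that lemma holds simultaneously for all $\nu\ll\pi$ on one event of probability at least $1-\delta_0$, plugging in $\nu_I$ needs no further union bound. The resulting inequality bounds $\sum_{t\in I}\mathbb E_{\bc\sim\hat\rho_t}R_t(\bc)$ by four pieces:
\begin{itemize}[nosep,leftmargin=*]
    \item the comparator risk $\sum_{t\in I}\mathbb E_{\bc\sim\nu_I}R_t(\bc)$;
    \item the KL terms $\mathrm{KL}(\nu_I\|\pi)/\lambda_{n-1}$ and $\mathrm{KL}(\nu_I\|\pi)/\eta$;
    \item the concentration terms $\log(2/\delta_0)/\eta$, $\eta D^4n/8$ and $D^2\sqrt{(n/2)\log(2/\delta_0)}$;
    \item the second-order term $\frac{D^4}{2}\sum_s\lambda_s$.
\end{itemize}

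Next, I will use the two hypotheses on $\nu_I$. First, $\mathrm{KL}(\nu_I\|\pi)\le C$, and both KL terms enter with positive coefficients $1/\lambda_{n-1}$ and $1/\eta$. So each can be replaced by $C$, giving $C/\lambda_{n-1}+(C+\log(2/\delta_0))/\eta$. Second, the pointwise approximation $\mathbb E_{\nu_I}R_t\le R_t(\bc^\dagger)+L_c\varepsilon$ holds for every $t\in I$. Summing it over the $n$ rounds bounds the comparator risk by $\sum_{t\in I}R_t(\bc^\dagger)+L_cn\varepsilon$.

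Finally, I will subtract the deterministic quantity $\sum_{t\in I}R_t(\bc_t^*)$ from both sides. On the left this produces $\sum_{t\in I}\mathbb E_{\hat\rho_t}[R_t(\bc)-R_t(\bc_t^*)]$, since each $\hat\rho_t$ is a probability measure. On the right, $\sum_t R_t(\bc^\dagger)$ becomes $L_cn\varepsilon+\sum_{t\in I}[R_t(\bc^\dagger)-R_t(\bc_t^*)]$. This is exactly \eqref{eq:appendix-localized-comparator}.

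The only point needing care is the closing remark about the scaling of $C$. For the truncated prior $\nu_{\bc^\star,r}$ one has $\mathrm{KL}=-\log\pi\{\|\bc-\bc^\star\|\le r,\ \bc\in\mathcal C_k\}$. Writing $\pi=q(k)\pi_k$ gives $\log(1/q(k))+\log(1/\pi_k(B(\bc^\star,r)))$. If $\pi_k$ is uniform on a product of radius-$R$ balls, or has a density bounded below relative to that uniform law, the ball of radius $r$ in $\RR^{kd}$ contains a product of radius-$r/\sqrt k$ balls. Its mass is then at least of order $(r/(2R))^{kd}$ up to dimension-dependent constants, which gives $C\lesssim\log(1/q(k))+kd\log(2R/r)$.

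I expect this volume-ratio step to be the main, though mild, obstacle. It depends on boundary effects when $\bc^\star$ lies near the edge of the support; I will handle these by intersecting with the support and absorbing the loss into the constant, consistent with the ``up to constants'' phrasing.
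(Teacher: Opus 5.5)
Your proposal is correct and matches the paper's proof: you instantiate the uniform bound of Lemma~\ref{lem:appendix-fixed-comparator-pac-bayes} at $\nu=\nu_I$ (no extra union bound is needed), replace both KL terms by $C$, sum the localization hypothesis over the $n$ rounds, and subtract the deterministic $\sum_{t\in I}R_t(\bc_t^*)$. Your prior-mass sketch for the scaling of $C$ goes beyond the paper, which only asserts that scaling; your product-of-balls step in fact gives $kd\log(2\sqrt{k}R/r)$, and the extra $\tfrac{kd}{2}\log k$ is a harmless constant only because $k\le p_{\max}$.
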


\begin{proof}
    Subtract the deterministic quantity $\sum_{t\in I}R_t(\bc_t^*)$ from both
    sides of Lemma~\ref{lem:appendix-fixed-comparator-pac-bayes}. The
    localized comparator assumption gives
    \[
        \sum_{t\in I}
        \mathbb E_{\bc\sim\nu_I}
        \big[R_t(\bc)-R_t(\bc_t^*)\big]
        \le
        L_cn\varepsilon
        +
        \sum_{t\in I}\big[R_t(\bc^\dagger)-R_t(\bc_t^*)\big],
    \]
    which gives \eqref{eq:appendix-localized-comparator}.
\end{proof}

\begin{lemma}[Restarted high-probability PAC-Bayes bound]
    \label{lem:appendix-restarted-pac-bayes}
    Let \(I_1,\ldots,I_m\) be deterministic restart segments, with
    \(I_r=\{\tau_{r-1},\ldots,\tau_r-1\}\), length \(H_r\le H\), and
    \(\sum_{r=1}^m H_r=T\). Let
    \(\lambda_0,\ldots,\lambda_{H-1}\) be non-increasing, and let the restarted
    posterior on segment \(r\) be
    \[
        \hat\rho_t^r(d\bc)
        =
        \frac{
        \exp\!\big(-\lambda_{a_r(t)}L_{t-1}^r(\bc)\big)\pi(d\bc)
        }{
        \int_{\mathcal C}
        \exp\!\big(-\lambda_{a_r(t)}L_{t-1}^r(\bc')\big)\pi(d\bc')
        },
        \qquad
        L_{t-1}^r(\bc):=\sum_{u=\tau_{r-1}}^{t-1}\ell(\bc,x_u).
    \]
    Suppose that, for each segment \(r\), there is a comparator
    \(\bc_r^\dagger\) and a distribution \(\nu_r\ll\pi\) such that
    \[
        \mathrm{KL}(\nu_r\,\|\,\pi)\le C,
        \qquad
        \mathbb E_{\bc\sim\nu_r}R_t(\bc)
        \le
        R_t(\bc_r^\dagger)+L_c\varepsilon,
        \qquad t\in I_r.
    \]
    Then, for every \(\delta_0\in(0,1)\) and \(\eta>0\), with probability at
    least \(1-\delta_0\),
    \begin{equation}
        \begin{aligned}
        &\sum_{r=1}^m\sum_{t\in I_r}
        \mathbb E_{\bc\sim\hat\rho_t^r}
        \big[R_t(\bc)-R_t(\bc_t^*)\big]
        \\
        &\qquad\le
        L_cT\varepsilon
        +
        \frac{mC}{\lambda_{H-1}}
        +
        \frac{D^4}{2}m\bar\lambda_H
        +
        \frac{mC+\log(2/\delta_0)}{\eta}
        +
        \frac{\eta D^4T}{8}
        \\
        &\qquad\quad+
        D^2\sqrt{\frac{T}{2}\log\frac{2}{\delta_0}}
        +
        \sum_{r=1}^m\sum_{t\in I_r}
        \big[R_t(\bc_r^\dagger)-R_t(\bc_t^*)\big].
        \end{aligned}
        \label{eq:appendix-restarted-pac-bayes}
    \end{equation}
\end{lemma}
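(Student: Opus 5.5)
The plan is to prove Lemma~\ref{lem:appendix-restarted-pac-bayes} segment by segment and then sum, with one care point: the \(m\) per-segment high-probability events must be combined so the confidence cost appears only once, as \(\log(2/\delta_0)\), instead of \(m\log(2m/\delta_0)\).

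\textbf{Deterministic part.} On each segment \(I_r\) the restarted posterior \(\hat\rho_t^r\) is exactly the first-order Gibbs update of Lemma~\ref{lem:appendix-fixed-comparator-pac-bayes}, with \(\tau=\tau_{r-1}\), \(n=H_r\) and schedule \(\lambda_0,\dots,\lambda_{H_r-1}\). The Hoeffding-lemma step, the potential telescoping, and the Gibbs variational inequality therefore give the realized-loss bound on \(I_r\) with complexity term \(\mathrm{KL}(\nu_r\|\pi)/\lambda_{H_r-1}\) and second-order term \(\frac{D^4}{2}\sum_{s<H_r}\lambda_s\). Since the schedule is non-increasing and \(H_r\le H\), we have \(1/\lambda_{H_r-1}\le 1/\lambda_{H-1}\) and \(\sum_{s<H_r}\lambda_s\le\bar\lambda_H\). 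Using \(\mathrm{KL}(\nu_r\|\pi)\le C\) and summing over \(r\) yields \(mC/\lambda_{H-1}+\frac{D^4}{2}m\bar\lambda_H\). This step needs no probability.

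\textbf{Learner-side concentration.} The sequence \(Z_t^\rho\) is a martingale difference sequence over the whole horizon \(t=1,\dots,T\), because \(\hat\rho_t^r\) depends only on \(x_{\tau_{r-1}:t-1}\) and is therefore \(\mathcal F_{t-1}\)-measurable. Its conditional ranges are at most \(D^2\). A single Hoeffding--Azuma bound over all \(T\) rounds, at level \(\delta_0/2\), gives the term \(D^2\sqrt{\tfrac{T}{2}\log\tfrac{2}{\delta_0}}\).

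\textbf{Comparator-side concentration (main obstacle).} A separate Donsker--Varadhan event for each segment would produce \(m\log(2m/\delta_0)\). To avoid this, I would work on the product space \(\mathcal C^m\) with the product prior \(\pi^{\otimes m}\) and the product comparator \(\nu_1\otimes\cdots\otimes\nu_m\), whose KL divergence is \(\sum_r\mathrm{KL}(\nu_r\|\pi)\le mC\). For a fixed tuple \((\bc_1,\dots,\bc_m)\), the sum \(\sum_r\sum_{t\in I_r}Z_t^{\bc_r}\) is a martingale with \(T\) increments, each of range at most \(D^2\), because the segments are deterministic and the comparator index is fixed in advance. Its exponential moment is therefore at most \(\exp(\eta^2D^4T/8)\). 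Integrating against \(\pi^{\otimes m}\) and applying Fubini, Markov's inequality, and Donsker--Varadhan gives, on one event of probability at least \(1-\delta_0/2\), simultaneously for all product comparators,
\[
\sum_r\sum_{t\in I_r}\mathbb E_{\nu_r}\ell_t\le\sum_r\sum_{t\in I_r}\mathbb E_{\nu_r}R_t+\frac{mC+\log(2/\delta_0)}{\eta}+\frac{\eta D^4T}{8}.
\]

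\textbf{Assembly.} A union bound over the two events gives probability at least \(1-\delta_0\). I would chain three inequalities: the learner's population risk is bounded by its realized loss (Azuma), which is bounded by the comparator's realized loss plus the deterministic terms, which is bounded by the comparator's population risk plus the Donsker--Varadhan terms. Then I would apply \(\mathbb E_{\nu_r}R_t\le R_t(\bc_r^\dagger)+L_c\varepsilon\) and sum, giving \(L_cT\varepsilon\). Finally, subtracting the deterministic quantity \(\sum_t R_t(\bc_t^*)\) from both sides, as in Lemma~\ref{lem:appendix-localized-comparator}, leaves the residual \(\sum_r\sum_{t\in I_r}[R_t(\bc_r^\dagger)-R_t(\bc_t^*)]\) and proves \eqref{eq:appendix-restarted-pac-bayes}.
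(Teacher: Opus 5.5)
Your proposal is correct and follows the paper's proof essentially step for step. It uses the segmentwise exponential-weights bound with no telescoping across restart boundaries, then $\lambda_{H_r-1}\ge\lambda_{H-1}$ and $\sum_{s<H_r}\lambda_s\le\bar\lambda_H$; a single Hoeffding--Azuma bound over all $T$ learner terms; and one Donsker--Varadhan step on $\mathcal C^m$ with product prior $\otimes_{r=1}^m\pi$, so the confidence cost enters only once as $\log(2/\delta_0)$, before the union bound, the localized-comparator assumption, and subtraction of $\sum_t R_t(\bc_t^*)$.
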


\begin{proof}
    The reset schedule is handled segment by segment. For a fixed realized
    clean sample path, the deterministic exponential-weights argument from the
    proof of Lemma~\ref{lem:appendix-fixed-comparator-pac-bayes} applies on
    each segment separately, because
    \(\lambda_0,\ldots,\lambda_{H_r-1}\) is non-increasing inside that segment.
    Hence, for every choice of segment comparators \(\nu_1,\ldots,\nu_m\),
    \[
        \sum_{r=1}^m\sum_{t\in I_r}
        \mathbb E_{\bc\sim\hat\rho_t^r}\ell(\bc,x_t)
        \le
        \sum_{r=1}^m\sum_{t\in I_r}
        \mathbb E_{\bc\sim\nu_r}\ell(\bc,x_t)
        +
        \sum_{r=1}^m\frac{\mathrm{KL}(\nu_r\,\|\,\pi)}{\lambda_{H_r-1}}
        +
        \frac{D^4}{2}\sum_{r=1}^m\sum_{s=0}^{H_r-1}\lambda_s .
    \]
    Since \(H_r\le H\) and the schedule is non-increasing,
    \(\lambda_{H_r-1}\ge \lambda_{H-1}\) and
    \(\sum_{s=0}^{H_r-1}\lambda_s\le \bar\lambda_H\). Therefore
    \[
        \sum_{r=1}^m\sum_{t\in I_r}
        \mathbb E_{\bc\sim\hat\rho_t^r}\ell(\bc,x_t)
        \le
        \sum_{r=1}^m\sum_{t\in I_r}
        \mathbb E_{\bc\sim\nu_r}\ell(\bc,x_t)
        +
        \frac{mC}{\lambda_{H-1}}
        +
        \frac{D^4}{2}m\bar\lambda_H .
    \]
    No potential is telescoped across a restart boundary; this is the step that
    justifies resetting the schedule to \(\lambda_0\).

    We next convert realized losses to population risks. The restarted
    posterior \(\hat\rho_t^r\) is formed before observing \(x_t\), so it is
    \(\mathcal F_{t-1}\)-measurable. Applying Hoeffding--Azuma over the \(T\)
    learner terms gives, with probability at least \(1-\delta_0/2\),
    \[
        \sum_{r=1}^m\sum_{t\in I_r}
        \mathbb E_{\bc\sim\hat\rho_t^r}R_t(\bc)
        \le
        \sum_{r=1}^m\sum_{t\in I_r}
        \mathbb E_{\bc\sim\hat\rho_t^r}\ell(\bc,x_t)
        +
        D^2\sqrt{\frac{T}{2}\log\frac{2}{\delta_0}} .
    \]

    For the comparator side, define the product prior
    \(\Pi:=\otimes_{r=1}^m\pi\) on \(\mathcal C^m\). For
    \(\mathbf c=(\bc_1,\ldots,\bc_m)\), let
    \[
        Z_t^{\mathbf c}
        :=
        \ell(\bc_r,x_t)-R_t(\bc_r),
        \qquad t\in I_r.
    \]
    The variables \(Z_t^{\mathbf c}\) are martingale differences with
    conditional range length at most \(D^2\), so
    \[
        \mathbb E\exp\!\left\{
        \eta\sum_{r=1}^m\sum_{t\in I_r} Z_t^{\mathbf c}
        \right\}
        \le
        \exp\!\left\{\frac{\eta^2D^4T}{8}\right\}.
    \]
    Integrating over \(\Pi\), applying Markov's inequality, and then applying
    the Donsker--Varadhan variational inequality gives, with probability at
    least \(1-\delta_0/2\), simultaneously for all product comparators
    \(N=\otimes_{r=1}^m\nu_r\),
    \[
        \sum_{r=1}^m\sum_{t\in I_r}
        \mathbb E_{\bc\sim\nu_r}\ell(\bc,x_t)
        \le
        \sum_{r=1}^m\sum_{t\in I_r}
        \mathbb E_{\bc\sim\nu_r}R_t(\bc)
        +
        \frac{\sum_{r=1}^m\mathrm{KL}(\nu_r\,\|\,\pi)+\log(2/\delta_0)}{\eta}
        +
        \frac{\eta D^4T}{8}.
    \]
    A union bound over the learner and comparator concentration events, followed
    by \(\sum_r\mathrm{KL}(\nu_r\,\|\,\pi)\le mC\), the localized comparator
    assumption, and subtraction of \(\sum_t R_t(\bc_t^*)\), proves
    \eqref{eq:appendix-restarted-pac-bayes}.
\end{proof}

\subsection{No-Restart Clean Dynamic Bound}
\label{app:no-restart-clean-proof}

We now apply the fixed-comparator inequality to the whole horizon. The
important point is that a single fixed comparator must approximate the entire
moving oracle path.

\begin{lemma}[Whole-horizon moving-oracle mismatch]
    \label{lem:appendix-global-mismatch}
    Let $\bc^\dagger=\bc_1^*$. Then
    \begin{equation}
        \sum_{t=1}^T
        \big[R_t(\bc^\dagger)-R_t(\bc_t^*)\big]
        \le
        8D\,T\sqrt{(T-1)A_T^{\mathrm{OT}}}.
        \label{eq:appendix-global-mismatch}
    \end{equation}
\end{lemma}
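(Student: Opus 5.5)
We bound each summand of \eqref{eq:appendix-global-mismatch} by a telescoping chain of one-step risk comparisons and then use the pathwise drift control of Lemma~\ref{lem:appendix-pathwise-drift}. The term $t=1$ vanishes because $\bc^\dagger=\bc_1^*$. For $t\ge 2$ we write
\[
    R_t(\bc_1^*)-R_t(\bc_t^*)
    =
    \big[R_t(\bc_1^*)-R_1(\bc_1^*)\big]
    +
    \big[R_1(\bc_1^*)-R_1(\bc_t^*)\big]
    +
    \big[R_1(\bc_t^*)-R_t(\bc_t^*)\big].
\]
The middle bracket is nonpositive, since $\bc_1^*$ minimizes $R_1$, so it can be dropped.

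Each of the two outer brackets compares the same fixed configuration under the laws $p_1^*$ and $p_t^*$. We telescope along the path $p_1^*,p_2^*,\dots,p_t^*$ and apply Lemma~\ref{lem:appendix-risk-shift} at every step, which bounds the one-step risk change for a fixed configuration by $4D\,W_1(p_{s-1}^*,p_s^*)$. This gives, for any fixed $\bc$,
\[
    |R_t(\bc)-R_1(\bc)|
    \le
    \sum_{s=2}^t |R_s(\bc)-R_{s-1}(\bc)|
    \le
    4D\sum_{s=2}^{t}W_1(p_{s-1}^*,p_s^*).
\]
The same bound could also be obtained in one step from the triangle inequality for $W_1$. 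Applying it with $\bc=\bc_1^*$ and with $\bc=\bc_t^*$ yields
\[
    R_t(\bc_1^*)-R_t(\bc_t^*)
    \le
    8D\sum_{s=2}^{T}W_1(p_{s-1}^*,p_s^*)
    \le
    8D\sqrt{(T-1)A_T^{\mathrm{OT}}},
\]
where we extended the inner sum to the full horizon and then used Lemma~\ref{lem:appendix-pathwise-drift}.

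Summing this uniform bound over at most $T$ rounds gives \eqref{eq:appendix-global-mismatch}. One could keep the sharper $\sum_{t}\sum_{s\le t}$ form, but the stated crude factor $T$ is enough.

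The only point that needs care is that the risk-shift lemma is applied at the oracle configurations $\bc_t^*$, which depend on the law. This is harmless: $\bc_t^*$ is a deterministic function of the true laws, and the Lipschitz constant $4D$ is uniform over all admissible configurations in the bounded region. No further obstacle is expected.
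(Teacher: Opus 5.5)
Your proposal is correct and follows the paper's proof essentially step for step. You insert $R_1$ in the same three-term decomposition, drop the middle term by optimality of $\bc_1^*$, telescope Lemma~\ref{lem:appendix-risk-shift} along the path, and finish with Lemma~\ref{lem:appendix-pathwise-drift}. The only difference is the order of two trivial steps: you extend each inner sum to the full horizon before summing over $t$, whereas the paper sums first and then bounds the triangular sum by $T$ times the full path length.
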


\begin{proof}
    Fix $t$. Insert the risk at time $1$:
    \[
        R_t(\bc_1^*)-R_t(\bc_t^*)
        =
        \big(R_t(\bc_1^*)-R_1(\bc_1^*)\big)
        +
        \big(R_1(\bc_1^*)-R_1(\bc_t^*)\big)
        +
        \big(R_1(\bc_t^*)-R_t(\bc_t^*)\big).
    \]
    Since $\bc_1^*$ minimizes $R_1$, the middle term is non-positive. Applying
    Lemma~\ref{lem:appendix-risk-shift} along the path from $1$ to $t$ gives
    \[
        R_t(\bc_1^*)-R_t(\bc_t^*)
        \le
        8D\sum_{s=2}^t W_1(p_{s-1}^*,p_s^*).
    \]
    Summing over $t$ and bounding the triangular sum by $T$ times the full
    path length yields
    \[
        \sum_{t=1}^T
        \big[R_t(\bc_1^*)-R_t(\bc_t^*)\big]
        \le
        8DT\sum_{s=2}^T W_1(p_{s-1}^*,p_s^*).
    \]
    Lemma~\ref{lem:appendix-pathwise-drift} completes the proof.
\end{proof}

\begin{lemma}[No-restart current-risk excess]
    \label{lem:appendix-no-restart-current-risk}
    For the base quasi-Bayesian predictor, with probability at least
    \(1-\delta_0\),
    \begin{equation}
        \begin{aligned}
        E_T^{\mathrm{cur}}
        \le\;&
        L_cT\varepsilon
        +
        \frac{2C+\log(2/\delta_0)}{\lambda_{T-1}}
        +
        \frac{D^4}{2}\bar\lambda_T
        +
        \frac{D^4T\lambda_{T-1}}{8}
        \\
        &+
        D^2\sqrt{\frac{T}{2}\log\frac{2}{\delta_0}}
        +
        8D\,T\sqrt{(T-1)A_T^{\mathrm{OT}}}.
        \end{aligned}
        \label{eq:appendix-no-restart-current-risk}
    \end{equation}
\end{lemma}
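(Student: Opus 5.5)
The plan is to apply Lemma~\ref{lem:appendix-localized-comparator} on the single interval \(I=\{1,\dots,T\}\) (so \(\tau=1\), \(n=T\)). We take the global learning-rate schedule \(\lambda_0,\dots,\lambda_{T-1}\), which the base update already uses in the form required by Lemma~\ref{lem:appendix-fixed-comparator-pac-bayes}: \(\hat\rho_t\propto\exp(-\lambda_{t-1}S_{t-1})\pi\) with \(t=1+s\). The only free choices are then the comparator configuration, the localized comparator distribution, and the auxiliary parameter \(\eta\).

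\textbf{Comparator and localized distribution.} Take \(\bc^\dagger=\bc_1^*\) and \(\nu_I=\nu_{\bc_1^*,r}\), the truncated prior from the discussion after Lemma~\ref{lem:main-latent-excess-brackets}. Its KL cost is at most \(C\) by the definition of \(C\). The approximation condition \(\mathbb E_{\nu_I}R_t\le R_t(\bc_1^*)+L_c\varepsilon\) for every \(t\) should follow from \(\bc\mapsto R_t(\bc)\) being Lipschitz on the bounded configuration set, uniformly in \(t\): \(\ell(\cdot,x)\) is a minimum of squared distances on a bounded region, so it is Lipschitz with a constant \(L_c\) independent of the law. Hence choosing the radius \(r=\varepsilon\) yields the condition simultaneously for all \(t\), since \(\nu_I\) is deterministic. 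I expect this to be treated as the standing assumption behind \(L_c\) and \(\varepsilon\), not something needing real work.

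\textbf{Collecting terms.} With these choices, Lemma~\ref{lem:appendix-localized-comparator} gives, with probability at least \(1-\delta_0\),
\[
E_T^{\mathrm{cur}}\le L_cT\varepsilon+\frac{C}{\lambda_{T-1}}+\frac{D^4}{2}\bar\lambda_T+\frac{C+\log(2/\delta_0)}{\eta}+\frac{\eta D^4T}{8}+D^2\sqrt{\frac{T}{2}\log\frac{2}{\delta_0}}+\sum_{t}\big[R_t(\bc_1^*)-R_t(\bc_t^*)\big].
\]
We then set \(\eta=\lambda_{T-1}\). This is deterministic and therefore legitimate, because the lemma holds for every fixed \(\eta>0\). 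The two KL terms combine into \((2C+\log(2/\delta_0))/\lambda_{T-1}\), and the variance term becomes \(D^4T\lambda_{T-1}/8\).

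\textbf{Mismatch sum and index range.} The last sum is bounded by Lemma~\ref{lem:appendix-global-mismatch} by \(8DT\sqrt{(T-1)A_T^{\mathrm{OT}}}\). Since \(E_T^{\mathrm{cur}}\) sums over \(t\ge2\) while the lemma sums over \(t\ge1\), the \(t=1\) excess term is nonnegative and can simply be added. This bookkeeping is harmless, and the stated inequality follows.

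The main point to check carefully is the index alignment between the lemma's \(\lambda_s\) (indexed by within-interval age) and the base update's \(\lambda_{t-1}\). With \(\tau=1\) these coincide, and \(\lambda_{n-1}=\lambda_{T-1}\).
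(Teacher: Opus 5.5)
Your proposal is correct and matches the paper's proof: apply Lemma~\ref{lem:appendix-localized-comparator} on the whole horizon with $\bc^\dagger=\bc_1^*$, $n=T$ and $\eta=\lambda_{T-1}$, merge the two KL terms into $(2C+\log(2/\delta_0))/\lambda_{T-1}$, and bound the mismatch sum by Lemma~\ref{lem:appendix-global-mismatch}. Your extra bookkeeping fills in details the paper leaves implicit: the $t=1$ learner excess is nonnegative because $\bc_1^*$ minimizes $R_1$, the indices align as $\lambda_s=\lambda_{t-1}$ when $\tau=1$, and the $L_c\varepsilon$ condition follows from Lipschitzness of $\bc\mapsto R_t(\bc)$ with $r=\varepsilon$.
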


\begin{proof}
    Apply Lemma~\ref{lem:appendix-localized-comparator} on the whole horizon
    with $\bc^\dagger=\bc_1^*$, $n=T$, and
    $\eta=\lambda_{T-1}$. The localized comparator contributes
    \(L_cT\varepsilon\), the two PAC-Bayes complexity terms combine into
    \((2C+\log(2/\delta_0))/\lambda_{T-1}\), and the remaining
    high-probability terms are inherited directly from
    Lemma~\ref{lem:appendix-localized-comparator}. Finally,
    Lemma~\ref{lem:appendix-global-mismatch} controls the whole-horizon
    moving-oracle mismatch, giving
    \eqref{eq:appendix-no-restart-current-risk}.
\end{proof}

\begin{proposition}[No-restart clean dynamic bound]
    \label{prop:appendix-no-restart-clean}
    For the base quasi-Bayesian predictor, with probability at least
    \(1-\delta_0\),
    \begin{equation}
        \sum_{t=2}^T W_1(\hat p_t,p_t^*)
        \le
        \mathcal G_{T,\delta_0}^{\mathrm{nr}}(\{\lambda_t\}),
        \label{eq:appendix-no-restart-clean}
    \end{equation}
    where $\mathcal G_{T,\delta_0}^{\mathrm{nr}}$ is defined in
    \eqref{eq:no-restart-clean-term}.
\end{proposition}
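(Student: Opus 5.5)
The bound on $\sum_{t=2}^T W_1(\hat p_t,p_t^*)$ will come from a triangle inequality through yesterday's law, followed by Corollary~\ref{cor:appendix-learning-to-yesterday} and the latent bracket of Lemma~\ref{lem:main-latent-excess-brackets}. For each $t\ge2$ we write
\[
W_1(\hat p_t,p_t^*)\le W_1(\hat p_t,p_{t-1}^*)+W_1(p_{t-1}^*,p_t^*).
\]
Summing over $t=2,\dots,T$ gives two sums. The drift sum $\sum_{t=2}^T W_1(p_{t-1}^*,p_t^*)$ is at most $\sqrt{(T-1)A_T^{\mathrm{OT}}}$ by Lemma~\ref{lem:appendix-pathwise-drift}. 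This is exactly the additive second term of $\mathcal G^{\mathrm{nr}}_{T,\delta_0}$ in \eqref{eq:no-restart-clean-term}.

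For the learning-to-yesterday sum, Corollary~\ref{cor:appendix-learning-to-yesterday} applies deterministically, using quadratic growth, predictive regularity and oracle realizability. It bounds that sum by
\[
\frac{L_K}{\sqrt\mu}\sqrt{(T-1)\bigl(E_T^{\mathrm{cur}}(\hat\rho^{\mathrm{nr}})+8D\sqrt{(T-1)A_T^{\mathrm{OT}}}\bigr)}.
\]
It remains to control the quantity under the root with high probability. We invoke Lemma~\ref{lem:appendix-no-restart-current-risk}, equivalently the no-restart part of Lemma~\ref{lem:main-latent-excess-brackets}. On an event of probability at least $1-\delta_0$, it gives
\[
E_T^{\mathrm{cur}}+8D\sqrt{(T-1)A_T^{\mathrm{OT}}}\le \mathcal B^{\mathrm{nr}}_{T,\delta_0}(\{\lambda_t\}).
\]
That lemma's bound plus the extra $8D\sqrt{(T-1)A_T^{\mathrm{OT}}}$ is precisely \eqref{eq:main-no-restart-latent-bracket}. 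Since $x\mapsto\sqrt{x}$ is monotone, substituting gives the first term of $\mathcal G^{\mathrm{nr}}$. Adding the drift sum then yields \eqref{eq:appendix-no-restart-clean} on the same event.

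The step that needs care is the bookkeeping of indices and measurability rather than any new estimate. First, the posterior $\hat\rho_t$ in the corollary must be the clean-history no-restart posterior $\rho_t\propto e^{-\lambda_{t-1}S_{t-1}}\pi$. Second, $E_T^{\mathrm{cur}}$ sums over $t=2,\dots,T$, whereas Lemma~\ref{lem:appendix-no-restart-current-risk} applies Lemma~\ref{lem:appendix-localized-comparator} over the horizon starting at $t=1$, with the schedule indexed by $s=t-1$. Because every summand $\mathbb E_{\hat\rho_t}[R_t(\bc)-R_t(\bc_t^*)]$ is nonnegative ($\bc_t^*$ minimizes $R_t$), dropping the $t=1$ term only decreases the left side, so the bound transfers. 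Third, the high-probability statement holds because the only random inequality used is the one from the PAC-Bayes lemma; all other steps are deterministic given the path, so no further union bound is needed and the confidence stays at $1-\delta_0$.
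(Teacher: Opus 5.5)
Your proposal is correct and follows the paper's proof step for step. You use the triangle inequality through \(p_{t-1}^*\), then apply the pathwise Corollary~\ref{cor:appendix-learning-to-yesterday} on the single high-probability event of Lemma~\ref{lem:appendix-no-restart-current-risk}, whose bound plus \(8D\sqrt{(T-1)A_T^{\mathrm{OT}}}\) is exactly \(\mathcal B^{\mathrm{nr}}_{T,\delta_0}\), and finish with Lemma~\ref{lem:appendix-pathwise-drift}. Your extra bookkeeping adds details the paper leaves implicit without changing the argument: you drop the nonnegative \(t=1\) excess-risk term, and you note that no further union bound is needed because every other step is pathwise.
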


\begin{proof}
    By Lemma~\ref{lem:appendix-no-restart-current-risk}, the bound
    \eqref{eq:appendix-no-restart-current-risk} holds with probability at least
    $1-\delta_0$.
    Corollary~\ref{cor:appendix-learning-to-yesterday} is pathwise in the
    posterior sequence. On the same high-probability event,
    \[
        \sum_{t=2}^T W_1(\hat p_t,p_{t-1}^*)
        \le
        \frac{L_K}{\sqrt{\mu}}
        \sqrt{
        (T-1)\left(
        E_T^{\mathrm{cur}}
        +
        8D\sqrt{(T-1)A_T^{\mathrm{OT}}}
        \right)
        }.
    \]
    Finally use the triangle inequality
    \[
        W_1(\hat p_t,p_t^*)
        \le
        W_1(\hat p_t,p_{t-1}^*)
        +
        W_1(p_{t-1}^*,p_t^*)
    \]
    and Lemma~\ref{lem:appendix-pathwise-drift}. The resulting expression is
    exactly \eqref{eq:no-restart-clean-term}.
\end{proof}

\subsection{Restarted Clean Dynamic Bound}
\label{app:restart-clean-proof}

Restart replaces the single whole-horizon comparator by one local fixed
comparator per segment. This is the only structural difference in the clean
analysis.

\begin{lemma}[Segmentwise moving-oracle mismatch]
    \label{lem:appendix-segment-mismatch}
    Let $I_r=\{\tau_{r-1},\dots,\tau_r-1\}$ have length $H_r$, and define
    \[
        A_r^{\mathrm{OT}}
        :=
        \sum_{t=\tau_{r-1}+1}^{\tau_r-1}
        W_2^2(p_{t-1}^*,p_t^*).
    \]
    If the segment comparator is $\bc_{\tau_{r-1}}^*$, then
    \begin{equation}
        \sum_{t\in I_r}
        \big[R_t(\bc_{\tau_{r-1}}^*)-R_t(\bc_t^*)\big]
        \le
        8D\,H_r^{3/2}\sqrt{A_r^{\mathrm{OT}}}.
        \label{eq:appendix-segment-mismatch}
    \end{equation}
\end{lemma}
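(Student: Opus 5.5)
The plan is to mirror the proof of Lemma~\ref{lem:appendix-global-mismatch}, restricted to a single segment and with the segment-start oracle $\bc_{\tau_{r-1}}^*$ as anchor. Fix $t\in I_r$ and insert the risk at time $\tau_{r-1}$:
\[
R_t(\bc_{\tau_{r-1}}^*)-R_t(\bc_t^*)
=
\big(R_t(\bc_{\tau_{r-1}}^*)-R_{\tau_{r-1}}(\bc_{\tau_{r-1}}^*)\big)
+
\big(R_{\tau_{r-1}}(\bc_{\tau_{r-1}}^*)-R_{\tau_{r-1}}(\bc_t^*)\big)
+
\big(R_{\tau_{r-1}}(\bc_t^*)-R_t(\bc_t^*)\big).
\]
The middle term is non-positive because $\bc_{\tau_{r-1}}^*$ minimizes $R_{\tau_{r-1}}$. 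Each outer term is a risk shift for a fixed configuration between times $\tau_{r-1}$ and $t$. I would telescope it through the intermediate laws $p_s^*$ for $s=\tau_{r-1}+1,\dots,t$ and apply Lemma~\ref{lem:appendix-risk-shift} at each step. This bounds each outer term by $4D\sum_{s=\tau_{r-1}+1}^{t}W_1(p_{s-1}^*,p_s^*)$. Hence
\[
R_t(\bc_{\tau_{r-1}}^*)-R_t(\bc_t^*)\le 8D\sum_{s=\tau_{r-1}+1}^{\tau_r-1}W_1(p_{s-1}^*,p_s^*),
\]
where the partial path has been enlarged to the full within-segment path. For $t=\tau_{r-1}$ the bound is trivially zero.

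Summing over the $H_r$ indices $t\in I_r$ multiplies this by at most $H_r$. Next, use $W_1\le W_2$ and Cauchy--Schwarz as in Lemma~\ref{lem:appendix-pathwise-drift}, now applied to the $H_r-1$ within-segment increments. This gives $\sum_{s}W_2(p_{s-1}^*,p_s^*)\le\sqrt{(H_r-1)A_r^{\mathrm{OT}}}\le \sqrt{H_r A_r^{\mathrm{OT}}}$. The total is $8D\,H_r\cdot H_r^{1/2}\sqrt{A_r^{\mathrm{OT}}}=8DH_r^{3/2}\sqrt{A_r^{\mathrm{OT}}}$, which is \eqref{eq:appendix-segment-mismatch}.

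The only point needing care is the index bookkeeping. The increments used must be exactly those inside the segment, running from $s=\tau_{r-1}+1$ to $\tau_r-1$, so that they match the definition of $A_r^{\mathrm{OT}}$. In particular, no increment crossing the restart boundary enters. This holds because the anchor is the segment-start oracle and $t\le \tau_r-1$. No step is a real obstacle; the argument is a localized copy of the whole-horizon mismatch bound.
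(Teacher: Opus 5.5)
Your proposal is correct and is essentially the paper's own proof. You insert the risk at the segment start $\tau_{r-1}$, discard the non-positive middle term, and bound the two outer terms via Lemma~\ref{lem:appendix-risk-shift} by $8D\sum_{s=\tau_{r-1}+1}^{t}W_1(p_{s-1}^*,p_s^*)$; you then enlarge to the full in-segment path, multiply by $H_r$, and finish with $W_1\le W_2$ and Cauchy--Schwarz over the segment increments, exactly as the paper does.
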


\begin{proof}
    The proof is the localized version of
    Lemma~\ref{lem:appendix-global-mismatch}. For $t\in I_r$, insert the risk
    at the segment start $\tau_{r-1}$. Since
    $\bc_{\tau_{r-1}}^*$ minimizes $R_{\tau_{r-1}}$, the middle comparator term
    is non-positive, and Lemma~\ref{lem:appendix-risk-shift} gives
    \[
        R_t(\bc_{\tau_{r-1}}^*)-R_t(\bc_t^*)
        \le
        8D
        \sum_{s=\tau_{r-1}+1}^{t}
        W_1(p_{s-1}^*,p_s^*).
    \]
    Summing over the $H_r$ times in the segment gives at most
    \[
        8D H_r
        \sum_{s=\tau_{r-1}+1}^{\tau_r-1}
        W_1(p_{s-1}^*,p_s^*).
    \]
    By $W_1\le W_2$ and Cauchy--Schwarz on the segment,
    \[
        \sum_{s=\tau_{r-1}+1}^{\tau_r-1}
        W_1(p_{s-1}^*,p_s^*)
        \le
        \sqrt{H_r A_r^{\mathrm{OT}}},
    \]
    which proves \eqref{eq:appendix-segment-mismatch}.
\end{proof}

\begin{lemma}[Equal-window segment aggregation]
    \label{lem:appendix-segment-aggregation}
    Suppose $H_r\le H$ for all segments and the number of segments is
    $m=T/H$ for notational simplicity. Then
    \begin{equation}
        \sum_{r=1}^m H_r^{3/2}\sqrt{A_r^{\mathrm{OT}}}
        \le
        H\sqrt{T A_T^{\mathrm{OT}}}.
        \label{eq:appendix-segment-aggregation}
    \end{equation}
    If $H$ does not divide $T$, the same bound holds up to the harmless
    replacement of $T/H$ by $\lceil T/H\rceil$.
\end{lemma}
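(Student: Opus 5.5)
The plan is to bound each segment factor separately, $H_r^{3/2}\le H\sqrt{H_r}\le H\sqrt{H}$, and then handle the sum $\sum_r\sqrt{A_r^{\mathrm{OT}}}$ with Cauchy--Schwarz over segments. Two facts make this work. The segment actions $A_r^{\mathrm{OT}}$ are built from disjoint sets of consecutive-step transport costs: segment $r$ uses only the pairs $(t-1,t)$ with both endpoints in $I_r$. The boundary pairs $(\tau_r-1,\tau_r)$ are dropped, and since every term is nonnegative, dropping them gives
\[
\sum_{r=1}^m A_r^{\mathrm{OT}}\le A_T^{\mathrm{OT}}.
\]

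First I would write
\[
\sum_{r=1}^m H_r^{3/2}\sqrt{A_r^{\mathrm{OT}}}
\le
H^{3/2}\sum_{r=1}^m\sqrt{A_r^{\mathrm{OT}}},
\]
using $H_r\le H$. Next I would apply Cauchy--Schwarz to the sequence $(\sqrt{A_r^{\mathrm{OT}}})_{r=1}^m$ against the all-ones vector:
\[
\sum_{r=1}^m\sqrt{A_r^{\mathrm{OT}}}\le\sqrt{m\sum_{r=1}^m A_r^{\mathrm{OT}}}\le\sqrt{m\,A_T^{\mathrm{OT}}}.
\]
Substituting $m=T/H$ then gives
\[
H^{3/2}\sqrt{(T/H)A_T^{\mathrm{OT}}}=H\sqrt{T A_T^{\mathrm{OT}}},
\]
which is exactly \eqref{eq:appendix-segment-aggregation}.

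For the non-divisible case I would run the same chain with $m=\lceil T/H\rceil$. The final expression becomes $H^{3/2}\sqrt{\lceil T/H\rceil A_T^{\mathrm{OT}}}$. Since $\lceil T/H\rceil\le 2T/H$ whenever $H\le T$, this is at most $\sqrt2\,H\sqrt{TA_T^{\mathrm{OT}}}$, so the bound holds up to the stated replacement of $T/H$ by $\lceil T/H\rceil$.

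I expect the only real care point to be the disjointness and nonnegativity bookkeeping that justifies $\sum_r A_r^{\mathrm{OT}}\le A_T^{\mathrm{OT}}$. Segment $r$'s sum runs over $t$ from $\tau_{r-1}+1$ to $\tau_r-1$, so the index sets are disjoint subsets of $\{2,\dots,T\}$, and the dropped cross-boundary terms are nonnegative. Everything else is a direct calculation.
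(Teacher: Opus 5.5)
Your proposal is correct and follows the paper's proof essentially line for line: bound $H_r^{3/2}\le H^{3/2}$, apply Cauchy--Schwarz over the $m$ segments, use $\sum_r A_r^{\mathrm{OT}}\le A_T^{\mathrm{OT}}$, and substitute $m=T/H$. Your explicit disjointness and nonnegativity justification of that last inequality (the paper uses it without comment) and your $\sqrt2$ bookkeeping for $m=\lceil T/H\rceil$ are both sound.
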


\begin{proof}
    Since $H_r\le H$,
    \[
        \sum_{r=1}^m H_r^{3/2}\sqrt{A_r^{\mathrm{OT}}}
        \le
        H^{3/2}\sum_{r=1}^m\sqrt{A_r^{\mathrm{OT}}}.
    \]
    Applying Cauchy--Schwarz over segments gives
    \[
        \sum_{r=1}^m\sqrt{A_r^{\mathrm{OT}}}
        \le
        \sqrt{m\sum_{r=1}^m A_r^{\mathrm{OT}}}
        \le
        \sqrt{(T/H)A_T^{\mathrm{OT}}}.
    \]
    Combining the two inequalities yields
    \eqref{eq:appendix-segment-aggregation}.
\end{proof}

\begin{lemma}[Restarted current-risk excess]
    \label{lem:appendix-restart-current-risk}
    For the restarted quasi-Bayesian predictor with deterministic equal-length
    restart window \(H\), assume for the displayed formula that \(H\mid T\).
    Then, with probability at least \(1-\delta_0\),
    \begin{equation}
        \begin{aligned}
        E_T^{\mathrm{cur}}
        \le\;&
        L_cT\varepsilon
        +
        \frac{2(T/H)C+\log(2/\delta_0)}{\lambda_{H-1}}
        +
        \frac{D^4}{2}\frac{T}{H}\bar\lambda_H
        +
        \frac{D^4T\lambda_{H-1}}{8}
        \\
        &+
        D^2\sqrt{\frac{T}{2}\log\frac{2}{\delta_0}}
        +
        8DH\sqrt{TA_T^{\mathrm{OT}}}.
        \end{aligned}
        \label{eq:appendix-restart-current-risk}
    \end{equation}
\end{lemma}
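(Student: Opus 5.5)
The plan mirrors the proof of Lemma~\ref{lem:appendix-no-restart-current-risk}, now on the restart partition. We invoke Lemma~\ref{lem:appendix-restarted-pac-bayes} with the deterministic equal-length segments $I_1,\dots,I_m$, $m=T/H$, $H_r=H$. On each segment we take the segment-start oracle $\bc_r^\dagger:=\bc_{\tau_{r-1}}^*$ as the comparator. We take $\nu_r:=\nu_{\bc_r^\dagger,r}$, the localized truncated-prior comparator around it, whose KL cost is at most $C$ by the prior-mass calculation, and which satisfies $\mathbb E_{\nu_r}R_t\le R_t(\bc_r^\dagger)+L_c\varepsilon$ for all $t\in I_r$ by the localized comparator hypothesis. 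We choose the free PAC-Bayes temperature as $\eta=\lambda_{H-1}$.

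With this choice the two complexity terms combine. Specifically,
\[
\frac{mC}{\lambda_{H-1}}+\frac{mC+\log(2/\delta_0)}{\lambda_{H-1}}
=\frac{2(T/H)C+\log(2/\delta_0)}{\lambda_{H-1}}.
\]
The term $\eta D^4T/8$ becomes $D^4T\lambda_{H-1}/8$. The second-order term $\frac{D^4}{2}m\bar\lambda_H$ becomes $\frac{D^4}{2}\frac{T}{H}\bar\lambda_H$. The Azuma term $D^2\sqrt{(T/2)\log(2/\delta_0)}$ and $L_cT\varepsilon$ carry over unchanged. All of this holds on the single event of probability at least $1-\delta_0$ supplied by Lemma~\ref{lem:appendix-restarted-pac-bayes}. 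No further union bound over segments is needed, because that lemma already handles all segments jointly via the product prior.

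It remains to control the residual moving-oracle mismatch
\[
\sum_{r=1}^m\sum_{t\in I_r}\big[R_t(\bc_{\tau_{r-1}}^*)-R_t(\bc_t^*)\big].
\]
This is deterministic. Lemma~\ref{lem:appendix-segment-mismatch} bounds each inner sum by $8DH_r^{3/2}\sqrt{A_r^{\mathrm{OT}}}$. Lemma~\ref{lem:appendix-segment-aggregation} then sums these to at most $8DH\sqrt{TA_T^{\mathrm{OT}}}$. This uses $\sum_r A_r^{\mathrm{OT}}\le A_T^{\mathrm{OT}}$, which holds because the segment actions omit the cross-boundary increments and all terms are nonnegative. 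Substituting gives exactly \eqref{eq:appendix-restart-current-risk}.

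The only delicate point is bookkeeping rather than analysis. First, $\eta$ must be a deterministic constant for the Donsker--Varadhan step, and $\lambda_{H-1}$ is indeed deterministic. Second, the within-segment schedule indexing $a_r(t)$ must match the lemma's $\lambda_s$ with $s=t-\tau_{r-1}$. Third, the comparator $\bc_{\tau_{r-1}}^*$ depends only on the true law and not on data, so it is a legitimate fixed comparator. If $H\nmid T$, we replace $T/H$ by $\lceil T/H\rceil$ throughout, as allowed by Lemma~\ref{lem:appendix-segment-aggregation}.
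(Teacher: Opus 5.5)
Your proposal is correct and matches the paper's proof. Both apply Lemma~\ref{lem:appendix-restarted-pac-bayes} with segment-start oracles $\bc_{\tau_{r-1}}^*$, localized comparators $\nu_r$, and $\eta=\lambda_{H-1}$, merge the terms using $m=T/H$, and bound the residual mismatch with Lemmas~\ref{lem:appendix-segment-mismatch} and~\ref{lem:appendix-segment-aggregation}. Your extra bookkeeping remarks are accurate and only make explicit what the paper leaves implicit: the single joint event, $\sum_r A_r^{\mathrm{OT}}\le A_T^{\mathrm{OT}}$, and that $\eta$ is deterministic.
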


\begin{proof}
    Apply Lemma~\ref{lem:appendix-restarted-pac-bayes} with segment comparator
    \(\bc_r^\dagger=\bc_{\tau_{r-1}}^*\), with \(\nu_r\) localized around
    \(\bc_{\tau_{r-1}}^*\), and with \(\eta=\lambda_{H-1}\). For equal-length
    windows with \(H\mid T\), the number of segments is \(m=T/H\), so the two
    KL terms in \eqref{eq:appendix-restarted-pac-bayes} combine as
    \[
        \frac{mC}{\lambda_{H-1}}
        +
        \frac{mC+\log(2/\delta_0)}{\lambda_{H-1}}
        =
        \frac{2(T/H)C+\log(2/\delta_0)}{\lambda_{H-1}}.
    \]
    The bounded-loss terms become
    \[
        \frac{D^4}{2}\frac{T}{H}\bar\lambda_H
        +
        \frac{D^4T\lambda_{H-1}}{8},
    \]
    and the learner concentration term is
    \(D^2\sqrt{(T/2)\log(2/\delta_0)}\). The remaining segmentwise
    moving-oracle mismatch is controlled by
    Lemma~\ref{lem:appendix-segment-aggregation}:
    \[
        8D\sum_{r=1}^{T/H}H_r^{3/2}\sqrt{A_r^{\mathrm{OT}}}
        \le
        8DH\sqrt{TA_T^{\mathrm{OT}}}.
    \]
    Substituting these quantities gives
    \eqref{eq:appendix-restart-current-risk}. If \(H\nmid T\), the same display
    holds with \(T/H\) replaced by the number of segments
    \(m=\lceil T/H\rceil\).
\end{proof}

\begin{proposition}[Restarted clean dynamic bound]
    \label{prop:appendix-restart-clean}
    For the restarted quasi-Bayesian predictor with equal restart window $H$,
    with probability at least \(1-\delta_0\),
    \begin{equation}
        \sum_{t=2}^T W_1(\hat p_t,p_t^*)
        \le
        \mathcal G_{T,\delta_0}^{\mathrm{rs}}(\{\lambda_s\},H),
        \label{eq:appendix-restart-clean}
    \end{equation}
    where $\mathcal G_{T,\delta_0}^{\mathrm{rs}}$ is defined in
    \eqref{eq:restart-clean-term}.
\end{proposition}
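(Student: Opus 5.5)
The argument mirrors Proposition~\ref{prop:appendix-no-restart-clean}, with Lemma~\ref{lem:appendix-restart-current-risk} supplying the latent excess-risk input. We work on the event of probability at least \(1-\delta_0\) on which \eqref{eq:appendix-restart-current-risk} holds for the restarted posterior sequence \(\hat\rho^{\mathrm{rs}}=\{\hat\rho_t^{r}\}\). Adding \(8D\sqrt{TA_T^{\mathrm{OT}}}\) to both sides of \eqref{eq:appendix-restart-current-risk} gives
\[
E_T^{\mathrm{cur}}(\hat\rho^{\mathrm{rs}})+8D\sqrt{TA_T^{\mathrm{OT}}}\le \mathcal B_{T,\delta_0}^{\mathrm{rs}}(\{\lambda_s\},H).
\]
This is the restarted half of Lemma~\ref{lem:main-latent-excess-brackets}. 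Every later step is deterministic, so the probability level is not affected.

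Next we apply Corollary~\ref{cor:appendix-learning-to-yesterday} to the restarted predictive laws \(\hat p_t=\int p(\cdot\mid\bc)\,\hat\rho_t^{r}(d\bc)\), for \(t\in I_r\). That corollary uses only the fact that \(\hat p_t\) is a posterior mixture of the predictive map. Its ingredients are realizability, convexity of \(W_1\), the Lipschitz property \eqref{eq:predictive-regularity}, quadratic growth with Jensen, and Proposition~\ref{prop:appendix-lagged-risk}. None of these refer to how the posterior was built, so the corollary holds for the restarted sequence without change. It gives
\[
\sum_{t=2}^T W_1(\hat p_t,p_{t-1}^*)\le \frac{L_K}{\sqrt\mu}\sqrt{(T-1)\Big(E_T^{\mathrm{cur}}+8D\sqrt{(T-1)A_T^{\mathrm{OT}}}\Big)}.
\]
Since \(T-1\le T\) in both places, the right-hand side is at most \(\frac{L_K}{\sqrt\mu}\sqrt{T\,\mathcal B^{\mathrm{rs}}_{T,\delta_0}}\) by the bracket above.

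Finally, the triangle inequality \(W_1(\hat p_t,p_t^*)\le W_1(\hat p_t,p_{t-1}^*)+W_1(p_{t-1}^*,p_t^*)\) adds a drift sum. Lemma~\ref{lem:appendix-pathwise-drift} bounds that sum by \(\sqrt{(T-1)A_T^{\mathrm{OT}}}\le\sqrt{TA_T^{\mathrm{OT}}}\). Adding the two pieces gives exactly \(\mathcal G^{\mathrm{rs}}_{T,\delta_0}\) as defined in \eqref{eq:restart-clean-term}.

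The step that needs the most care is the first one: checking that the segment-start comparator and the restart boundaries do not invalidate the lagged-risk step. Corollary~\ref{cor:appendix-learning-to-yesterday} compares \(\hat\rho_t\) with \(\bc_{t-1}^*\) at every \(t\), including the first round of a segment, where \(\hat\rho_t^{r}=\pi\). This is harmless because the corollary bounds \(\bar E_t\) through \(E_t^{\mathrm{cur}}\) for whatever posterior is used. That per-round excess is already counted in \(E_T^{\mathrm{cur}}\), which Lemma~\ref{lem:appendix-restart-current-risk} controls. If \(H\nmid T\), the same argument goes through with \(m=\lceil T/H\rceil\) in place of \(T/H\).
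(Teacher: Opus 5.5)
Your proposal is correct and follows essentially the same route as the paper: you work on the high-probability event of Lemma~\ref{lem:appendix-restart-current-risk}, apply the pathwise Corollary~\ref{cor:appendix-learning-to-yesterday} to the restarted posteriors, and finish with the triangle inequality and Lemma~\ref{lem:appendix-pathwise-drift}. The only difference is that you make the replacements $T-1\le T$ explicit (in the outer factor and in both drift terms), whereas the paper makes them silently.
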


\begin{proof}
    By Lemma~\ref{lem:appendix-restart-current-risk}, the bound
    \eqref{eq:appendix-restart-current-risk} holds with probability at least
    $1-\delta_0$. Corollary~\ref{cor:appendix-learning-to-yesterday}
    is pathwise, so on the same high-probability event,
    \[
        \sum_{t=2}^T W_1(\hat p_t,p_{t-1}^*)
        \le
        \frac{L_K}{\sqrt{\mu}}
        \sqrt{
        T\left(
        E_T^{\mathrm{cur}}
        +
        8D\sqrt{TA_T^{\mathrm{OT}}}
        \right)
        }.
    \]
    Using the triangle inequality with
    $\sum_{t=2}^T W_1(p_{t-1}^*,p_t^*)\le \sqrt{TA_T^{\mathrm{OT}}}$ gives
    exactly \eqref{eq:restart-clean-term}.
\end{proof}

\subsection{Proofs of the Total-Regret Theorems}
\label{app:total-theorem-proofs}

\begin{proof}[Proof of Theorem~\ref{thm:raw-total}]
    Start from the decomposition
    \eqref{eq:appendix-total-split}. The first term is controlled by the raw
    corruption comparison in Proposition~\ref{prop:appendix-raw-corruption},
    which is precisely $\mathcal E_T^{\mathrm{nr}}$ from
    \eqref{eq:no-restart-corruption-term}. With probability at least
    \(1-\delta_0\), the second term is controlled by the no-restart clean
    dynamic bound in
    Proposition~\ref{prop:appendix-no-restart-clean}. Combining the pathwise
    corruption bound with the high-probability clean bound proves
    \eqref{eq:raw-total-bound}.
\end{proof}

\begin{proof}[Proof of Theorem~\ref{thm:raw-restart-total}]
    The same decomposition \eqref{eq:appendix-total-split} applies, with the
    conditional laws interpreted as the restarted predictors on the current
    segment. The corruption term is controlled by
    Proposition~\ref{prop:appendix-restart-corruption}, which is precisely
    $\mathcal E_T^{\mathrm{rs}}$ from \eqref{eq:restart-corruption-term}.
    With probability at least \(1-\delta_0\), the clean term is controlled by
    Proposition~\ref{prop:appendix-restart-clean}. Combining the pathwise
    restarted corruption bound with the high-probability clean bound gives
    \eqref{eq:raw-restart-total-bound}.
\end{proof}

\subsection{Proof of the Sublinear-Regime Proposition}
\label{app:sublinear-proof}

\begin{proof}[Proof of Proposition~\ref{prop:sublinear-regime-comparison}]
    We track only polynomial orders and treat fixed constants as irrelevant.
    The localized-comparator radius is assumed to satisfy
    $\varepsilon_T=o(1)$; otherwise the term $L_cT\varepsilon_T$ contributes
    $T\sqrt{\varepsilon_T}$ after the outer square root and is linear when
    $\varepsilon_T$ is fixed.

    First consider the no-restart clean term. Even if the localization term is
    negligible, \eqref{eq:no-restart-clean-term} contains the whole-horizon
    drift contribution
    \[
        \sqrt{
        T\cdot T\sqrt{T A_T^{\mathrm{OT}}}
        }.
    \]
    Under $A_T^{\mathrm{OT}}\asymp T^a$, this scales as
    $T^{(5+a)/4}$, which is not sublinear for $a\ge 0$. Thus the no-restart
    theorem does not yield a sublinear total-regret bound under the present
    analysis. This is an upper-bound statement about the whole-horizon
    comparator proof, not a lower bound ruling out favorable no-restart
    behavior on particular streams.

    For the restarted configuration, first examine corruption. The relevant
    corruption count is segment-local. Assume uniformly over segments that
    $\Lambda_{r,a}\lesssim a^\gamma$ for within-segment age $a$, and use the
    segment-local schedule $\lambda_a\asymp a^{-\beta}$. Then the exponent in
    \eqref{eq:restart-corruption-term} scales as $a^{\gamma-\beta}$. If
    $\beta>\gamma$, then for large $a$,
    \[
        \exp\!\big(O(a^{\gamma-\beta})\big)-1
        =
        O(a^{\gamma-\beta}),
    \]
    and hence
    \[
        \mathcal E_T^{\mathrm{rs}}
        =
        O\!\left(\frac{T}{H}H^{1+\gamma-\beta}\right)
        =
        O\!\left(T H^{\gamma-\beta}\right)
        =
        o(T).
    \]

    It remains to check the restarted clean term. Put $H\asymp T^h$. The
    comparator-complexity contribution has order
    \[
        \sqrt{
        T\cdot \frac{T}{H}\frac{1}{\lambda_{H-1}}
        }
        \asymp
        T^{1-h(1-\beta)/2},
    \]
    which is sublinear whenever $\beta<1$ and $h>0$. The bounded-loss
    learning-rate contribution satisfies
    $\bar\lambda_H\asymp H^{1-\beta}$ for $\beta<1$, and therefore contributes
    \[
        \sqrt{
        T\cdot \frac{T}{H}\bar\lambda_H
        }
        \asymp
        T^{1-h\beta/2},
    \]
    which is sublinear whenever $\beta>0$ and $h>0$.

    The high-probability upgrade adds three confidence contributions to the
    restarted clean bracket. For fixed $\delta_0$, or
    $\log(1/\delta_0)=T^{o(1)}$, these have polynomial orders
    \[
        \frac{\log(1/\delta_0)}{\lambda_{H-1}}
        \asymp
        T^{\beta h+o(1)},
        \qquad
        T\lambda_{H-1}
        \asymp
        T^{1-\beta h},
        \qquad
        \sqrt{T\log(1/\delta_0)}
        =
        T^{1/2+o(1)} .
    \]
    After the outer factor $\sqrt{T(\cdot)}$, these contribute orders
    $T^{(1+\beta h)/2+o(1)}$, $T^{1-\beta h/2}$, and
    $T^{3/4+o(1)}$, respectively. They are all sublinear whenever
    $0<\beta h<1$.

    The restarted segmentwise drift contribution has order
    \[
        \sqrt{
        T\cdot H\sqrt{T A_T^{\mathrm{OT}}}
        }
        \asymp
        T^{(3+a+2h)/4}.
    \]
    This is sublinear exactly when $h<(1-a)/2$. The remaining drift terms have
    orders $T^{(3+a)/4}$ and $T^{(1+a)/2}$, both sublinear when $a<1$.

    The sufficient condition
    \[
        a<1,
        \qquad
        \gamma<\beta<h<\frac{1-a}{2}
    \]
    implies all the requirements above: $\beta>\gamma$ controls corruption,
    $\beta<h<1$ makes the clean PAC-Bayes and confidence terms sublinear, and
    $h<(1-a)/2$ controls the restarted drift term. This proves
    \eqref{eq:restart-sublinear-regime}.
\end{proof}

\section{Additional Experimental Figures}
\label{app:additional-experiments}

This section reports the supplementary experiment plots corresponding to the
main experiment in Section~\ref{sec:experiments}. All figures use the same two
paper-facing variants: the raw quasi-Bayesian predictor and the restarted
quasi-Bayesian predictor. The older exploratory robustified variants are not
included in these plots.

\subsection{Additional Learning-Rate Schedules}

Figure~\ref{fig:supp-one-over-t} repeats the main comparison for the schedule
$\lambda_t=0.1/t$, and Figure~\ref{fig:supp-one-over-t2} repeats it for
$\lambda_t=0.1/t^2$. These plots are included to show how the empirical
behavior changes with the learning-rate schedule while keeping the
raw-versus-restart comparison fixed.

\begin{figure}[H]
    \centering
    \begin{minipage}{0.48\linewidth}
        \centering
        \plotexperimentpdf{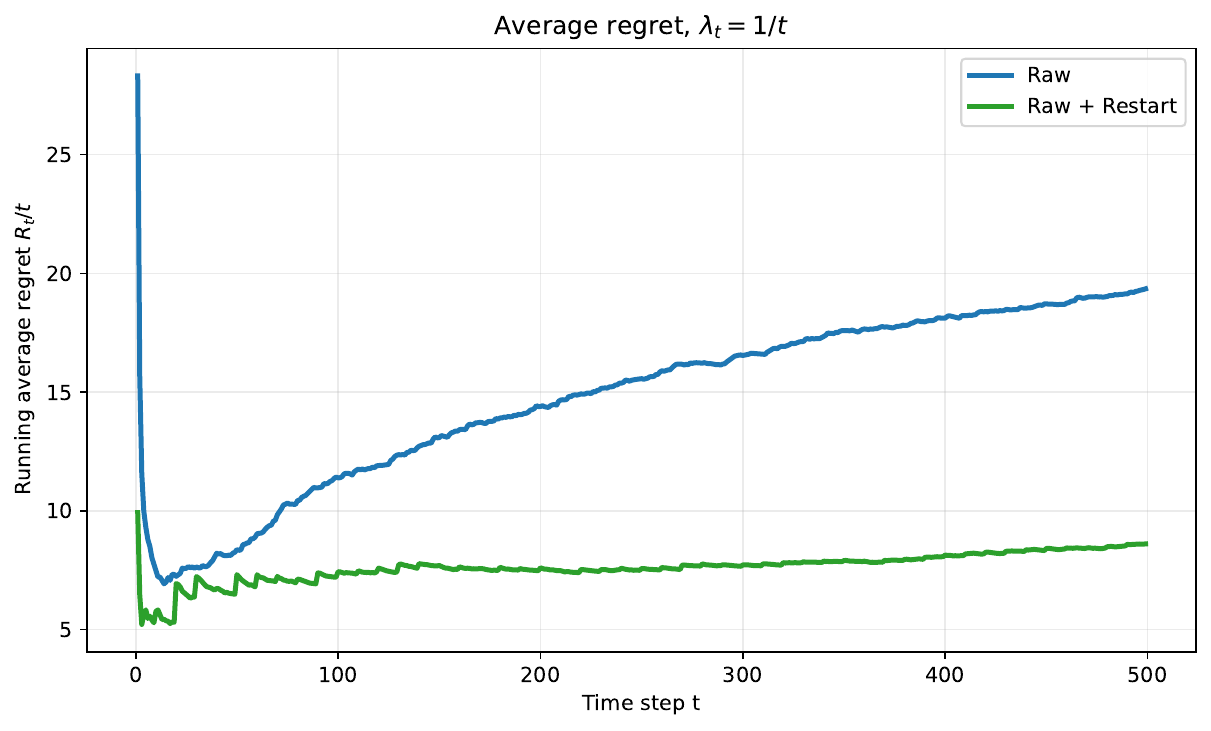}
    \end{minipage}
    \hfill
    \begin{minipage}{0.48\linewidth}
        \centering
        \plotexperimentpdf{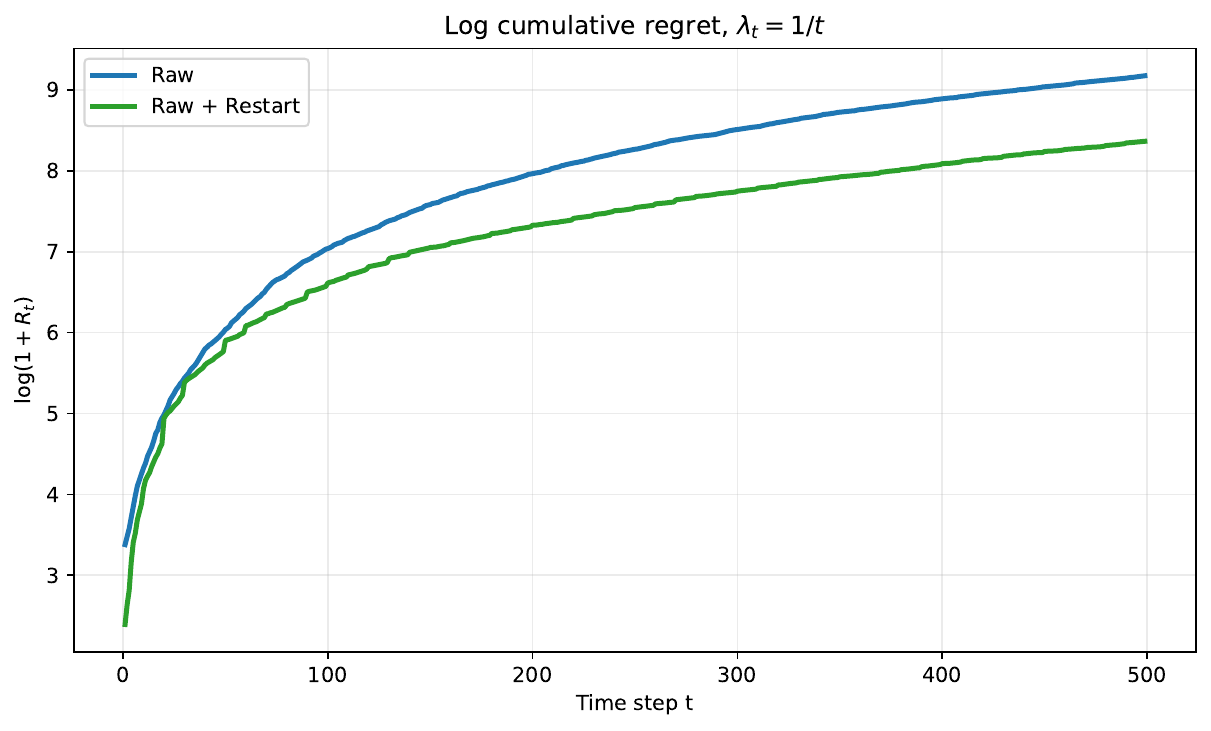}
    \end{minipage}
    \caption{Supplementary comparison for $\lambda_t=0.1/t$. Left: cumulative
    average regret $R_t/t$. Right: log cumulative regret.}
    \label{fig:supp-one-over-t}
\end{figure}

\begin{figure}[H]
    \centering
    \begin{minipage}{0.48\linewidth}
        \centering
        \plotexperimentpdf{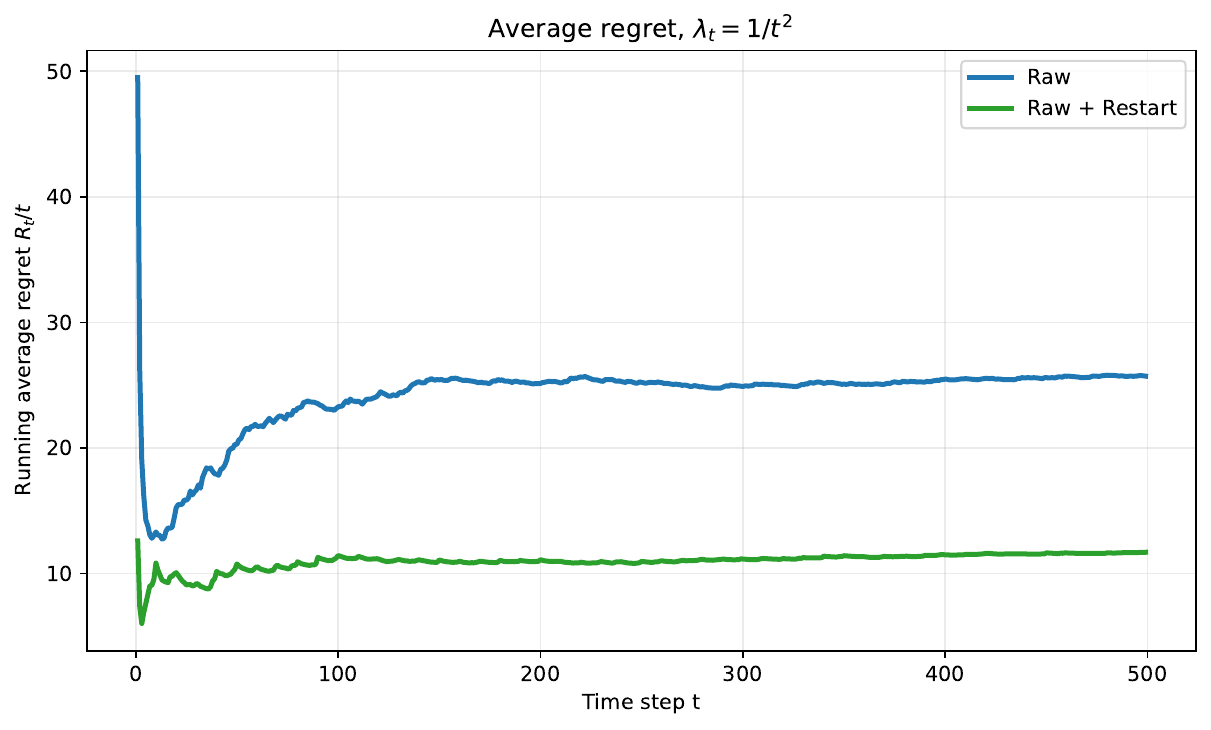}
    \end{minipage}
    \hfill
    \begin{minipage}{0.48\linewidth}
        \centering
        \plotexperimentpdf{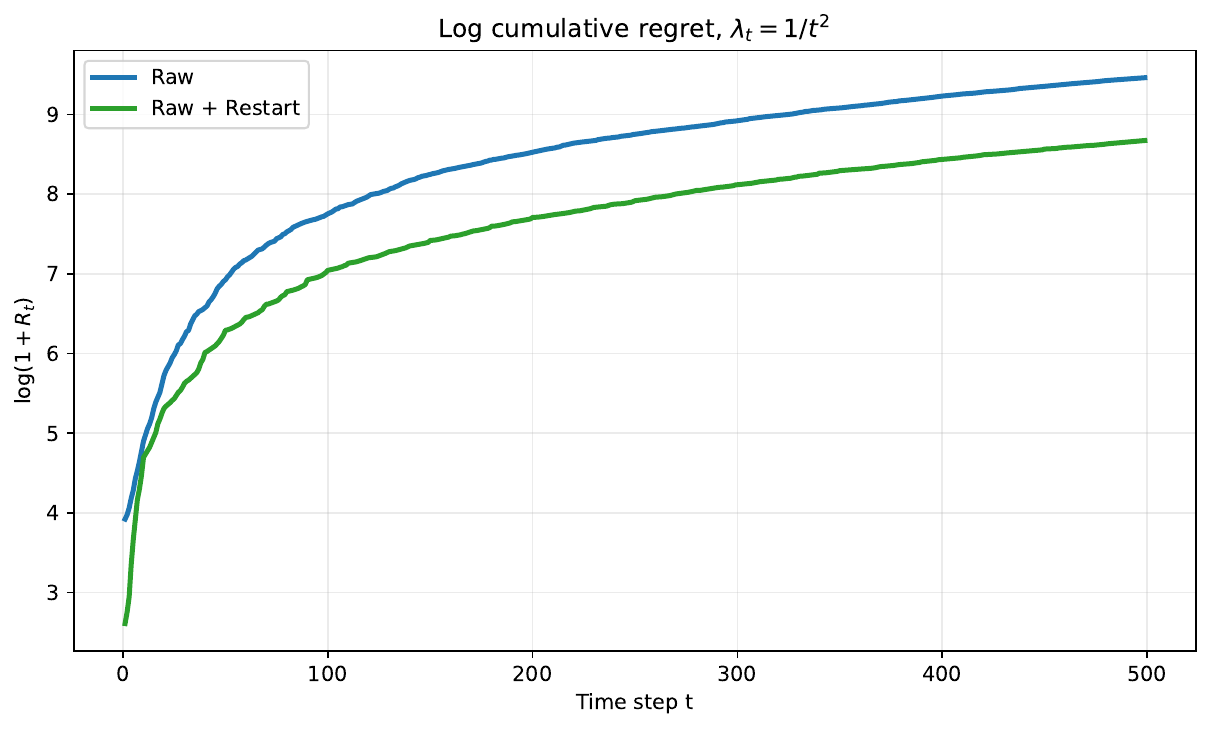}
    \end{minipage}
    \caption{Supplementary comparison for $\lambda_t=0.1/t^2$. Left:
    cumulative average regret $R_t/t$. Right: log cumulative regret.}
    \label{fig:supp-one-over-t2}
\end{figure}

\subsection{Sensitivity Analyses}

Figure~\ref{fig:supp-sensitivity} reports the supplementary sensitivity
experiments. We vary the restart interval, corruption magnitude, corruption
frequency, and drift scale, and report the final average regret $R_T/T$ for the
raw and restarted predictors. The sweeps use the same baseline as
Section~\ref{sec:experiments} and vary one factor at a time:
$H\in\{5,10,25,50,100\}$, corruption magnitude
$\epsilon\in\{0,0.25,0.5,1,2\}$, corruption probability
$\{0,0.05,0.15,0.5,1\}$, and drift step
$\{0,0.01,0.03,0.06,0.1\}$.

\begin{figure}[H]
    \centering
    \begin{minipage}{0.48\linewidth}
        \centering
        \plotexperimentpdf{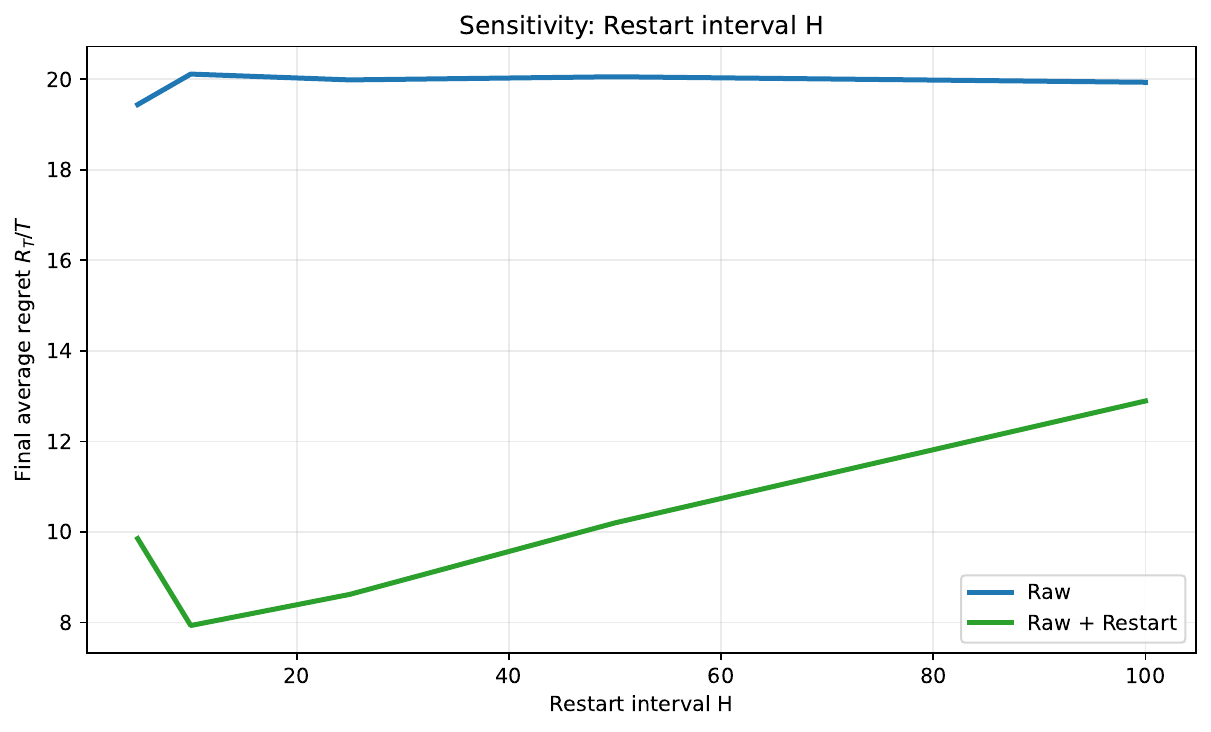}
    \end{minipage}
    \hfill
    \begin{minipage}{0.48\linewidth}
        \centering
        \plotexperimentpdf{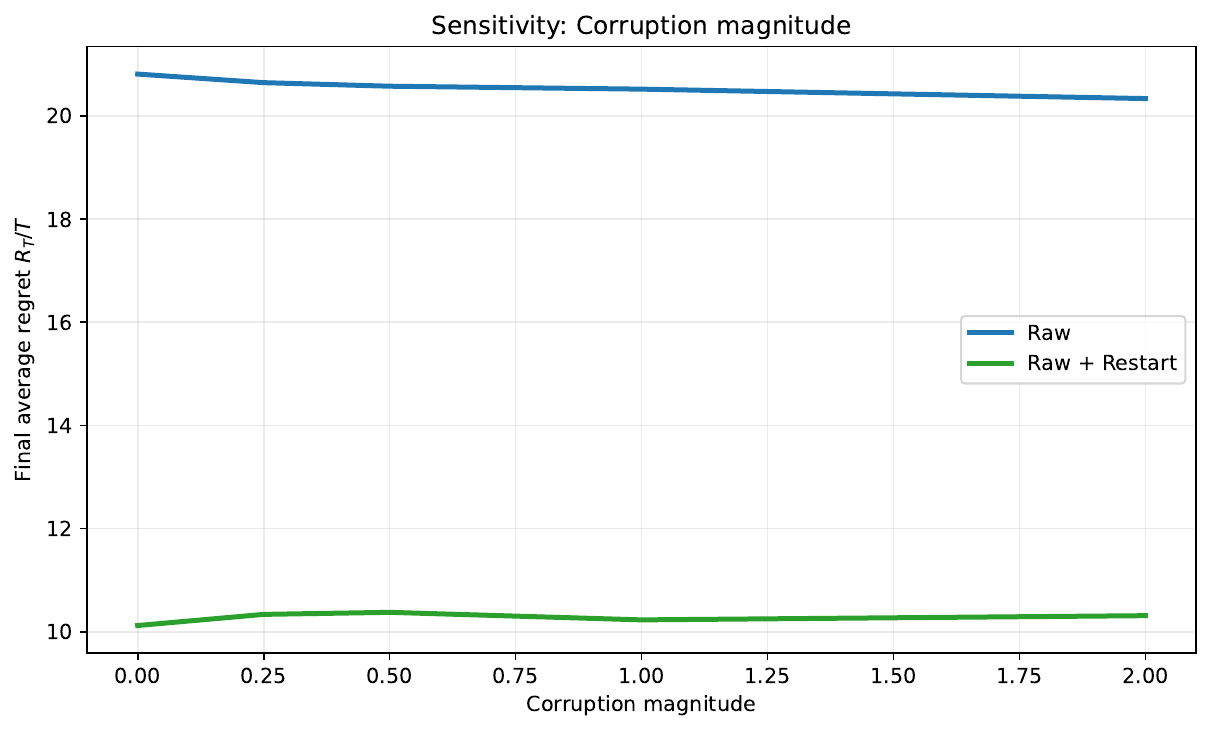}
    \end{minipage}

    \vspace{0.6em}

    \begin{minipage}{0.48\linewidth}
        \centering
        \plotexperimentpdf{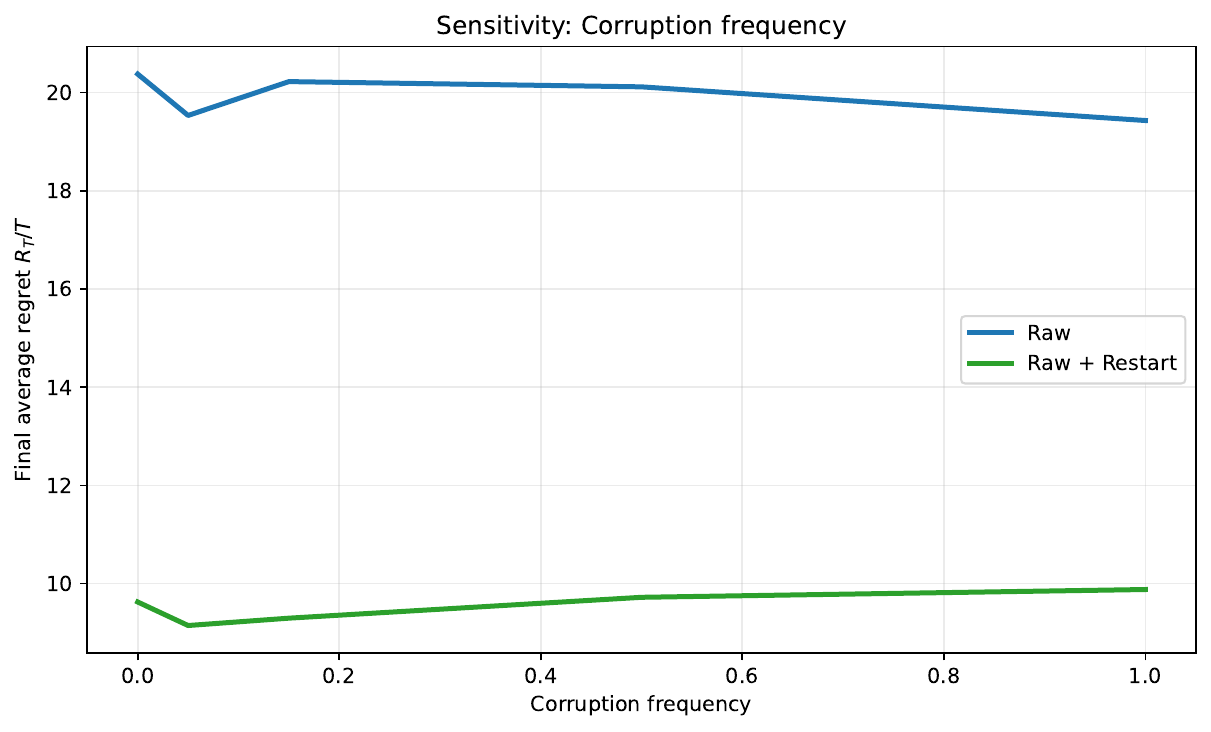}
    \end{minipage}
    \hfill
    \begin{minipage}{0.48\linewidth}
        \centering
        \plotexperimentpdf{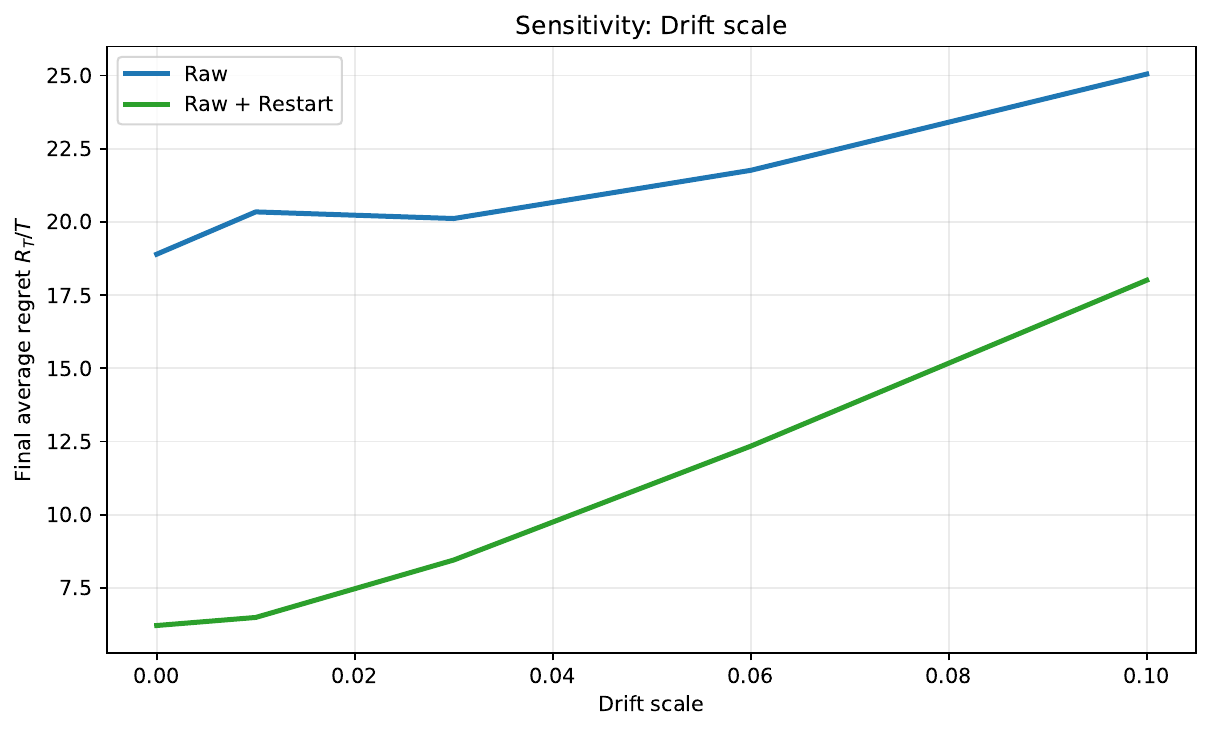}
    \end{minipage}
    \caption{Supplementary sensitivity analyses. Each panel keeps the same
    raw-versus-restart comparison and varies one experimental factor.}
    \label{fig:supp-sensitivity}
\end{figure}

\subsection{Abrupt-Shift Stale-Memory Experiment}
\label{app:abrupt-shift-stale-memory}

The abrupt-shift experiment is designed to test the mechanism that motivates
restart in the theory: old observations can become actively misleading after
distributional regime changes. We partition the horizon into blocks
$I_1,\ldots,I_{M_T+1}$ and use the piecewise-stationary law
\[
    p_t^*
    =
    \frac{1}{k}\sum_{j=1}^{k}
    \mathcal N(c_{j,b},\sigma^2 I),
    \qquad t\in I_b.
\]
At the boundary between blocks, the centers are translated by a jump of fixed
magnitude:
\[
    c_{j,b+1}=c_{j,b}+\Delta u_b,
    \qquad
    \|u_b\|_2=1.
\]
The number of jumps is chosen as
\[
    M_T\asymp T^a,
    \qquad
    |I_b|\asymp B_T\asymp T^{1-a}.
\]
Since all centers are translated together, the transport action scales as
\[
    A_T^{\mathrm{OT}}
    =
    \sum_{t=2}^{T}W_2^2(p_{t-1}^*,p_t^*)
    \asymp
    \sum_{b=1}^{M_T}\Delta^2
    \asymp
    T^a.
\]
Thus the stream has controlled sublinear transport action when $a<1$, while
still creating a stale-memory failure mode for the no-restart posterior. The
learning-rate and restart exponents are chosen so that
\[
    \lambda_t\asymp t^{-\beta},
    \qquad
    H\asymp T^h,
    \qquad
    \gamma<\beta<h<\frac{1-a}{2}.
\]

For the multi-horizon diagnostic, we plot the final cumulative regret
\[
    R_T=\sum_{t=1}^T W_1(\widehat p_t,p_t^*)
\]
and the final average cumulative regret
\[
    \frac{R_T}{T}.
\]
Sublinear empirical scaling corresponds to a fitted exponent $\alpha<1$ in
$R_T\approx T^\alpha$, equivalently to a decreasing trend in $R_T/T$.

Figure~\ref{fig:supp-abrupt-shift-scaling} reports the multi-horizon diagnostic
on the balanced horizon subset $T\in\{500,1000,4000\}$. This subset avoids a
finite-grid artifact in the discrete jump-count schedule, where one intermediate
horizon has the same number of jumps as a longer horizon and therefore a much
larger effective jump density. On this diagnostic, the restarted predictor has
fitted cumulative-regret slope $0.92$ and average-regret slope $-0.08$, while
the no-restart predictor has fitted slopes $1.05$ and $0.05$, respectively.
This is consistent with the theoretical role of restart as temporal localization
under stale posterior memory.

\begin{figure}[H]
    \centering
    \begin{minipage}{0.48\linewidth}
        \centering
        \plotabruptcsv{abrupt_shift_multihorizon_final_regret.csv}{T}
        {$T$}{$R_T$}{Cumulative scaling}{xmode=log,ymode=log}
    \end{minipage}
    \hfill
    \begin{minipage}{0.48\linewidth}
        \centering
        \plotabruptcsv{abrupt_shift_multihorizon_average_regret.csv}{T}
        {$T$}{$R_T/T$}{Average scaling}{xmode=log}
    \end{minipage}
    \caption{Abrupt-shift multi-horizon diagnostic. Left: final cumulative
    regret $R_T$ versus horizon $T$ on log-log axes. Right: final average
    cumulative regret $R_T/T$ versus horizon $T$.}
    \label{fig:supp-abrupt-shift-scaling}
\end{figure}

\subsection{Real-Data SPY Return Streams}
\label{app:real-data-experiments}

% \begingroup
% \setlength{\abovedisplayskip}{0.45em}
% \setlength{\belowdisplayskip}{0.45em}
% \setlength{\abovedisplayshortskip}{0.25em}
% \setlength{\belowdisplayshortskip}{0.35em}

We also evaluate the raw and restarted predictors in online prediction runs on
real daily SPY market data. The learner processes the stream sequentially and
uses only past observations for its posterior update and prediction. The
distributional target used for plotting is constructed only after the fact, as
an offline evaluation proxy for the unknown local data law.
For the one-dimensional stream, the input sample is the close-price log return
\[
    x_t=\log(\mathrm{Close}_t)-\log(\mathrm{Close}_{t-1})\in\RR.
\]
For a five-dimensional financial stream one may, for example, use a multi-asset
return vector
\[
    \begin{aligned}
    x_t &=
    (r_t^{\mathrm{SPY}},r_t^{\mathrm{QQQ}},r_t^{\mathrm{IWM}},
    r_t^{\mathrm{TLT}},r_t^{\mathrm{GLD}})
    \in\RR^5,\\
    r_t^a&=\log(P_t^a)-\log(P_{t-1}^a).
    \end{aligned}
\]
In the real-data experiment reported below, the available five-dimensional
stream is instead the SPY OHLCV return vector
\[
    x_t =
    (r_t^{\mathrm{Open}},r_t^{\mathrm{High}},r_t^{\mathrm{Low}},
    r_t^{\mathrm{Close}},r_t^{\mathrm{Volume}})
    \in\RR^5,
\]
with each component defined as the corresponding log difference and then
standardized.

The true time-varying law $p_t^*$ is not observed in real data. We therefore
compare the predictive distribution against a rolling empirical future-window
proxy. This proxy is not available to the online learner and is used only for
post-hoc evaluation. For window length $w=20$, this proxy is
\[
    \widetilde p_t=\frac{1}{20}\sum_{j=0}^{19}\delta_{x_{t+j}}.
\]
The plotted distributional quantity compares $\hat p_t$ to $\widetilde p_t$:
empirical $W_1$ in one dimension and sliced $W_1$ in the five-dimensional
stream. Both real-data runs use $\lambda_t=0.1\sqrt{\log(t)/t}$, restart
interval $H=25$, $100$ RJMCMC iterations per update, a warm-up of $40$
observations, $200$ evaluation samples, $200$ evaluation burn-in steps, and
$100$ random projections for the five-dimensional sliced-$W_1$ proxy. We also
report mean-prediction error by sampling
$\hat X_t^{(1)},\ldots,\hat X_t^{(m)}\sim\hat p_t$, forming
\[
    \hat\mu_t=\frac{1}{m}\sum_{i=1}^m\hat X_t^{(i)},\qquad
    e_t=\|\hat\mu_t-x_t\|_2,
\]
and plotting the cumulative, average, and log-cumulative versions of this
error.

%\endgroup

\begin{figure}[H]
    \centering
    \textbf{\small Close-return distributional proxy}

    \begin{minipage}{0.315\linewidth}
        \centering
        \plotrealexperimentcsv{real_spy_close_average_distance_proxy.csv}{t}
        {$t$}{$R_t/t$}{Average proxy}
    \end{minipage}
    \hfill
    \begin{minipage}{0.315\linewidth}
        \centering
        \plotrealexperimentcsv{real_spy_close_cumulative_distance_proxy.csv}{t}
        {$t$}{$R_t$}{Cumulative proxy}
    \end{minipage}
    \hfill
    \begin{minipage}{0.315\linewidth}
        \centering
        \plotrealexperimentcsv{real_spy_close_log1p_cumulative_distance_proxy.csv}{t}
        {$t$}{$\log(1+R_t)$}{Log cumulative proxy}
    \end{minipage}

    \vspace{0.25em}
    \textbf{\small Close-return mean-prediction error}

    \begin{minipage}{0.315\linewidth}
        \centering
        \plotrealexperimentcsv{real_spy_close_mean_average_error.csv}{t}
        {$t$}{Average error}{Average mean error}
    \end{minipage}
    \hfill
    \begin{minipage}{0.315\linewidth}
        \centering
        \plotrealexperimentcsv{real_spy_close_mean_cumulative_error.csv}{t}
        {$t$}{Cumulative error}{Cumulative mean error}
    \end{minipage}
    \hfill
    \begin{minipage}{0.315\linewidth}
        \centering
        \plotrealexperimentcsv{real_spy_close_mean_log1p_cumulative_error.csv}{t}
        {$t$}{$\log(1+\mathrm{error})$}{Log mean error}
    \end{minipage}

    \vspace{0.25em}
    \textbf{\small OHLCV-return distributional proxy}

    \begin{minipage}{0.315\linewidth}
        \centering
        \plotrealexperimentcsv{real_spy_ohlcv_average_distance_proxy.csv}{t}
        {$t$}{$R_t/t$}{Average proxy}
    \end{minipage}
    \hfill
    \begin{minipage}{0.315\linewidth}
        \centering
        \plotrealexperimentcsv{real_spy_ohlcv_cumulative_distance_proxy.csv}{t}
        {$t$}{$R_t$}{Cumulative proxy}
    \end{minipage}
    \hfill
    \begin{minipage}{0.315\linewidth}
        \centering
        \plotrealexperimentcsv{real_spy_ohlcv_log1p_cumulative_distance_proxy.csv}{t}
        {$t$}{$\log(1+R_t)$}{Log cumulative proxy}
    \end{minipage}

    \vspace{0.25em}
    \textbf{\small OHLCV-return mean-prediction error}

    \begin{minipage}{0.315\linewidth}
        \centering
        \plotrealexperimentcsv{real_spy_ohlcv_mean_average_error.csv}{t}
        {$t$}{Average error}{Average mean error}
    \end{minipage}
    \hfill
    \begin{minipage}{0.315\linewidth}
        \centering
        \plotrealexperimentcsv{real_spy_ohlcv_mean_cumulative_error.csv}{t}
        {$t$}{Cumulative error}{Cumulative mean error}
    \end{minipage}
    \hfill
    \begin{minipage}{0.315\linewidth}
        \centering
        \plotrealexperimentcsv{real_spy_ohlcv_mean_log1p_cumulative_error.csv}{t}
        {$t$}{$\log(1+\mathrm{error})$}{Log mean error}
    \end{minipage}

    \caption{Real-data SPY experiments. The close-return panels use the
    one-dimensional log-return stream. The OHLCV panels use the five-dimensional
    standardized SPY Open, High, Low, Close, and Volume log-return stream.
    Distributional panels compare the online predictive law with the offline
    rolling empirical future-window proxy; mean-error panels compare the
    predictive mean with the realized standardized return vector.}
    \label{fig:real-spy-all}
\end{figure}

\end{document}